\newcolumntype{Y}{>{\RaggedRight\arraybackslash}X} 
\DeclareMathOperator{\Tr}{Tr}
\newif\ifanon
\journalname{}
\begin{document}

\title{Novel diffusion-derived distance measures for graphs}
\subtitle{Properties and efficient computation
\ifanon
\thanks{Thanks to redacted funding sources.}
\else
\thanks{This work was supported by the National Science Foundation NRT Award number 1633631, and also in part by the United States Air Force under Contract No. FA8750-14-C-0011 under the DARPA PPAML program; also by NIH grant R01 HD073179, and by the Leverhulme Trust, and by the hospitality of the Sainsbury Laboratory Cambridge University.}
\fi}

\titlerunning{Graph Diffusion Distance}

\ifanon
\author{Anonymous Author 1 \and
Anonymous Author 2}
\authorrunning{Anon. Auth. 1, Anon. Auth. 2} 
\institute{Anonymous Author 1 \and
Anonymous Author 2 \at Anonymous Institute \\
Anonymous Author 1: aa1 @ aa.edu \qquad Eric Anonymous Author 2: aa2 @ aa.edu}
\else
\author{Cory B. Scott \and
Eric Mjolsness}
\authorrunning{C.B. Scott, E. Mjolsness} 
\institute{C.B. Scott \and E. Mjolsness \at University of California, Irvine, Dept. of Computer Science, Irvine, CA, USA \\
Cory Scott:  scottcb @ uci.edu \qquad Eric Mjolsness: emj @ uci.edu}
\fi

\date{Submitted September 9, 2019}

\maketitle

\begin{abstract}
We define a new family of similarity and distance measures on graphs, and explore their theoretical properties in comparison to conventional distance metrics. These measures are defined by the solution(s) to an optimization problem which attempts find a map minimizing the discrepancy between two graph Laplacian exponential matrices, under norm-preserving and sparsity constraints. Variants of the distance metric are introduced to consider such optimized maps under sparsity constraints as well as fixed time-scaling between the two Laplacians. The objective function of this optimization is multimodal and has discontinuous slope, and is hence difficult for univariate optimizers to solve. We demonstrate a novel procedure for efficiently calculating these optima for two of our distance measure variants. We present numerical experiments demonstrating that (a) upper bounds of our distance metrics can be used to distinguish between lineages of related graphs; (b) our procedure is faster at finding the required optima, by as much as a factor of $10^3$; and (c) the upper bounds satisfy the triangle inequality exactly under some assumptions and approximately under others. We also derive an upper bound for the distance between two graph products, in terms of the distance between the two pairs of factors. Additionally, we present several possible applications, including the construction of 
infinite ``graph limits'' by means of 
Cauchy sequences of graphs 
related to one another by 
our distance measure. 

\keywords{Graph Distance \and Linear Assignment Problem \and Spectral Graph Theory \and Inexact Pattern Matching}
\end{abstract}

\maketitle

\section{Introduction}
Structure comparison, as well as structure summarization, is a ubiquitous problem, appearing across multiple scientific disciplines.  In particular, many scientific problems (e.g. inference of molecular properties from structure, pattern matching in data point clouds and scientific images) may be reduced to the problem of inexact graph matching: given two graphs, compute a measure of similarity that gainfully captures structural correspondence between the two. Similarly, many algorithms for addressing multiple scales of dynamical behavior rely on methods for automatically coarsening some model architecture. 

In this work we present a graph distance metric, based on the Laplacian exponential kernel of a graph. This measure generalizes the work of Hammond et al. \cite{hammond2013graph} on graph diffusion distance for graphs of equal size; crucially, our distance measure allows for graphs of inequal size. We formulate the distance measure as the solution to an optimization problem dependent on a comparison of the two graph Laplacians. This problem is a nested optimization problem with the innermost layer consisting of multivariate optimization subject to matrix constraints (e.g. orthogonality). To compute this dissimilarity score efficiently, we also develop and demonstrate the lower computational cost of an algorithm which calculates upper bounds on the distance. This algorithm produces a prolongation/restriction operator, $P$, which produces an optimally coarsened version of the Laplacian matrix of a graph. Prolongation/restriction operators produced via the method in this paper have previously been applied to accelerate the training of machine learning algorithms in \cite{scott2018multilevel}.
\vspace{-5pt}
\subsection{Prior Work}
\label{subsec:prior_work}

Quantitative measures of similarity or dissimilarity between graphs
have been studied for decades owing to their relevance for
problems in pattern recognition including structure-based
recognition of extended and compound objects in computer vision,
prediction of chemical similarity based on
shared molecular structure, and many other domains.
Related problems arise in quantitative modeling, for example
in meshed discretizations of partial differential equations
and more recently in trainable statistical models of data 
that feature graph-like models of connectivity such
as Bayes Networks, Markov Random Fields, 
and artificial neural networks. A core problem is to
define and compute how ``similar'' two graphs are
in a way that is invariant to a permutation of the 
the vertices of either graph, so that the answer doesn't depend
on an arbitrary numbering of the vertices.
On the other hand unlike an arbitrary numbering,
problem-derived semantic {\it labels} on graph vertices
may express real aspects of a problem domain
and may be fair game for detecting graph similarity.
The most difficult case occurs when such labels are absent,
for example in an unstructured mesh, as we shall assume.

We mention just a few prior works to give an overview of the development of graph distance measures over time, paying special attention to those which share theoretical or algorithmic characteristics with the measure we introduce.
Our mathematical distinctions concern the
following properties:
(a) Does the distance measure require an inner optimization loop?
If so is it mainly a discrete or continuous optimization formulation?
(b) Does the distance measure calculation naturally yield
some kind of explicit {\it map} 
from real-valued functions on vertices of one graph 
to functions on vertices of the other? (A map from vertices to vertices would be a special case.)
If we use the term ``graph signal'' to mean a function $f: V(G_1) \rightarrow S$ which identifies each vertex of a graph $G_1$ with some state $s \in S$, then a map-explicit graph distance is one
whose calculation yields a second function $g: V(G_2) \rightarrow V(G_1)$, with the composite function $f \circ g: V(G_2) \rightarrow S$.
(c) Is the distance metric definable on the spectrum of
the graph alone, without regard to other data 
from the same graph? 
The ``spectrum'' of a graph is a graph invariant calculated as the eigenvalues of a matrix related to the adjacency matrix of the graph. Depending on context, the spectrum can refer to eigenvalues of the adjacency matrix, graph Laplacian, or normalized graph Laplacian of a graph. We will usually take the underlying matrix to be the graph Laplacian, defined in detail in Section \ref{subsec:defns}.
Alternatively, does it take into account more detailed
``structural'' aspects of the graph?
This categorization (structural vs. spectral) is similar to that introduced in \cite{donnat2018tracking}. 

\vspace{-30pt}
\subsubsection{Graph-Edit Distance (structural, explicit, disc-opt): } 
The graph edit distance measures the total cost of converting one graph into another with a sequence of local edit moves, with each type of move (vertex deletion/addition, edge deletion/addition, edge division/contraction) incurring a specified cost. Costs are chosen to suit the graph analysis problem at hand; determining a cost assignment which makes the edit distance most instructive for a certain set of graphs is an active area of research. The distance measure is then the sum of these costs over an optimal sequence of edits, which must be found using some optimization algorithm i.e. a shortest-path algorithm (the best choice of algorithm may vary, depending on how the costs are chosen). The sequence of edits may or may not (depending on the exact set of allowable edit moves) be adaptable into an explicit map between vertex-sets.
Classic pattern recognition literature includes:
\cite{eshera1984graph} \cite{eshera1986image} 
\cite{gao2010survey} \cite{sanfeliu1983distance} .

\vspace{-35pt}
\subsubsection{Quadratic Point- and Graph- Matching (structural, explicit, cont-opt)}
    Other work focuses on the construction of a point-to-point correspondence between the vertices of two graphs. Gold et. al. \cite{gold1995new} define the dissimilarity between two unlabelled weighted graphs with adjacency matrices $A^{(1)}$ and $A^{(2)}$ as the solution to the following optimization problem (for real-valued $M = [m_{ij}]$:

    \begin{equation}
        \label{eqn:gold_defn}
        \begin{array}{ll@{}ll}
        \text{minimize}  & \displaystyle \sum\limits_{j=1}^{n_2}\sum\limits_{k=1}^{n_1} {\left( 
        \sum\limits_{l=1}^{n_2} A^{(1)}_{jl} m_{lk} - \sum\limits_{p=1}^{n_1} m_{j p} A^{(2)}_{pk}
        
        \right)}^2 &= {\left|\left|  A^{(1)} M - M  A^{(2)} \right| \right|}_F^2\\
        \text{subject to}& \displaystyle\sum\limits_{i=1}^{n_2}  m_{ij} = 1,  &j=1 \ldots n_1\\
               & \displaystyle\sum\limits_{j=1}^{n_1}  m_{ij} = 1,  &i=1 \ldots n_2\\
         & m_{ij} \geq 0 & i = 1 \ldots n_2\\
         & &j = 1 \ldots n_1 \\
        \end{array}
    \end{equation}
    where ${\left|\left|  \cdot \right| \right|}_F^2$ is the squared Frobenius norm. This problem is similar in structure to the optimization considered in Section \ref{sec:alg}: a key difference being that Gold et al. consider optimization over real-valued matchings between graph vertices, whereas we consider 0-1 valued matchings between the eigenvalues of the graph Laplacians. In \cite{gold1995learning} and \cite{rangarajan1996novel} the authors present computational methods for computing this optimum, and demonstrate applications of this distance measure to various machine learning tasks such as 2D and 3D point matching, as well as graph clustering. Gold et al. also introduce the \emph{softassign}, a method for performing combinatorial optimization with both row and column constraints, similar to those we consider. 
   
\subsubsection{Cut-Distance of Graphs (structural, implicit, disc-opt):}

Lov\'asz \cite{lovasz2012large} defines the \emph{cut-distance} of a pair of graphs as follows: Let the $\Box$-norm of a matrix $B$ be given by:
    \begin{align}
        {\left| \left| B \right| \right|}_\Box = \frac{1}{n^2} \max_ {S, T \subseteq 1 \ldots n} \left| \sum_{i \in S, j \in T} B_{ij} \right|
    \end{align}

    Given two labelled graphs $G_1$, $G_2$, on the same set of vertices, and their adjacency matrices $A_1$ and $A_2$, the cut-distance $d_\text{cut}(G_1, G_2)$ is then given by
    \begin{align}
        \label{defn:cut_dist}
        D_\text{cut}(G_1, G_2) = {\left| \left| A_1 - A_2 \right| \right|}_\Box
    \end{align}
    (for more details, see \cite{lovasz2012large}). Computing this distance requires combinatorial optimization (over all vertex subsets of $G_1,G_2$) but this optimization does not result in an explicit map between $G_1$ and $G_2$. 

\subsubsection{Wasserstein Earth-Mover Distance (spectral, implicit, disc-opt):}
One common metric between graph spectra is the Wasserstein Earth Mover Distance. Most generally, this distance measures the cost of transforming one probability density function into another by moving mass under the curve. If we consider the eigenvalues of a (possibly weighted) graph as point masses, then the EMD measures the distance between the two spectra as the solution to a transport problem (transporting one set of points to the other, subject to constraints e.g. a limit on total distance travelled or a limit on the number of `agents' moving points). 
The EMD has been used in the past in various graph clustering and pattern recognition contexts; see \cite{gu2015spectral}. In the above categorization, this is an optimization-based spectral distance measure, but is implicit, since it does not produce a map from vertices of $G_1$ to those of $G_2$ (informally, this is because the EMD is not translating one set of eigenvalues into the other, but instead transforming their respective histograms). Recent work applying the EMD to graph classification includes \cite{dodonova2016kernel} and  \cite{mcgregor2013sketching}.

\subsubsection{Diffusion Distance due to Hammond et al. \cite{hammond2013graph} (spectral, implicit, non-opt):} We discuss this
recent distance metric more thoroughly below. 
This distance measures the difference between two graphs as the maximum discrepancy between probability distributions which represent single-particle diffusion beginning from each of the nodes of $G_1$ and $G_2$. This distance is computed by comparing the eigenvalues of the heat kernels of the two graphs. The optimization involved in calculating this distance is 
a simple unimodal optimization
over a single scalar, $t$, representing the passage of time for the diffusion process on the two graphs; hence we do not count this among the ``optimization based'' methods we consider. 

\subsubsection{Novel Diffusion-Derived Measures (spectral, explicit, cont-opt):}
    In this work, we introduce a family of related graph distance measures which compare two graphs in terms of similarity of a set of probability distributions describing single-particle diffusion on each graph. For two graphs $G_1$ and $G_2$ with respective Laplacians $L(G_1)$ and $L(G_2)$, the matrices $e^{t L(G_1)}$ and $e^{t L(G_2)}$ are called the \emph{Laplacian Exponential Kernels} of $G_1$ and $G_2$ ($t$ is a scalar representing the passage of time). The column vectors of these matrices describe the probability distribution of a single-particle diffusion process starting from each vertex, after $t$ time has passed. The norm of the difference of these two kernels thus describes how different these two graphs are, from the perspective of single-particle diffusion, at time $t$. Since these distributions are identical at very-early and very late times $t$ (we formalize this notion in Section \ref{subsec:diff_defns}), a natural way to define a graph distance is to take the supremum over all $t$. When the two graphs are the same size, so are the two kernels, which may therefore be directly compared. This case is the case considered by Hammond et al. \cite{hammond2013graph}. However, to compare two graphs of different sizes, we need a mapping between the column vectors of $e^{t L(G_1)}$ and $e^{t L(G_2)}$. Optimization over a suitably constrained prolongation/restriction operator between the graph Laplacians of the two graphs is a permutation-invariant way to compare the behavior of a diffusion process on each. The prolongation map $P$ thus calculated may then be used to map signals (by which we mean values associated with vertices or edges of a graph) on $G_1$ to the space of signals on $G_2$ (and vice versa). In \cite{scott2018multilevel} we implicitly consider the weights of an artificial neural network model to be graph signals, and use these operators to train a hierarchy of linked neural network models. However, in that work we do not address efficient calculation of this distance or the associated operators, a major focus of this paper. In this work we also consider a time conversion factor between diffusion on graphs of unequal size, and consider the effect of limiting this optimization to sparse maps between the two graphs (again, our case reduces to Hammond when the graphs in question are the same size, dense $P$ and $R$ matrices are allowed, and our time-scaling parameter is set to 1).
    
        In this work, we present an algorithm for computing the type of nested optimization given in our definition of distance (Equations \ref{defn:exp_defn} and \ref{defn:linear_defn}). The innermost loop of our distance measure optimization consists of a Linear Assignment Problem (LAP, defined below) where the entries of the cost matrix have a nonlinear dependence on some external variable. Our algorithm greatly reduces both the count and size of calls to the external LAP solver. We use this algorithm to compute an upper bound on our 
        distance measure, but it could also be useful in other similar nested optimization contexts: specifically, nested optimization where the inner loop consists of a linear assignment problem whose costs depend quadratically on the parameter in the outermost loop.

\vspace{-30pt}
\subsection{Background}

The ideal for a quantitative measure of similarity
or distance on some set $S$ is usually taken to be
a distance {\it metric} $d: S \times S \mapsto {\mathbb R}$
satisfying for all $x,y,z \in S$:
\begin{itemize}
    \item Non-negativity: $d(x,y) \geq 0$
    \item Identity: $d(x,y) = 0 \iff x=y$
    \item Symmetry: $d(x,y) = d(y,x)$
    \item Triangle inequality: $d(x,z) \leq d(x,y) + d(y,z)$
\end{itemize}
Then $(S,d)$ is a {\it metric space}.
Euclidean distance on ${\mathbb R}^d$ 
and geodesic distance on manifolds 
satisfy these axioms.
They can be used to define algorithms
that generalize from ${\mathbb R}^d$ to other spaces.
A variety of weakenings of these axioms are required
in many applications, by dropping some axioms and/or 
weakening others.
For example
if $S$ is a set of nonempty sets of a metric space $S_0$,
one can define the ``Hausdorff distance'' on $S$ which 
is an {\it extended pseudometric} that
obeys the triangle inequality but
not the Identity axiom and that can take values including
$+\infty$. As another example, any measure measure of distance on graphs which is purely spectral (in the taxonomy of Section \ref{subsec:prior_work}) cannot distinguish between graphs which have identical spectra. We discuss this in more detail in  Section \ref{subsec:dist_variants}.

Additional properties of distance metrics that generalize Euclidean distance 
may pertain to metric spaces related by Cartesian product, for example,
by summing the squares of the distance metrics
on the factor spaces.
We will consider an analog of this property in 
Section~\ref{graph_product_bound}.

\subsection{Definitions}
\label{subsec:defns}
\begin{itemize}
    \item Graph Laplacian: For an undirected graph $G$ with adjacency matrix $A$ and vertex degrees $d_1, d_2 \ldots d_n$, we define the Laplacian of the graph as \begin{align} L(G) = A - \textit{diag}(\{d_1, d_2 \ldots d_n\}) = A - \textit{diag}(\mathbf{1} \cdot A) = A(G) - D(G)\end{align} $L(G)$ is sometimes instead defined as $D(G)-A(G)$; we take this sign convention for $L(G)$ because it agrees with the standard continuum Laplacian operator, $\Delta$, of a multivariate function $f$: $\Delta f = \sum_{i=1}^n \frac{\delta^2 f}{\delta x_i^2}$.
    \item The squared Frobenius norm, ${\left| \left| A \right| \right|}^2_F$ of a matrix $A$ is given by the sum of squares of matrix entries. This can equivalently be written as $\Tr[A^T A]$.
    \item Linear Assignment Problem (LAP): we take the usual definition of the Linear Assignment Problem (see \cite{burkard1999linear}, \cite{burkard2009assignment}): we have two lists of items $S$ and $R$ (sometimes referred to as ``workers'' and ``jobs''), and a cost function $c: S \times R \rightarrow \mathbb{R}$ which maps pairs of elements from $S$ and $R$ to an associated cost value. This can be written as a linear program for real-valued $x_{ij}$ as follows:
    \begin{equation}
        \label{eqn:lap_defn}
        \begin{array}{ll@{}ll}
        \text{minimize}  & \displaystyle\sum\limits_{i=1}^{m}\sum\limits_{j=1}^{n} c(s_i,r_j)x_{ij} &\\
        \text{subject to}& \displaystyle\sum\limits_{i=1}^{m}  x_{ij} \leq 1,  &j=1 \ldots n\\
               & \displaystyle\sum\limits_{j=1}^{n}  x_{ij} \leq 1,  &i=1 \ldots m\\
         & x_{ij} \geq 0 & i = 1 \ldots m, j = 1 \ldots n \\
        \end{array}
    \end{equation}
    Generally, ``Linear Assignment Problem'' refers to the square version of the problem where $|S| = |R| = n$, and the objective is to allocate the $n$ jobs to $n$ workers such that each worker has exactly one job and vice versa. The case where there are more workers than jobs, or vice versa, is referred to as a Rectangular LAP or RLAP.
    In practice, the conceptually simplest method for solving an RLAP is to convert it to a LAP by \emph{augmenting} the cost matrix with several columns (rows) of zeros. In this case, solving the RLAP is equivalent to solving a LAP with size $max(n,m)$. Other computational shortcuts exist; see \cite{bijsterbosch2010solving} for details. Since the code we use to solve RLAPs takes the augmented cost matrix approach, we do not consider other methods in this paper.  
    \item Matching: we refer to a 0-1 matrix $M$ which is the solution of a particular LAP as a ``matching''. We may refer to the ``pairs'' or ``points'' of a matching, by which we mean the pairs of indices $(i,j)$ with $M_{ij} = 1$. We may also say in this case that $M$ ``assigns'' $i$ to $j$. 
    \item Graph Lineage: we use the same definition of graph lineage as in \cite{scott2018multilevel}, namely: 
    A graph lineage is a sequence of graphs, indexed by $ l \in \mathbb{N} = 0, 1, 2, 3 \ldots$, satisfying the following:
    \begin{itemize}
        \item $G_0$ is the graph with one vertex and one self-loop, and;
        \item Successive members of the lineage grow roughly exponentially - that is, there exists some base $b$ such that the growth rate as a function of level number $l$ is $O(b^{l^{1+\epsilon}})$, for all $\epsilon > 0$.
    \end{itemize}
    \item Box product $(\Box)$ of graphs: For two graphs $G$ and $H$ with vertex sets $V(G) = \{ g_1, g_2 \ldots g_n \}$ and $V(H) = \{h_1, h_2 \ldots h_m \}$, we say the product graph $G \Box H$ is the graph with vertex set $V(G \Box H) = V(G) \times V(H)$ and an adjacency relationship defined by: $(g_1, h_1) \sim (g_2, h_2)$ in $G \Box H$ if and only if $g_1 \sim g_2$ in $G$ and $h_1 = h_2$, or $g_1 = g_2$ and $h_1 \sim h_2$ in $H$. Note that the adjacency matrix of this relationship may be represented by the following identity:
    \begin{equation}
        \begin{split}
            A(G \Box H) = A(G) \otimes I_m + I_n \otimes A(H)
        \end{split}
    \end{equation}
    where $\otimes$ is the Kronecker Product of matrices (See \cite{hogben2006handbook}, Section 11.4).
\end{itemize}

\vspace{-23pt}

\section{Graph Diffusion Distance Definitions}

\subsection{Diffusion Distance Definition}
\label{subsec:diff_defns}
    We generalize the diffusion distance defined by Hammond et al. \cite{hammond2013graph}. This distortion measure between two graphs $G_1$ and $G_2$, of the same size, was defined as:
    \begin{align}
        D_{\text{Hammond}} (G_1, G_2) = \sup_{t} { \left| \left| e^{t L_1} -  e^{t L_2} \right| \right| }_F^2
    \end{align}
    where $L_i$ represents the graph Laplacian of $G_i$. A typical example of this distance, as $t$ is varied, can be seen in Figure \ref{fig:hammond_distance}.
    \begin{figure}
        \centering
        \includegraphics[width=.8\linewidth]{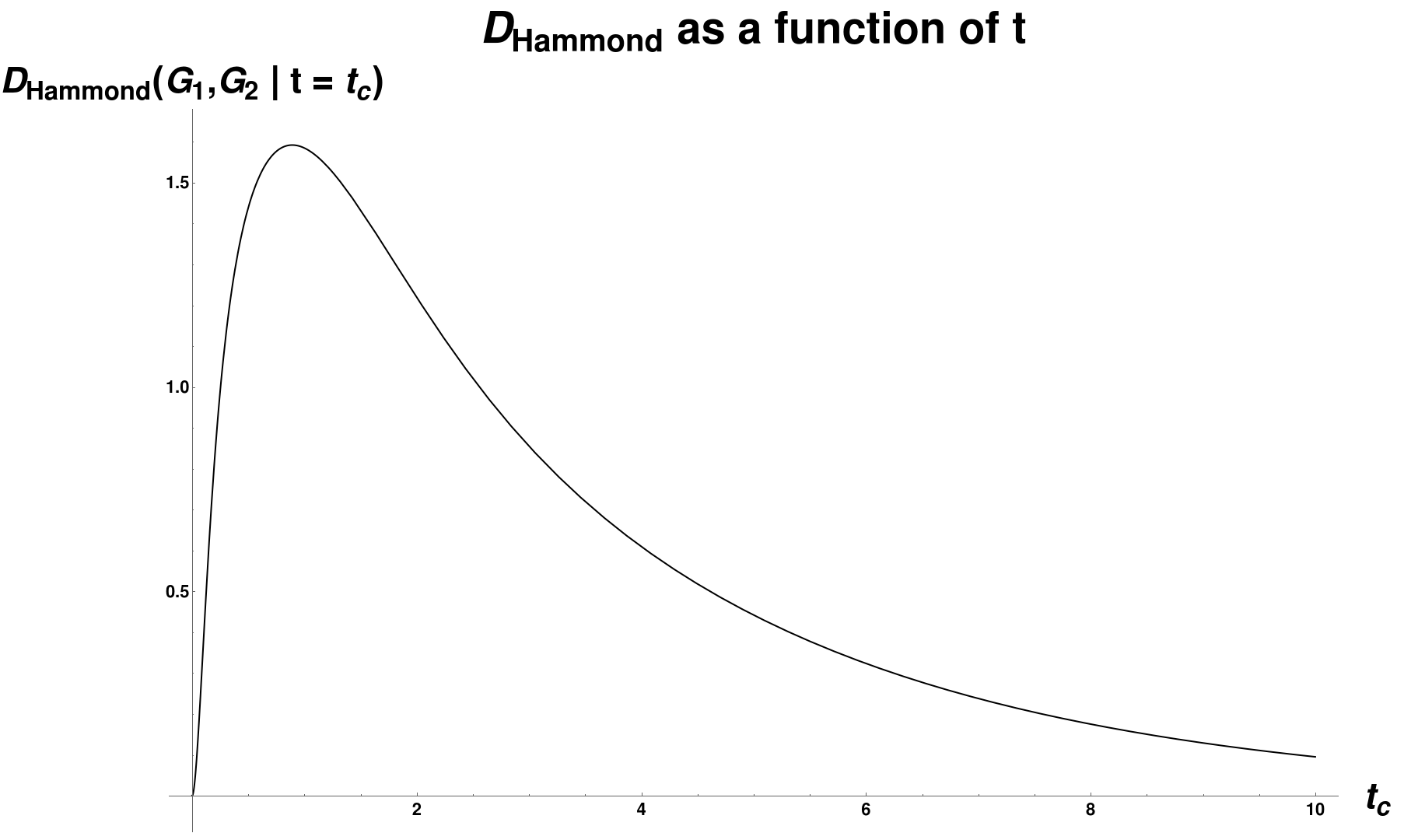}
        \caption{Plot of graph distance between two graphs of the same size (a $4 \times 4$ grid and a path graph on 16 vertices), following the definition of Hammond et al. \cite{hammond2013graph}, as the parameter $t$ is varied. Note 1) there is single global maximum of $1.592$ at $t= 0.891$ and 2) similarity to the plot of our distance measure as a function of $t$ given in Figure \ref{fig:distance_example_fig}(a).}
        \label{fig:hammond_distance}
    \end{figure}
    This may be interpreted as measuring the maximum divergence, as a function of $t$, between diffusion processes starting from each vertex of each graph, as measured by the squared Euclidean distance between the column vectors of $e^{t L_i}$. Each column $v_j$ of $e^{t L_i}$  (which is called the Laplacian Exponential Kernel) describes a probability distribution of visits (by a random walk of duration $t$, with node transition probabilities given by the columns of $e^L$) to the vertices of $G_i$, starting at vertex $j$. This distance metric is then measuring the difference between the two graphs by comparing these probability distributions; the motivation between taking the supremum over all $t$ is that the value of the objective function at the maximum is the most these two distributions can diverge. For further intuition about why the peak is the most natural place to take as the distance, rather than some other arbitrary time, note that at very early times and very late times, the probability distribution of vertex visits is agnostic to graph structure: at early times no diffusion has had a chance to take place, while at very late times the distribution of vertex-visits converges to the uniform distribution for each connected component of the graph. Hence we are most interested in a regime of $t$-values in between these extremes, where differences in $G_1$ and $G_2$ are apparent in their differing probability distributions. 
    
    Our contribution generalizes this measure to allow for graphs of differing size. We add two variables to this optimization: a \emph{prolongation} operator, $P$ (represented as a rectangular matrix), and a time-scaling factor, $\alpha$. The dissimilarity between two graphs $G_1$ and $G_2$ (with Laplacians $L_i = L(G_i)$) is then defined as:
    \begin{align}
        \label{defn:exp_defn}
            D^2(G_1,G_2) &= \sup_{t>0} \inf_{\alpha > 0} \inf_{P | \mathcal{C}(P)} {\left| \left| P e^{\frac{t}{\alpha} L_1} - e^{\alpha t L_2} P \right| \right|}^2_F 
    \end{align}
    where $\mathcal{C}(P)$ represents some set of constraints on the matrix $P$. It will be convenient for later calculations to introduce and assume the concept of  \emph{transitive constraints} - by which we mean that for any constraint $\mathcal{C}$, satisfaction of $\mathcal{C}$ by $P_1$ and $P_2$ implies satisfaction of $\mathcal{C}$ by their product $P_1 P_2$ (when such a product is defined). Some (non-exclusive) examples of transitive constraints include orthogonality, particular forms of sparsity, and their conjunctions.

    The simplest transitive constraint we will consider
    is that $P$ should be orthogonal.
   Intuitively, an orthogonal $P$ represents a norm-preserving map between nodes of $G_1$ and nodes of $G_2$, so we are measuring how well diffusion on $G_1$ approximates diffusion on $G_2$, as projected by $P$.   Note that since in general $P$ is a rectangular matrix it is not necessarily true that $P P^T = I$.  We assume that $\left| G_1\right| = n_1 \leq n_2 =  \left| G_2 \right|$; if not, the order of the operands is switched, so that $P$ is always at least as wide as it is tall. 
   We also briefly consider a sparsity constraint in
    section~\ref{sparsity_section} below.
    Since sparsity is more difficult to treat numerically,
    our default constraint will be orthogonality alone.
    Other constraints could include bandedness and other structural constraints
    (see Section \ref{sec:futurework}). 
    We also note that because $L$ is finite-dimensional, the exponential map is continuous and therefore
    we can swap the order of optimization over $t$ and $\alpha$. The optimization procedure outlined in this paper optimizes these variables in the order presented above (namely: an outermost loop of maximization over $t$, a middle loop of minimization over $\alpha$ given $t$, and an innermost loop of minimization over $P$ given $t$ and $\alpha$). 
    
    The other additional parameter, $\alpha$, controls dilation between the passage of time in the two graphs, to account for different scales. Again, the intuition is that we are interested in the difference between structural properties of the graph (from the point of view of single-particle diffusion) independent of the absolute number of nodes in the graph. As an example, diffusion on an $n \times n$ grid is a reasonably accurate approximation of more rapid diffusion on a $2n \times 2n$ grid, especially when $n$ is very large. In our discussion of variants of this dissimilarity score, we will use the notation $D^2(G_1, G_2 | x =  c)$ to mean restrictions of any of our distortion measure equations where variable $x$ is held to a constant value $c$; In cases where it is clear from context which variable is held to a fixed value $c$, we will write $D^2(G_1, G_2 | c)$
    
    At very early times the second and higher-order terms of the Taylor Series expansion of the matrix exponential function vanish, and so $e^{t L} \approx I + t L$.  This motivates the \emph{early-time} or ``linear'' version of this distance, $\tilde{D}$: 
    \begin{align}
    \label{defn:linear_defn}
        \tilde{D}^2(G_1,G_2) &= \inf_{\alpha > 0} \inf_{P | \mathcal{C}(P)} {\left| \left| \frac{1}{\alpha} P L_1 - \alpha L_2 P \right| \right|}^2_F \\
        &\approx \frac{1}{t^2} \left( \inf_{\alpha > 0} \inf_{P | \mathcal{C}(P)} {\left| \left| P e^{\frac{t}{\alpha} L_1} - e^{\alpha t L_2} P \right| \right|}^2_F \right)
    \end{align}
    
    (Note that the identity matrices cancel). The outermost optimization (maximization over $t$) is removed for this version of the distance, as the factor of $t$ can be pulled out: 
    
    \begin{align}
         {\left| \left| \frac{t}{\alpha} P L_1 - \alpha t L_2 P \right| \right|}^2_F 
         = 
         t^2 {\left| \left| \frac{1}{\alpha} P L_1 - \alpha L_2 P \right| \right|}^2_F 
    \end{align}
    
    For the exponential version of the dissimilarity score, we note briefly that the supremum over $t$ of our objective function must exist, since for any $G_1, G_2$:
    \begin{align}
        \label{eqn:D_limit_1}
        D^2(G_1, G_2) \leq D^2 \left(G_1, G_2 \left| \alpha = 1, P= \left[ \begin{array}{c}
                 I \\
                 0 
            \end{array}\right] \right. \right)
    \end{align}
    In other words, the infimum over all $P$ and $\alpha$ is bounded above by any particular choice of values for these variables. Since 
    \begin{align}
        \label{eqn:D_limit_2}
        D^2 \left(G_1, G_2 \left| t = 0, \alpha = 1, P =  \left[ \begin{array}{c}
                 I \\
                 0 
            \end{array}\right] \right. \right) = 0,
        \intertext{and}    
        \lim_{t_c \rightarrow \infty} D^2 \left(G_1, G_2 \left| t_c, \alpha = 1, P= \left[ \begin{array}{c}
                 I \\
                 0 
            \end{array}\right] \right. \right) = 0
    \end{align}
    this upper bound must have a supremum (possibly 0) at some $t^* \in [ 0, \infty)$. Then 
    \begin{align}
        D^2 \left(G_1, G_2 \left| t^*, \alpha = 1, P = \left[ \begin{array}{c}
                 I \\
                 0 
            \end{array}\right] \right. \right) 
    \end{align}
    must be finite and therefore so must the objective function.

    \subsection{Time-Scaled Graph Diffusion Distance}
    \label{time-scaled section}

    For any graphs $G_1$ and $G_2$, we define the Time-Scaled Graph Diffusion Distance (TSGDD) as a scaled version of the linear distance given above, with $\alpha$ fixed. Namely, let 
    \begin{align}
        \tilde{D}^2_r(G_1, G_2) &= {\left( n_1 n_2 \right)}^{-2r} \tilde{D}^2\left(G_1, G_2 \left| \alpha = {\left(\frac{n_1}{n_2}\right)}^r \right. \right) \label{eqn:TSGDD_defn}\\
        &= \inf_{P | \mathcal{C}(P)} {\left( n_1 n_2 \right)}^{-2r} {\left| \left| {\left(\frac{n_1}{n_2}\right)}^{-r} P L_1 - {\left(\frac{n_1}{n_2}\right)}^r L_2 P \right| \right|}^2_F \nonumber
    \end{align}
    The intuition for this version of the distance measure is that we are constraining the time dilation, $\alpha$, between $G_1$ and $G_2$ to be a power of the ratio of the two graph sizes. The factor ${\left(n_1 n_2 \right)}^{-2r}$ is needed to ensure this version of the distance satisfies the triangle inequality, as seen in Theorem \ref{thm:tsgdd_triangle}. 

\vspace{-5pt}
\subsection{Sparse-Diffusion Distance}
\label{sparsity_section}

We introduce the notation $\mathcal{C}(P)$ for a constraint
predicate that
must be satisfied by prolongation matrix $P$, 
which is transitive in the sense that:
\begin{align}
    \label{eqn:tran_constraint_defn}
    \mathcal{C}(P_{32}) \wedge \mathcal{C}(P_{21}) \implies \mathcal{C}(P_{32} P_{21}).
\end{align}
The simplest example is 
$\mathcal{C}(P) = \mathcal{C}_{\text{orthog}}(P) \equiv (P^T P = I)$.
However, sparsity can be introduced in transitive form by 
$\mathcal{C}(P) = \mathcal{C}_{\text{orthog}}(P) \wedge C_{\text{sparsity}}(P)$
where 
\[
\mathcal{C}_{\text{sparsity}}(P) \equiv
\Big(
\max_{i,j} \text{degree}_{i,j}(P) \leq (n_{P \text{coarse}}/n_{P \text{fine}})^s
\Big) 
\]
for some real number $s\geq 0$.
This predicate is transitive since 
\[
\max_{i,j} \text{degree}_{i,j}(P_{3 2} P_{2 1}) \leq 
   \max_{i,j} \text{degree}_{i,j}(P_{3 2} )
   \max_{i,j} \text{degree}_{i,j}(P_{2 1}) ,
\]
and since $n_2$ cancels out from the numerator and denominator
of the product of the fanout bounds. Here, $\text{degree}_{i,j}(M)$ is the total number of nonzero entries in row $i$ or column $j$ of $M$.

This transitive sparsity constraint depends on a power-law parameter $s \geq 0$. When $s=0$, there is no sparsity constraint.

Another form of sparsity constraints are those which specify a pattern on matrix entries which are allowed to be nonzero. Two simple examples (which are also transitive) are matrices which are constrained to be upper triangular, as well as matrices which are constrained to be of the form $A \otimes B$ where $A$ and $B$ are themselves both constrained to be sparse. More complicated are $n_1 \times n_2$ matrices which are constrained to be banded for some specified pattern of bands: more specifically, that there is a reordering of the rows and columns that the number of diagonal bands (of width 1, slope $\frac{n_1}{n_2}$) with nonzero entries is less than ${\left(\frac{n_1}{n_2}\right)}^q$ for some $0 \leq q < 1$. For example, linear interpolation matrices between d-dimensional grids, with non-overlapping source regions, follow this constraint.

As a final note on sparsity, we observe that any of the optimizations detailed in this work could also be performed including a sparsity term (for example, the $|\cdot|_1$-norm of the matrix $P$, calculated as $\sum_i \sum_j |p_{i j}|$ is one possibility, as are terms which penalize $t$ or $\alpha$ far from 1), rather than explicit sparsity constraints. A potential method of performing this optimization would be to start by optimizing the non-sparse version of the objective function (as detailed in Section \ref{subsec:alg_details}) and then slowly increasing the strength of the regularization term. 

\subsection{Directedness of Distance and Constraints}
    We note that this distance measure, as defined so far, is \emph{directed}: the operands $G_1$ and $G_2$ serve differing roles in the objective function. This additionally makes the constraint predicate $\mathcal{C}(P)$ ambiguous: when we state that $\mathcal{C}$ represents orthogonality, it is not clear whether we are referring to $P ^T P = I$ or $P P ^T = I$ (only one of which can be true for a non-square matrix $P$). To remove this ambiguity, we will, for the computations in the rest of this manuscript, define the distance metric to be symmetric: the distance between $G_1$ and $G_2$ with $ \left| G_1  \right| \leq \left| G_2 \right|$ is always $D(G_1, G_2)$. $P$ is then always at least as tall as it is wide, so of the two choices of orthogonality constraint we select  $P ^T P = I$.

        \subsection{Variants of Distance Measure}
        \label{subsec:dist_variants}
    Thus far we have avoided referring to this graph dissimilarity function as a ``distance metric''. As we shall see later, full optimization of Equations \ref{defn:exp_defn} and \ref{defn:linear_defn} over $\alpha$ and $P$ is too loose, in the sense that the distances $D(G_1, G_2)$, $D(G_2, G_3)$, and $D(G_1, G_3)$ do not necessarily satisfy the triangle inequality. The same is true for $\tilde{D}$. See Section \ref{subsub:tri_ineq_viol} for numerical experiments suggesting a particular parameter regime where the triangle inequality is satisfied. We thus define several restricted/augmented versions of both $D$ and $\tilde{D}$ which are guaranteed to satisfy the triangle inequality. These different versions are summarized in Table \ref{tab:dist_versions}. These variously satisfy some of the conditions necessary for generalized versions of distance metrics, including:
    \begin{itemize}
        \item Premetric: a function $d(x,y)$ for which $d(x,y) \geq 0$ and $d(x,y) = d(y,x)$ for all $x,y$.
        \item Pseudometric: As a premetric, but additionally $d(x,z) \leq d(x,y) + d(y,z)$ for all $x, y, z$.
        \item $\rho$-inframetric: As a premetric, but additionally $d(x,z) \leq \rho \left(d(x,y) + d(y,z)\right)$ and $d(x,y) = 0$ if and only if $x=y$, for all $x, y, z$.
    \end{itemize}
    Additionally, we note here that a distance measure on graphs using Laplacian spectra can at best be a pseudometric, since isospectral, non-isomorphic graphs are well-known to exist \cite{godsil1982constructing}\cite{van2003graphs}. Characterizing the conditions under which two graphs are isospectral but not isomorphic is an open problem in spectral graph theory. However, previous computational work has led to the conjecture that ``almost all'' graphs are uniquely defined by their spectra \cite{brouwer2011spectra}\cite{brouwer2009cospectral}\cite{van2009developments}, in the sense that the probability of two graphs of size $n$ being isospectral but not isomorphic goes to 0 as $n \rightarrow \infty$. Furthermore, our numerical experiments have indicated that the violation of the triangle inequality is bounded, in the sense that $D(G_1, G_3) \leq \rho * \left(D(G_1, G_2) + D(G_2, G_3) \right)$ for $\rho \approx 2.1$. This means that even in the least restricted case our similarity measure may be a $2.1$-infra-pseudometric on graphs (and, since such isospectral, non-isomorphic graphs are relatively rare, it behaves like a $2.1-$inframetric). As we will see in Section \ref{sec:theory_properties}, in some more restricted cases we can prove triangle inequalities, making our measure a pseudometric. In Section \ref{subsec:alg_details}, we discuss an algorithm for computing the optima in Equations (\ref{defn:exp_defn}) and (\ref{defn:linear_defn}). First, we discuss some theoretical properties of this dissimilarity measure.  
    \begin{table}
        \centering
        \begin{tabularx}{.9\textwidth}{| p{.11\textwidth} | p{.11\textwidth} | p{.1\textwidth} | p{.15\textwidth} | p{.25\textwidth} |}
            \hline
            $t$ & $\alpha$ & $s$ & Classification & Treatment in this manuscript \\
            \hhline{|=|=|=|=|=|}
            Fixed at $t_c < \epsilon$ & Fixed at $\alpha_c = 1$ & $s=0$ & Pseudometric & Defined in Equation \ref{defn:alpha_1_linear}. Optimized by one pass of LAP solver. Triangle inequality proven in Theorem \ref{thm:constant_alpha_thm}.\\
            \hline
            Fixed at $t_c < \epsilon$ & Fixed at $\alpha_c = (\frac{n_1}{n_2})^r$ & $s=0$ & Pseudometric & Defined in Equation \eqref{eqn:TSGDD_defn}. Optimized by one pass of LAP solver. Triangle inequality proven in Theorem \ref{thm:tsgdd_triangle}.\\
            \hline
            Fixed at $t_c < \epsilon$ & Optimized & $s=0$ & Premetric & Defined in Equation \ref{defn:linear_defn}. Optimized by Algorithm \ref{alg:linear_alg}. Triangle inequality violations examined experimentally in Section \ref{subsub:tri_ineq_viol}. \\
            \hline
            Optimized & Fixed at $\alpha_c = 1$ & $s=0$ & & When $|G_1| = |G_2|$, this is Hammond et. al's version of graph distance. \\
            \hline
            Optimized & Fixed at $\alpha_c = (\frac{n_1}{n_2})^r$ & $s=0$ & &\\
            \hline
            Optimized & Optimized & $s=0$ & Premetric & Defined in Equation \ref{defn:exp_defn}. Optimized by Algorithm \ref{alg:exponential_alg}. Graph Product upper bound proven in Theorem \ref{thm:sq_grid_lim_lem}. Triangle inequality violations examined experimentally in Section \ref{subsub:tri_ineq_viol}. Used to calculate graph distances in Sections \ref{subsub:lineage_dist} and \ref{subsub:graph_limits}.  \\
            \hline
            Fixed at $t_c < \epsilon$ & Fixed at $\alpha_c = 1$ & $s > 0$ & Pseudometric & Triangle inequality proven in Theorem \ref{thm:constant_alpha_thm}.\\
            \hline
            Fixed at $t_c < \epsilon$ & Fixed at $\alpha_c = (\frac{n_1}{n_2})^r$ & $s > 0$ & Pseudometric & Triangle inequality proven in Theorem \ref{thm:tsgdd_triangle}. \\
            \hline
            Fixed at $t_c < \epsilon$ & Optimized & $s > 0$ & & \\
            \hline
            Optimized & Fixed at $\alpha_c = 1$ & $s > 0$ & & \\
            \hline
            Optimized & Fixed at $\alpha_c = (\frac{n_1}{n_2})^r$ & $s > 0$ & &  \\
            \hline
            Optimized & Optimized & $s> 0$ & & Discussed in Section \ref{sparsity_section}. \\
            \hline
        \end{tabularx}
        \caption{Summary of this paper's investigation of different forms of our graph dissimilarity measure. In this work, we systematically explore properties of this measure given sparsity parameter $s=0$, and various regimes of $t$ (fixed at some early time, or maximized over all $t$) and $\alpha$ (fixed at $\alpha = 1$, fixed at a constant power $r$ of the ratio of graph sizes, or minimized over all $\alpha$. We leave exploration of nonzero values of the sparsity parameter to future work. }
        \label{tab:dist_versions}
    \end{table}

\vspace{-15pt}
\section{Theoretical Properties of D(G1, G2)}
In this section we prove several properties of various instances of our graph dissimilarity score, including triangle inequalities for some specific versions and an upper bound on the distance between two graph products. 
\label{sec:theory_properties}
\vspace{-35pt}
\subsection{Properties when $t$ is constant, $\alpha = 1$}
\subsubsection{Triangle Inequality}
\begin{lemma}
\label{lem:p_lem_1} For any matrices $M$ and $P$, with $P$ satisfying $P^T P = I$,\\ ${\left| \left| P M\right| \right|}^2_F \leq {\left| \left| M \right| \right|}^2_F$ and ${\left| \left| M P \right| \right|}^2_F \leq {\left| \left| M \right| \right|}^2_F$ .
\end{lemma}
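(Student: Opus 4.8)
The plan is to reduce both inequalities to the identity $\|A\|_F^2 = \Tr[A^T A]$ and then exploit the hypothesis $P^T P = I$. For the first bound I would simply expand
\[
\|PM\|_F^2 = \Tr[(PM)^T PM] = \Tr[M^T (P^T P) M] = \Tr[M^T M] = \|M\|_F^2 ,
\]
so that left-multiplication by a matrix with orthonormal columns in fact preserves the Frobenius norm exactly; the claimed inequality then holds trivially, as an equality.

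The second bound $\|MP\|_F^2 \leq \|M\|_F^2$ is the part that requires actual work, because here the relevant product is $PP^T$ rather than $P^T P$, and $PP^T \neq I$ in general when $P$ is strictly rectangular (tall). The key observation is that $PP^T$ is an orthogonal projector: it is symmetric, and $(PP^T)^2 = P(P^T P)P^T = PP^T$ by the hypothesis. Writing $Q = I - PP^T$, which is then also a symmetric idempotent, I would compute, using the cyclic invariance of the trace to move $P$ to the right,
\[
\|M\|_F^2 - \|MP\|_F^2 = \Tr[M^T M] - \Tr[P^T M^T M P] = \Tr[M^T M] - \Tr[M^T M\, PP^T] = \Tr[M^T M\, Q] .
\]
Finally, since $Q = Q^2 = Q^T Q$, this last quantity equals $\Tr[Q M^T M Q] = \|MQ\|_F^2 \geq 0$, which yields the desired inequality.

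The main obstacle is precisely the recognition that one cannot cancel $PP^T$ the way one cancels $P^T P$; the resolution is the projector argument above, in which the nonnegativity comes from re-expressing $\Tr[M^T M (I - PP^T)]$ as a genuine squared Frobenius norm. A shorter alternative would invoke submultiplicativity, $\|MP\|_F \leq \|M\|_F\,\|P\|_2$ together with $\|P\|_2 = 1$ (all singular values of $P$ equal $1$ since $P^T P = I$), but I would prefer the trace computation, as it is elementary and self-contained and also makes transparent that the first bound is always an equality while the second can be strict, namely exactly when the row space of $M$ is not contained in the column space of $P$.
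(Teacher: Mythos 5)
Your proof is correct and follows essentially the same route as the paper: the first inequality via $\Tr[M^T P^T P M] = \Tr[M^T M]$ (in fact an equality), and the second via the observation that $PP^T$ is an orthogonal projector, writing the deficit $\Tr[M^T M (I - PP^T)]$ as a squared Frobenius norm using idempotence — exactly the paper's decomposition with $\Pi = PP^T$. Your closing remark characterizing when the second bound is strict is a small bonus beyond what the paper states, but the core argument is the same.
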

\begin{proof}
Suppose without loss of generality that $P^T P = I$. Then:
\begin{enumerate}
    \item ${\left| \left| P M\right| \right|}^2_F = \Tr [ M^T P^T P M ] = \Tr[M^T M] = {\left| \left|M\right| \right|}^2_F$ 
    \item If $P^T P = I$, then letting $P P^T = \Pi$, $\Pi$ is a projection operator satisfying $\Pi^T = \Pi = \Pi^2$. Then, 
\end{enumerate}
\vspace{-5pt}
    \begin{align*}
    {\left| \left| M P \right| \right|}^2_F &= \Tr [ P^T M^T M P  ] \\
    &= \Tr[M^T M P P^T] = \Tr[M^T M \Pi] 
    \end{align*}
    
    \begin{equation}
    \begin{split}
    {\left| \left| M \right| \right|}^2_F = \Tr [M^T M ] &= \Tr[M^T M (\Pi + (I - \Pi))] \\
    &= \Tr[M^T M \Pi] + \Tr[M^T M (I - \Pi)] \\
    &= \Tr[M^T M \Pi]+ \Tr[M^T M {(I - \Pi)}^2] \\
    &=  {\left| \left| M P \right| \right|}_F^2 + {\left| \left| M (I - \Pi) \right| \right|}_F^2 \\
    &\geq {\left| \left| M P \right| \right|}_F^2
    \end{split}
    \end{equation}
\end{proof}
\begin{theorem}
\label{thm:constant_alpha_thm}
$\tilde{D}^2$ satisfies the triangle inequality for $\alpha = 1$. 
\end{theorem}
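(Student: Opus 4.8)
The plan is to prove the triangle inequality for the (unsquared) distance $\tilde{D}$ rather than for $\tilde{D}^2$ itself, since squared distances do not in general obey it and the ``pseudometric'' classification in Table~\ref{tab:dist_versions} refers to $\tilde{D}$; here $\tilde{D}(G_i,G_j) = \inf_{P\mid\mathcal{C}(P)} \|P L_i - L_j P\|_F$ with $\alpha=1$ and $\mathcal{C}(P) = \mathcal{C}_{\text{orthog}}(P)$. The central idea is to exploit transitivity of the constraint predicate: given near-optimal prolongations $P_{21}$ (from $G_1$ to $G_2$) and $P_{32}$ (from $G_2$ to $G_3$), their product $P_{32}P_{21}$ is an admissible prolongation from $G_1$ to $G_3$ by Equation~\eqref{eqn:tran_constraint_defn}, so it furnishes a feasible point for the infimum defining $\tilde{D}(G_1,G_3)$.

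Concretely, I would first fix the size ordering $n_1 \le n_2 \le n_3$ so that $P_{21}$ is $n_2 \times n_1$, $P_{32}$ is $n_3 \times n_2$, and $P_{32}P_{21}$ is $n_3 \times n_1$, with all three matrices satisfying $P^T P = I$. The key algebraic step is the splitting
\[
P_{32}P_{21}L_1 - L_3 P_{32}P_{21} = P_{32}\bigl(P_{21}L_1 - L_2 P_{21}\bigr) + \bigl(P_{32}L_2 - L_3 P_{32}\bigr)P_{21},
\]
obtained by adding and subtracting $P_{32}L_2 P_{21}$. Applying the triangle inequality for the Frobenius norm to this sum, and then invoking Lemma~\ref{lem:p_lem_1} to discard the orthogonal factors (left-multiplication by $P_{32}$ and right-multiplication by $P_{21}$ are each norm non-increasing, using parts 1 and 2 of the lemma respectively), yields
\[
\|P_{32}P_{21}L_1 - L_3 P_{32}P_{21}\|_F \le \|P_{21}L_1 - L_2 P_{21}\|_F + \|P_{32}L_2 - L_3 P_{32}\|_F.
\]

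Finally, I would close the argument by noting that the left-hand side is an upper bound for $\tilde{D}(G_1,G_3)$ (since $P_{32}P_{21}$ is feasible), while the two right-hand terms can be made arbitrarily close to $\tilde{D}(G_1,G_2)$ and $\tilde{D}(G_2,G_3)$ by choosing $P_{21}$ and $P_{32}$ within $\epsilon$ of their respective infima; letting $\epsilon \to 0$ gives $\tilde{D}(G_1,G_3) \le \tilde{D}(G_1,G_2) + \tilde{D}(G_2,G_3)$. The main obstacle is conceptual rather than computational: the entire argument hinges on transitivity of $\mathcal{C}$ guaranteeing that $P_{32}P_{21}$ is admissible, and on the orthogonality convention being exactly $P^T P = I$, which is what Lemma~\ref{lem:p_lem_1} requires to peel off both the outer left factor $P_{32}$ and the outer right factor $P_{21}$. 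Were the constraint non-transitive, or were one to impose $PP^T = I$ instead (which fails for the taller-than-wide rectangular $P$ fixed by the directedness convention), this cancellation would break down. I would also remark that the question of whether the infima are attained is sidestepped entirely by the $\epsilon$-approximation argument, although compactness of the Stiefel manifold of orthogonal $P$ in fact guarantees attainment.
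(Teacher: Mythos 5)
Your proof follows essentially the same route as the paper's: the identical splitting $P_{32}P_{21}L_1 - L_3P_{32}P_{21} = P_{32}(P_{21}L_1 - L_2P_{21}) + (P_{32}L_2 - L_3P_{32})P_{21}$, the transitivity of $\mathcal{C}$ to certify feasibility of the product, and Lemma~\ref{lem:p_lem_1} to peel off the outer orthogonal factors. Your two refinements --- the $\epsilon$-approximation in place of the paper's implicit assumption that the infima are attained via $\arg\inf$, and the explicit observation that the inequality being proved is for the unsquared $\tilde{D}$ (a shift the paper makes silently mid-proof when its displays drop the square) --- are welcome points of care but do not change the argument.
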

\begin{proof}
Let $G_1, G_2, G_3$ be simple graphs, with Laplacians $L_1, L_2, L_3$. Let 
\begin{align}
P_{31} = \arg\inf_{P | \mathcal{C}(P)} {\left| \left| P L_1 - L_3 P \right| \right|}_F^2 
\end{align}
Then
\begin{equation}
\begin{split}
    \tilde{D}^2(G_1, G_3 \left| \alpha= 1\right.) &= {\left| \left| P_{31} L_1 - L_3 P_{31} \right| \right|}_F^2 = \inf_{P | \mathcal{C}(P)} {\left| \left| P L_1 - L_3 P \right| \right|}_F^2 \label{defn:alpha_1_linear} \\
    &\leq \inf_{P_{32}, P_{21} | \mathcal{C}(P_{32} P_{21})} {\left| \left| P_{32} P_{21} L_1 - L_3 P_{32} P_{21} \right| \right|}_F^2 \\
    \end{split}
    \end{equation}
    where we write $\mathcal{C}(P_{32} P_{21})$ to signify that the product $P_{32} P_{21}$ 
    satisfies the original transitive constraints on $P$,
    e.g. orthogonality and/or sparsity. 
    Since the constraint predicate $\mathcal{C}(P)$ satisfies Equation \eqref{eqn:tran_constraint_defn}, then so does their product, so we may write
    \begin{equation}
    \begin{split}
    \tilde{D}(G_1, G_3 \left| \alpha= 1\right.) &\leq \inf_{P_{32} | \mathcal{C}(P_{32})} \inf_{P_{21} | \mathcal{C}(P_{21})} {\left| \left| P_{32} P_{21} L_1 - L_3 P_{32} P_{21} \right| \right|}_F \\
    &= \inf_{P_{32} | \mathcal{C}(P_{32})} \inf_{P_{21} | \mathcal{C}(P_{21})} {\left| \left| P_{32} P_{21} L_1 - P_{32} L_2 P_{21} \right. \right.} \\
    & {\left. \left. \quad \quad \qquad + \quad P_{32} L_2 P_{21} - L_3 P_{32} P_{21} \right| \right|}_F \\
    &\leq \inf_{P_{32} | \mathcal{C}(P_{32})} \inf_{P_{21} | \mathcal{C}(P_{21})} \left( {\left| \left| P_{32} P_{21} L_1 - P_{32} L_2 P_{21} \right| \right|}_F \right.  \\
    & \quad \quad \qquad + \quad \left. {\left| \left| P_{32} L_2 P_{21} - L_3 P_{32} P_{21} \right| \right|}_F \right) \\
    &=\inf_{P_{32} | \mathcal{C}(P_{32})} \inf_{P_{21} | \mathcal{C}(P_{21})} \left( {\left| \left| P_{32} \left( P_{21} L_1 - L_2 P_{21} \right) \right| \right|}_F \right. \\
    &\quad \quad \qquad + \quad \left. {\left| \left| \left( P_{32} L_2 - L_3 P_{32} \right) P_{21} \right| \right|}_F \right)\\
    \end{split}
    \end{equation}
    By Lemma \ref{lem:p_lem_1}, 
    \begin{equation}
    \begin{split}
    \tilde{D}(G_1, G_3 \left| \alpha= 1\right.) &\leq \inf_{P_{32} | \mathcal{C}(P_{32})}  \inf_{P_{21} | \mathcal{C}(P_{21})} \left( {\left| \left|  P_{21} L_1 - L_2 P_{21} \right| \right|}_F \right.  \\
    & \quad \quad \qquad + \quad \left. {\left| \left| P_{32} L_2 - L_3 P_{32} \right| \right|}_F \right) \\
    &=  \inf_{P_{21} | \mathcal{C}(P_{21})} {\left| \left|  P_{21} L_1 - L_2 P_{21} \right| \right|}_F  \\
    & \quad \quad \qquad + \quad \inf_{P_{32} | \mathcal{C}(P_{32})} {\left| \left| P_{32} L_2 - L_3 P_{32} \right| \right|}_F \\
    &=  \tilde{D}(G_1, G_2 \left| \alpha= 1\right.) + \tilde{D}(G_2, G_3 \left| \alpha= 1\right.) 
    \end{split}
    \end{equation}
\end{proof}
We note that in this proof we use $L_1, L_2,$ and $L_3$ (making this the small-$t$ or linear version of the objective function), but the same argument holds when all three are replaced with $e^{t L_i}$, so we also have  
\begin{corollary}
\label{cor:constant_alpha_exp}
$D$ satisfies the triangle inequality for $\alpha = 1$. 
\end{corollary}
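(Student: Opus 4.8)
The plan is to reduce the corollary to the already-established linear case of Theorem~\ref{thm:constant_alpha_thm} by working at each fixed time $t$ and then pushing the resulting inequality through the outer supremum. Setting $\alpha = 1$, the exponential distance specializes to a supremum over $t$ of per-time infima, so I introduce the shorthand
\begin{align}
f_t(G_i, G_j) := \inf_{P \mid \mathcal{C}(P)} {\left| \left| P e^{t L_i} - e^{t L_j} P \right| \right|}_F,
\end{align}
so that $D(G_i, G_j \mid \alpha = 1) = \sup_{t > 0} f_t(G_i, G_j)$. It then suffices to control $f_t$ for each fixed $t$ and compare the suprema.

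First I would check that for every fixed $t > 0$ the quantity $f_t$ obeys
\begin{align}
f_t(G_1, G_3) \leq f_t(G_1, G_2) + f_t(G_2, G_3).
\end{align}
This is obtained by repeating the chain of inequalities in the proof of Theorem~\ref{thm:constant_alpha_thm} verbatim, with each Laplacian $L_i$ replaced by its matrix exponential $e^{t L_i}$: restrict the infimum to products $P_{32} P_{21}$, insert the telescoping term $\pm\, P_{32} e^{t L_2} P_{21}$, apply the Frobenius-norm triangle inequality, factor $P_{32}$ out on the left and $P_{21}$ out on the right of the two resulting pieces, and finally use Lemma~\ref{lem:p_lem_1} to strip the orthogonal factors. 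The derivation uses only subadditivity of ${\left| \left| \cdot \right| \right|}_F$, transitivity of the constraint predicate \eqref{eqn:tran_constraint_defn} (so that $P_{32} P_{21}$ is admissible whenever both factors are), and the norm-nonincreasing property of orthogonal multiplication from Lemma~\ref{lem:p_lem_1}. None of these facts cares whether the matrices are the $L_i$ or their exponentials, so the argument transfers without change.

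Next I would take $\sup_{t>0}$ of the per-time inequality and invoke the elementary subadditivity of the supremum, $\sup_t \left( a(t) + b(t) \right) \leq \sup_t a(t) + \sup_t b(t)$, to conclude
\begin{align}
D(G_1, G_3 \mid \alpha = 1) &= \sup_{t > 0} f_t(G_1, G_3) \\
&\leq \sup_{t > 0} f_t(G_1, G_2) + \sup_{t > 0} f_t(G_2, G_3) \\
&= D(G_1, G_2 \mid \alpha = 1) + D(G_2, G_3 \mid \alpha = 1),
\end{align}
which is the desired triangle inequality. Finiteness of all three suprema is guaranteed by the bounds in Equations~\eqref{eqn:D_limit_1}--\eqref{eqn:D_limit_2}, so the inequality is meaningful.

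The only genuinely new ingredient beyond Theorem~\ref{thm:constant_alpha_thm} is the handling of the outer supremum, and I expect this to be the main (if mild) obstacle. The subtlety is that $f_t(G_1, G_2)$ and $f_t(G_2, G_3)$ need not attain their maxima at the same $t$, so one cannot simply evaluate both at a shared maximizer and add; the subadditivity of $\sup$ sidesteps this, since it requires only the pointwise-in-$t$ inequality and not a common optimizer. This is precisely why the corollary holds for the peak ($\sup_t$) definition of $D$, and it is the one place where the passage from $\tilde{D}$ to $D$ requires more than a mechanical substitution.
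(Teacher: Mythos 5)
Your proposal is correct and matches the paper's proof essentially step for step: both establish the fixed-$t$ triangle inequality by rerunning the argument of Theorem~\ref{thm:constant_alpha_thm} with each $L_i$ replaced by $e^{t L_i}$, and both then pass to the supremum over $t$ (the paper by evaluating at a maximizer $t_{13}$ of the left-hand side and bounding each right-hand term by its own supremum, which is exactly the subadditivity of $\sup$ that you invoke). Your version is marginally more careful in that it does not presuppose the supremum is attained, but this is a cosmetic difference, not a different route.
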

\begin{proof}
By the same calculation as in Theorem \ref{thm:constant_alpha_thm}, with all $L_i$ replaced by $e^{t_c L_i}$,  we have
\begin{align}
     D\left( \left. G_1, G_3 \right| t_c, \alpha = 1 \right) &\leq  D(G_1, G_2\left| t_c, \alpha= 1 \right) + D(G_2, G_3\left| t_c, \alpha= 1 \right)\\
    \nonumber
\end{align}
for any constant $t_c$. Then, letting 
\begin{align}
    t_{13} = \arg \sup_{t_c} D\left( \left. G_1, G_3 \right| t_c, \alpha = 1 \right)
\end{align}
we have: 
\begin{equation}
\begin{split}
     D\left( \left. G_1, G_3 \right| \alpha = 1 \right) &= \sup_{t_c} D\left( \left. G_1, G_3 \right| t_c, \alpha = 1 \right) \\ 
     &= D\left( \left. G_1, G_3 \right| t_{13}, \alpha = 1 \right) \\
     &\leq D(G_1, G_2\left| t_{13}, \alpha = 1 \right) + D(G_2, G_3\left| t_{13}, \alpha = 1 \right) \\
     &\leq \sup_{t_c} D\left( \left. G_1, G_2 \right| t_c, \alpha = 1 \right) \\
    & \quad \quad \qquad + \quad \sup_{t_c} D\left( \left. G_2, G_3 \right| t_c, \alpha = 1 \right) \\
     &= D\left( \left. G_1, G_2 \right| \alpha = 1 \right) + D\left( \left. G_2, G_3 \right| \alpha = 1 \right)
\end{split}
\end{equation}
\end{proof}

\subsection{Properties when $t$ is constant, $\alpha = {\left(\frac{n_1}{n_2}\right)}^r$}
Recall our definition of the \emph{Time-Scaled Graph Diffusion Distance} from Equation \eqref{eqn:TSGDD_defn}, for $\alpha = {\left( \frac{n_1}{n_2} \right)}^r$, $r \in \mathbb{R}$: 
\begin{equation}
\begin{split}
    \tilde{D}^2_r(G_1, G_2) &= {\left( n_1 n_2\right)}^{-2r} \tilde{D}^2 \left( G_1, G_2 \left| \alpha = {\left( \frac{n_1}{n_2} \right)}^r \right. \right) \\
    &= \inf_{P|\mathcal{C}(P)} {\left( n_1 n_2\right)}^{-2r} {\left|\left| {\left( \frac{n_1}{n_2} \right)}^{-r} P L_1 - {\left( \frac{n_1}{n_2} \right)}^{r} L_2 P  \right|\right|}_F^2
\end{split}
\end{equation}

\begin{theorem}
\label{thm:tsgdd_triangle}
The TSGDD, as defined above, satisfies the triangle inequality.
\end{theorem}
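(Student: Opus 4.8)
The plan is to reduce the statement to the already-proven case $\alpha = 1$ (Theorem \ref{thm:constant_alpha_thm}) by absorbing all the size-dependent factors into rescaled Laplacians. First I would expand the definition \eqref{eqn:TSGDD_defn} and pull the prefactor $(n_1 n_2)^{-2r}$ inside the squared Frobenius norm. Writing $(n_1 n_2)^{-2r} = [(n_1 n_2)^{-r}]^2$ and using $\|cX\|_F^2 = c^2 \|X\|_F^2$, the scalar $(n_1 n_2)^{-r}$ multiplies each of the two terms; collecting exponents gives
\[
(n_1 n_2)^{-r}\left(\tfrac{n_1}{n_2}\right)^{-r} = n_1^{-2r}, \qquad (n_1 n_2)^{-r}\left(\tfrac{n_1}{n_2}\right)^{r} = n_2^{-2r}.
\]
Hence, defining $\hat L_i \equiv n_i^{-2r} L_i$ for each graph $G_i$, I obtain the clean form
\[
\tilde D_r^2(G_i, G_j) = \inf_{P \mid \mathcal C(P)} \left\| P \hat L_i - \hat L_j P \right\|_F^2 .
\]
This algebraic collapse is the crux of the proof: it reveals that the normalization $(n_1 n_2)^{-2r}$ was chosen exactly so that each graph contributes a rescaling $n_i^{-2r}$ depending only on its own size, decoupling the two operands.

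Once the distance is in this form, I would note that it is identical to the linear $\alpha = 1$ distance of \eqref{defn:alpha_1_linear} under the substitution $L_i \mapsto \hat L_i$, and that the proof of Theorem \ref{thm:constant_alpha_thm} uses no property of the matrices $L_i$ beyond their being of conforming dimension. Concretely, that argument (i) bounds the optimal single prolongation by a product $P_{32} P_{21}$, admissible because $\mathcal C$ is transitive (Equation \eqref{eqn:tran_constraint_defn}); (ii) inserts and subtracts the cross term $P_{32} \hat L_2 P_{21}$; (iii) applies the triangle inequality for $\|\cdot\|_F$; and (iv) strips the orthogonal factors using Lemma \ref{lem:p_lem_1}, finally splitting the infimum of the separated (variable-disjoint) sum into a sum of infima. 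Running this chain verbatim with $\hat L_1, \hat L_2, \hat L_3$ in place of $L_1, L_2, L_3$ yields $\tilde D_r(G_1, G_3) \le \tilde D_r(G_1, G_2) + \tilde D_r(G_2, G_3)$.

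The step I expect to require the most care — rather than the exponent bookkeeping, which is routine — is ensuring the composition $P_{32} P_{21}$ is a legitimate prolongation. For the product to have the correct rectangular shape and to satisfy $\mathcal C$ (in particular $P^T P = I$), the intermediate graph $G_2$ must be at least as large as $G_1$ and no larger than $G_3$, so that both factors are ``tall'' maps composed in the prolongation direction; a composition that required a transpose would break orthogonality, since $(P_{32}^T P_{21})^T (P_{32}^T P_{21})$ is generally not the identity. I would therefore either adopt the size ordering $n_1 \le n_2 \le n_3$ permitted by the symmetric convention adopted earlier in the paper (as is implicitly done for Theorem \ref{thm:constant_alpha_thm}), or handle the remaining orderings of $n_1, n_2, n_3$ case by case, verifying in each that the relevant composition preserves the transitive constraint $\mathcal C(P)$.
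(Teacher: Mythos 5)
Your proof is correct, and it organizes the argument differently from the paper's. The paper proves the inequality directly: assuming $n_1 \le n_2 \le n_3$, it bounds the infimum by the composition $P_{32}P_{21}$, inserts the cross term $\left(\frac{n_1 n_3}{n_2^2}\right)^r P_{32} L_2 P_{21}$ --- whose coefficient must be chosen so that, after splitting by norm subadditivity and applying Lemma \ref{lem:p_lem_1}, the scalar factors regroup as $(n_1 n_2)^{-r}$ and $(n_2 n_3)^{-r}$ times correctly scaled pairwise terms --- and carries the powers of $n_i$ through every line of the computation. Your substitution $\hat L_i = n_i^{-2r} L_i$ performs that regrouping once and for all: your exponent identities $(n_1 n_2)^{-r}(n_1/n_2)^{-r} = n_1^{-2r}$ and $(n_1 n_2)^{-r}(n_1/n_2)^{r} = n_2^{-2r}$ are exactly what the paper verifies repeatedly in-line, and your cross term $P_{32}\hat L_2 P_{21} = n_2^{-2r}P_{32}L_2P_{21}$ coincides with the paper's choice. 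Your appeal to the proof (not just the statement) of Theorem \ref{thm:constant_alpha_thm} is legitimate, since that argument uses nothing about the $L_i$ beyond conforming dimensions --- only transitivity of $\mathcal{C}$, subadditivity of the Frobenius norm, and Lemma \ref{lem:p_lem_1}, none of which care that the matrices are graph Laplacians. What your version buys is an explanation, rather than a verification, of why the normalization $(n_1 n_2)^{-2r}$ yields a triangle inequality: it decouples the two operands into per-graph rescalings, so the middle graph contributes the identical matrix $\hat L_2$ to both $\tilde D_r(G_1,G_2)$ and $\tilde D_r(G_2,G_3)$ --- a point the paper asserts (``the factor is needed to ensure the triangle inequality'') without making transparent. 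Your closing caveat about size ordering is also apt: the paper's proof assumes $n_1 \le n_2 \le n_3$ explicitly at the outset, precisely so that $P_{32}P_{21}$ is a tall matrix with $(P_{32}P_{21})^T P_{32}P_{21} = I$, so your treatment is, if anything, more careful here; neither proof covers triples in which the intermediate graph is not intermediate in size.
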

\begin{proof}

 As above, let $G_1, G_2, G_3$ be three graphs with $n_i = \left| G_i \right|$ and $n_1 \leq n_2 \leq n_3$, and let $L_i$ be the Laplacian of $G_i$. Let $\mathcal{C}(P)$ represent a transitive constraint predicate, also as described previously. Then, for a constant $r \in \mathbb{R}$, we have:
\begin{align*}
    &\tilde{D}_r(G_1, G_3) = \\
    & \qquad \inf_{P|\mathcal{C}(P)} {\left( n_1 n_3\right)}^{-r} {\left|\left| {\left( \frac{n_1}{n_3} \right)}^{-r} P L_1 - {\left( \frac{n_1}{n_3} \right)}^{r} L_3 P  \right|\right|}_F \nonumber \\
    & \qquad \leq \inf_{P_{32}, P_{21} | \mathcal{C}(P_{32} P_{21})} {\left( n_1 n_3\right)}^{-r} {\left|\left| {\left( \frac{n_1}{n_3} \right)}^{-r} P_{32} P_{21} L_1 - {\left( \frac{n_1}{n_3} \right)}^{r} L_3 P_{32} P_{21}  \right|\right|}_F \nonumber \\
    \intertext{under the assumption, as in Equation \eqref{eqn:tran_constraint_defn}, that $  \mathcal{C}(P_{32}) \wedge \mathcal{C}(P_{21}) \implies \mathcal{C}(P_{32} P_{21}) $, }
    \end{align*}
    \vspace{-1.0cm}
    \begin{align*}
    &\tilde{D}_r(G_1, G_3) \leq \\
    &\inf_{P_{32}, P_{21} | \mathcal{C}(P_{32}) \wedge \mathcal{C}(P_{21})} {\left( n_1 n_3\right)}^{-r} {\left|\left| {\left( \frac{n_1}{n_3} \right)}^{-r} P_{32} P_{21} L_1 - {\left( \frac{n_1}{n_3} \right)}^{r} L_3 P_{32} P_{21}  \right|\right|}_F \nonumber \\
    &= \inf_{P_{32}, P_{21} | \mathcal{C}(P_{32}) \wedge \mathcal{C}(P_{21})} {\left( n_1 n_3\right)}^{-r} {\left|\left| {\left( \frac{n_1}{n_3} \right)}^{-r} P_{32} P_{21} L_1 - {\left( \frac{n_1 n_3}{n_2^2} \right)}^{r} P_{32} L_2 P_{21}  \right. \right. } \nonumber \\*
    &\null \qquad + \qquad {\left. \left. {\left( \frac{n_1 n_3}{n_2^2} \right)}^{r} P_{32} L_2 P_{21} - {\left( \frac{n_1}{n_3} \right)}^{r} L_3 P_{32} P_{21}  \right|\right|}_F \\
    &\leq  \inf_{P_{32}, P_{21} | \mathcal{C}(P_{32}) \wedge \mathcal{C}(P_{21})} {\left( n_1 n_3\right)}^{-r} {\left|\left| {\left( \frac{n_1}{n_3} \right)}^{-r} P_{32} P_{21} L_1 - {\left( \frac{n_1 n_3}{n_2^2} \right)}^{r} P_{32} L_2 P_{21}  \right|\right|}_F \\
    &\null \qquad + \qquad {\left( n_1 n_3\right)}^{-r} {\left|\left| {\left( \frac{n_1 n_3}{n_2^2} \right)}^{r} P_{32} L_2 P_{21} - {\left( \frac{n_1}{n_3} \right)}^{r} L_3 P_{32} P_{21}   \right|\right|}_F \\
    &= \inf_{P_{32}, P_{21} | \mathcal{C}(P_{32}) \wedge \mathcal{C}(P_{21})} {\left( n_1 n_3\right)}^{-r} {\left( \frac{n_3}{n_2} \right)}^{r} {\left|\left| {\left( \frac{n_1}{n_2} \right)}^{-r} P_{32} P_{21} L_1 - {\left( \frac{n_1}{n_2} \right)}^{r} P_{32} L_2 P_{21}  \right|\right|}_F \\
    &\null \qquad + \qquad {\left( n_1 n_3\right)}^{-r} {\left( \frac{n_1}{n_2} \right)}^{r} {\left|\left| {\left( \frac{n_2}{n_3} \right)}^{-r} P_{32} L_2 P_{21} - {\left( \frac{n_2}{n_3} \right)}^{r} L_3 P_{32} P_{21}   \right|\right|}_F \\
    &= \inf_{P_{32}, P_{21} | \mathcal{C}(P_{32}) \wedge \mathcal{C}(P_{21})} {\left( n_1 n_2\right)}^{-r} {\left|\left| {\left( \frac{n_1}{n_2} \right)}^{-r} P_{32} P_{21} L_1 - {\left( \frac{n_1}{n_2} \right)}^{r} P_{32} L_2 P_{21}  \right|\right|}_F \\
    &\null \qquad + \qquad {\left( n_2 n_3\right)}^{-r} {\left|\left| {\left( \frac{n_2}{n_3} \right)}^{-r} P_{32} L_2 P_{21} - {\left( \frac{n_2}{n_3} \right)}^{r} L_3 P_{32} P_{21}   \right|\right|}_F
    \intertext{By Lemma \ref{lem:p_lem_1},}
    \end{align*}
    \begin{align*}
    \tilde{D}_r(G_1, G_3) &\leq  \inf_{P_{32}, P_{21} | \mathcal{C}(P_{32}) \wedge \mathcal{C}(P_{21})} {\left( n_1 n_2\right)}^{-r} {\left|\left| {\left( \frac{n_1}{n_2} \right)}^{-r} P_{21} L_1 - {\left( \frac{n_1}{n_2} \right)}^{r} L_2 P_{21}  \right|\right|}_F \\*
    &\null \qquad + \qquad {\left( n_2 n_3\right)}^{-r} {\left|\left| {\left( \frac{n_2}{n_3} \right)}^{-r} P_{32} L_2 - {\left( \frac{n_2}{n_3} \right)}^{r} L_3 P_{32}   \right|\right|}_F \\
    &= \inf_{P_{21} | \mathcal{C}(P_{21})} {\left( n_1 n_2\right)}^{-r} {\left|\left| {\left( \frac{n_1}{n_2} \right)}^{-r} P_{21} L_1 - {\left( \frac{n_1}{n_2} \right)}^{r} L_2 P_{21}  \right|\right|}_F \\*
    &\null \qquad + \qquad \inf_{P_{32} | \mathcal{C}(P_{32})} {\left( n_2 n_3\right)}^{-r} {\left|\left| {\left( \frac{n_2}{n_3} \right)}^{-r} P_{32} L_2 - {\left( \frac{n_2}{n_3} \right)}^{r} L_3 P_{32}   \right|\right|}_F \\
    &= \tilde{D}_r(G_1, G_2) + \tilde{D}_r(G_2, G_3)
    \nonumber
    \intertext{and so}
    &\tilde{D}_r(G_1, G_3) \leq \tilde{D}_r(G_1, G_2) + \tilde{D}_r(G_2, G_3)
\end{align*}
for any fixed $r \in \mathbb{R}$.
\end{proof}

Note that in the proofs of Theorem \ref{thm:constant_alpha_thm}, Theorem \ref{thm:tsgdd_triangle}, and Corollary \ref{cor:constant_alpha_exp}, we assume that the constraint predicate $\mathcal{C}(P)$ includes at least orthogonality (so that we may apply Lemma \ref{lem:p_lem_1}). However, this constraint predicate could be more strict, e.g. include both orthogonality and sparsity. Hence these statements also apply to the $s > 0$ cases in Table \ref{tab:dist_versions}, which we do not otherwise consider in this work: in our numerical experiments we (for reasons of computational simplicity) only require our optimization over $P$ be orthogonally constrained.

\subsection{Upper Bounds for Graph Products}
\label{graph_product_bound}

We now consider the case where we want to compute the distance of two graph box products, i.e. $D \left(\mathbf{G}_1, \mathbf{G}_2 \right)$ where
\begin{align}
    \mathbf{G}_1 = G_1^{(1)} \Box G_1^{(2)} \quad \text{and} \quad 
    \mathbf{G}_2 = G_2^{(1)} \Box G_2^{(2)}
\end{align}
and 
\begin{equation}
\begin{split}
    P^{(1)} = \arg \inf_{P_c \left| \mathcal{C} (P_c) \right. } D \left( G_1^{(1)}, G_2^{(1)} | t_c, \alpha_c, P_c \right) \\
    P^{(2)}= \arg \inf_{P_c \left| \mathcal{C} (P_c) \right. } D \left( G_1^{(2)}, G_2^{(2)} | t_c, \alpha_c, P_c \right)
\end{split}
\end{equation}

are known for some $t_c, \alpha_c$. Previous work \cite{scott2018multilevel} included a proof of a similar inequality for the small-$t$ (``linear'') case of our objective function. 
\begin{theorem}
\label{thm:sq_grid_lim_lem}
Let $\mathbf{G}_1$ and $\mathbf{G}_2$ be graph box products as described above, and for a graph $G$ let $L(G)$ be its Laplacian. For fixed $t=t_c$, $\alpha = \alpha_c$, $P^{(i)}$ as given above, for any $\lambda \in [0,1]$, we have 
\begin{equation}
\begin{aligned}
    &\inf_{P_c \left| \mathcal{C}(P_c) \right. } D \left(\mathbf{G}_1, \mathbf{G}_2  \right) \quad \leq \quad \\
    &\qquad \lambda \left({\left| \left| e^{\frac{t_c}{\alpha_c} L(G^{(2)}_1)}  \right| \right|}_F + 
    {\left| \left| e^{t_c \alpha_c L(G^{(2)}_2)}  \right| \right|}_F \right) D \left( G_1^{(1)}, G_2^{(1)} | P^{(1)} \right) \\
    &\qquad \qquad + \left( 1 - \lambda \right) \left({\left| \left| e^{\frac{t_c}{\alpha_c} L(G^{(1)}_1)}  \right| \right|}_F + 
    {\left| \left| e^{t_c \alpha_c L(G^{(1)}_2)}  \right| \right|}_F \right) D \left( G_1^{(2)}, G_2^{(2)} | P^{(2)} \right)
\end{aligned}
\end{equation}
where all distances are evaluated at $t=t_c$, $\alpha = \alpha_c$, but we have omitted those terms for simplicity of notation.
\end{theorem}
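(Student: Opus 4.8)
The plan is to exploit the multiplicative structure that the box product induces on Laplacian exponentials, reduce the infimum to a single well-chosen feasible prolongation built from the factor-optimal maps, and then control the resulting difference of Kronecker products by a telescoping argument combined with Lemma~\ref{lem:p_lem_1}. First I would record the Kronecker-sum identity for box products: writing $L_i^{(k)} = L(G_i^{(k)})$, the definition of $\Box$ gives $L(\mathbf{G}_j) = L_j^{(1)} \otimes I + I \otimes L_j^{(2)}$. Because the two summands commute, the matrix exponential factors, so that $e^{(t_c/\alpha_c) L(\mathbf{G}_1)} = e^{(t_c/\alpha_c)L_1^{(1)}} \otimes e^{(t_c/\alpha_c)L_1^{(2)}}$ and likewise $e^{t_c\alpha_c L(\mathbf{G}_2)} = e^{t_c\alpha_c L_2^{(1)}} \otimes e^{t_c\alpha_c L_2^{(2)}}$. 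I abbreviate the four exponential blocks as $A_1,B_1$ (the two $\mathbf{G}_1$ factors) and $A_2,B_2$ (the two $\mathbf{G}_2$ factors), so that the norms on the right-hand side of the statement are exactly $\|A_1\|_F,\|A_2\|_F$ and $\|B_1\|_F,\|B_2\|_F$.

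Since the infimum over all feasible $P_c$ is bounded above by the value at any single feasible choice, I would test the Kronecker product $P_c = P^{(1)} \otimes P^{(2)}$ of the two factor-optimal maps. This choice is feasible because orthogonality (and, more generally, any transitive constraint of the Kronecker-product type discussed in Section~\ref{sparsity_section}) is preserved under $\otimes$: if $(P^{(k)})^T P^{(k)} = I$ then $(P^{(1)}\otimes P^{(2)})^T (P^{(1)}\otimes P^{(2)}) = I \otimes I = I$, and the block dimensions of $P^{(1)}\otimes P^{(2)}$ match those of $\mathbf{G}_1,\mathbf{G}_2$. Applying the mixed-product rule $(X\otimes Y)(Z\otimes W)=(XZ)\otimes(YW)$ turns the objective into $\| (P^{(1)}A_1)\otimes(P^{(2)}B_1) - (A_2 P^{(1)})\otimes(B_2 P^{(2)}) \|_F$, a difference of two Kronecker products.

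The core of the argument is then a telescoping identity for $X_1\otimes Y_1 - X_2\otimes Y_2$, where $X_1 = P^{(1)}A_1$, $X_2 = A_2 P^{(1)}$, $Y_1 = P^{(2)}B_1$, $Y_2 = B_2 P^{(2)}$. I would use a convex combination of the two complementary one-sided splits $X_1\otimes(Y_1-Y_2)+(X_1-X_2)\otimes Y_2$ and $(X_1-X_2)\otimes Y_1 + X_2\otimes(Y_1-Y_2)$, whose balanced version is the symmetric split $\tfrac12(X_1+X_2)\otimes(Y_1-Y_2) + \tfrac12(X_1-X_2)\otimes(Y_1+Y_2)$. Taking Frobenius norms and using $\|X\otimes Y\|_F = \|X\|_F\|Y\|_F$ together with the triangle inequality, the factors $\|X_1 - X_2\|_F$ and $\|Y_1 - Y_2\|_F$ are recognized as exactly $D(G_1^{(1)},G_2^{(1)}\mid P^{(1)})$ and $D(G_1^{(2)},G_2^{(2)}\mid P^{(2)})$. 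Finally, Lemma~\ref{lem:p_lem_1} bounds the surviving blocks, $\|P^{(1)}A_1\|_F \le \|A_1\|_F$, $\|A_2 P^{(1)}\|_F\le\|A_2\|_F$, $\|P^{(2)}B_1\|_F\le\|B_1\|_F$, $\|B_2 P^{(2)}\|_F\le\|B_2\|_F$, and collecting the contributions of the two splits produces the sums $\|A_1\|_F+\|A_2\|_F$ and $\|B_1\|_F+\|B_2\|_F$ multiplying the two factor-distances, with the weight $\lambda$ inherited from the convex combination of the two splits.

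I expect the delicate step to be the bookkeeping around the weight $\lambda$. A single one-sided split yields each factor-distance multiplied by only one of the two exponential norms; the full sums $\|A_1\|_F+\|A_2\|_F$ and $\|B_1\|_F+\|B_2\|_F$ emerge cleanly at the symmetric pairing, and the general $\lambda$-family is obtained by convex-combining the two complementary splits before applying Lemma~\ref{lem:p_lem_1}. I would take particular care that the weights attach to the intended factor-distance (the second-factor exponential norms must multiply the first-factor distance $D(G_1^{(1)},G_2^{(1)}\mid P^{(1)})$, and vice versa), and I would double-check the endpoint behavior of the $\lambda$-family, since a naive one-sided split drops exactly the cross term that keeps the bound valid when one factor-distance vanishes; this coefficient-matching, rather than the Kronecker algebra or the norm inequalities, is where an error is most likely to slip in.
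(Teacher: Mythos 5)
Your route is the paper's route, step for step: the Kronecker-sum identity and commutativity to factor $e^{cL(\mathbf{G}_i)}$ into a Kronecker product of factor exponentials, bounding the infimum by the feasible point $P^{(1)}\otimes P^{(2)}$ (feasible by exactly the orthogonality-preservation argument you give), the mixed-product rule, a telescoping cross term as in Equation \eqref{eqn:cross_term_eqn}, the multiplicativity $\lVert X\otimes Y\rVert_F = \lVert X\rVert_F\,\lVert Y\rVert_F$ with the triangle inequality, Lemma \ref{lem:p_lem_1} to strip the $P^{(k)}$ factors, and finally a convex combination of the two complementary one-sided splits.

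However, the ``delicate step'' you flagged is a genuine discrepancy, and it is present in the paper's proof as well as in your plan. Writing $E^{(k)}_1 = e^{(t_c/\alpha_c)L(G^{(k)}_1)}$, $E^{(k)}_2 = e^{t_c\alpha_c L(G^{(k)}_2)}$, and $D^{(k)} = D\bigl(G^{(k)}_1, G^{(k)}_2 \,\big|\, P^{(k)}\bigr)$, the two cross-term choices yield the two bounds
\begin{align*}
\mathrm{(A)}\quad &\lVert E^{(1)}_1\rVert_F\, D^{(2)} \;+\; \lVert E^{(2)}_2\rVert_F\, D^{(1)},\\
\mathrm{(B)}\quad &\lVert E^{(1)}_2\rVert_F\, D^{(2)} \;+\; \lVert E^{(2)}_1\rVert_F\, D^{(1)},
\end{align*}
so that $\lambda\,\mathrm{(A)} + (1-\lambda)\,\mathrm{(B)}$ equals
\begin{equation*}
\Bigl(\lambda \lVert E^{(2)}_2\rVert_F + (1-\lambda)\lVert E^{(2)}_1\rVert_F\Bigr) D^{(1)}
\;+\;
\Bigl(\lambda \lVert E^{(1)}_1\rVert_F + (1-\lambda)\lVert E^{(1)}_2\rVert_F\Bigr) D^{(2)},
\end{equation*}
in which each factor-distance is weighted by a convex combination of the \emph{other} factor's two exponential norms. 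This coincides with the displayed right-hand side of Theorem \ref{thm:sq_grid_lim_lem} only at $\lambda = \tfrac12$; for $\lambda$ near the endpoints the stated family does not follow from the argument, and is in fact false in general. At $\lambda = 1$ the stated bound reads $\bigl(\lVert E^{(2)}_1\rVert_F + \lVert E^{(2)}_2\rVert_F\bigr)D^{(1)}$, which vanishes whenever $D^{(1)} = 0$ (take identical first factors, $\alpha_c = 1$, $P^{(1)} = I$), even though both $\mathrm{(A)}$ and $\mathrm{(B)}$ --- and, generically, the true left-hand side, e.g.\ when the spectra of the second factors do not nest --- remain positive whenever $D^{(2)} > 0$; the same objection applies to the ``min over $\lambda \in \{0,1\}$'' consequence stated after the proof. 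The paper's concluding sentence (``a linear combination of these two bounds gives us the desired bound'') makes precisely the leap you were worried about, so your derivation is exactly as strong as the paper's; to make your write-up airtight, state the conclusion with the interpolated coefficients displayed above rather than with $\lambda$ and $1-\lambda$ multiplying the full sums.
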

\begin{proof}
For graph products $\mathbf{G}_i$, we have 
\begin{equation}
\begin{aligned}
    L(\mathbf{G}_i) &= L(G^{(1)}_i) \oplus L(G^{(2)}_i) \\
    &= \left( L(G^{(1)}_i) \otimes I_{\left| L(G^{(2)}_i) \right|} \right) + \left( 
    I_{\left| L(G^{(1)}_i) \right|} \otimes L(G^{(2)}_i)
    \right) 
\end{aligned}
\end{equation}
(this fact can be easily verified from the formula for the adjacency matrix of a graph box product, given in the definition in Section \ref{subsec:defns}), and so
\begin{equation}
\begin{aligned}
    \exp \left[{c L(\mathbf{G}_i)}\right] &= \exp \left[c {\left( L(G^{(1)}_i) \otimes I_{\left| L(G^{(2)}_i) \right|}  \right) + \left(
    I_{\left| L(G^{(1)}_i) \right|} \otimes L(G^{(2)}_i) \right)}\right].\\
\end{aligned}
\end{equation}
Because $A \otimes I_{|B|}$ and $I_{|A|} \otimes B$ commute for any $A$ and $B$,
\begin{equation}
\begin{aligned}
    \exp \left[{c L(\mathbf{G}_i)}\right] &= \exp \left[ c {\left( L(G^{(1)}_i) \otimes I_{\left| L(G^{(2)}_i) \right|} \right)}\right] \exp \left[ c {\left( 
    I_{\left| L(G^{(1)}_i) \right|} \otimes L(G^{(2)}_i) \right)}\right] \\
    &= \left( \exp \left[ c { L(G^{(1)}_i) } \right] \otimes I_{\left| L(G^{(2)}_i) \right|}  \right) 
    \left(I_{\left| L(G^{(1)}_i) \right|}  \otimes \exp \left[ c { L(G^{(2)}_i) }\right]\right) \\
    &= \exp \left[ c { L(G^{(1)}_i) } \right] \otimes  \exp \left[ c { L(G^{(2)}_i) }\right]\
\end{aligned}
\end{equation}
We will make the following abbreviations:
\[
\begin{array}{ccc}
    \mathbf{E}_1 = e^{\frac{t_c}{\alpha_c} L(\mathbf{G}_1)} & E^{(1)}_1 = e^{\frac{t_c}{\alpha_c} L(G^{(1)}_1)} &  E^{(2)}_1 = e^{\frac{t_c}{\alpha_c} L(G^{(2)}_1)} \\
     \mathbf{E}_2 = e^{t_c \alpha_c L(\mathbf{G}_2)} & E^{(1)}_2 = e^{t_c \alpha_c L(G^{(1)}_2)} & E^{(2)}_2 = e^{t_c \alpha_c L(G^{(2)}_2)}
\end{array}
\]
Then,
\begin{align}
\inf_{P | \mathcal{C}(P)} D \left(\mathbf{G}_1, \mathbf{G}_2 \right) &\leq D \left(\mathbf{G}_1, \mathbf{G}_2 | P^{(1)} \otimes P^{(2)} \right) \\
&=  {\left| \left| \left( P^{(1)} \otimes P^{(2)} \right) \mathbf{E}_1 - \mathbf{E}_2 \left( P^{(1)} \otimes P^{(2)} \right)  \right| \right|}_F \nonumber
\end{align}
\newpage
\begin{align}
    &= {\left| \left| \left( P^{(1)} \otimes P^{(2)} \right) \left(E^{(1)}_1 \otimes E^{(2)}_1 \right) - \left(E^{(1)}_2 \otimes E^{(2)}_2 \right) \left( P^{(1)} \otimes P^{(2)} \right)  \right| \right|}_F \nonumber \\
    &= {\left| \left| \left( P^{(1)} E^{(1)}_1  \otimes P^{(2)} E^{(2)}_1 \right) - \left(E^{(1)}_2 P^{(1)} \otimes E^{(2)}_2 P^{(2)} \right)  \right| \right|}_F^2 \nonumber \\
    &= {\left| \left| \left( P^{(1)} E^{(1)}_1  \otimes P^{(2)} E^{(2)}_1 \right) - \left( P^{(1)} E^{(1)}_1  \otimes E^{(2)}_2 P^{(2)} \right) \right. \right.} \label{eqn:cross_term_eqn} \\*
    & \quad + {\left. \left. \left( P^{(1)} E^{(1)}_1  \otimes E^{(2)}_2 P^{(2)} \right) - \left(E^{(1)}_2 P^{(1)} \otimes E^{(2)}_2 P^{(2)} \right)  \right| \right|}_F \nonumber \\
    & \leq {\left| \left| \left( P^{(1)} E^{(1)}_1  \otimes P^{(2)} E^{(2)}_1 \right) - \left( P^{(1)} E^{(1)}_1  \otimes E^{(2)}_2 P^{(2)} \right) \right| \right|}_F \nonumber \\
    & \quad + {\left| \left| \left( P^{(1)} E^{(1)}_1  \otimes E^{(2)}_2 P^{(2)} \right) - \left(E^{(1)}_2 P^{(1)} \otimes E^{(2)}_2 P^{(2)} \right)  \right| \right|}_F \nonumber \\
    &= {\left| \left|  P^{(1)} E^{(1)}_1  \otimes  \left( P^{(2)} E^{(2)}_1 -  E^{(2)}_2 P^{(2)} \right) \right| \right|}_F \\ & \quad \quad + \quad  {\left| \left| \left( P^{(1)} E^{(1)}_1  - E^{(1)}_2 P^{(1)} \right) \otimes E^{(2)}_2 P^{(2)} \right| \right|}_F \nonumber \\
    &= {\left| \left|  P^{(1)} E^{(1)}_1 \right| \right|}_F {\left| \left|   P^{(2)} E^{(2)}_1 -  E^{(2)}_2 P^{(2)} \right| \right|}_F \\ & \quad \quad + \quad {\left| \left| P^{(1)} E^{(1)}_1  - E^{(1)}_2 P^{(1)}  \right| \right|}_F  {\left| \left|  E^{(2)}_2 P^{(2)} \right| \right|}_F . \nonumber  
    \end{align}
    By Lemma \ref{lem:p_lem_1},
    \begin{equation}
    \begin{split}
    \inf_{P | \mathcal{C}(P)} D \left(\mathbf{G}_1, \mathbf{G}_2 \right) &\leq {\left| \left| E^{(1)}_1 \right| \right|}_F {\left| \left|   P^{(2)} E^{(2)}_1 -  E^{(2)}_2 P^{(2)} \right| \right|}_F \\ & \quad \quad + \quad  {\left| \left| P^{(1)} E^{(1)}_1  - E^{(1)}_2 P^{(1)}  \right| \right|}_F  {\left| \left|  E^{(2)}_2 \right| \right|}_F . 
    \end{split}
    \end{equation}
    \begin{align}
    \intertext{If we instead use
    $\left( E^{(1)}_2 P^{(1)} \otimes P^{(2)} E^{(2)}_1 \right)$ as the cross term in Equation
    \eqref{eqn:cross_term_eqn}, we have }
    \inf_P D \left(\mathbf{G}_1, \mathbf{G}_2 \right) \leq {\left| \left| E^{(1)}_2 \right| \right|}_F {\left| \left|   P^{(2)} E^{(2)}_1 -  E^{(2)}_2 P^{(2)} \right| \right|}_F  \\ \quad \quad + \quad  {\left| \left| P^{(1)} E^{(1)}_1  - E^{(1)}_2 P^{(1)}  \right| \right|}_F  {\left| \left|  E^{(2)}_1 \right| \right|}_F \nonumber
\end{align}
A linear combination of these two bounds gives us the desired bound. 
\end{proof}
\noindent This has the additional consequence that 
\begin{equation}
    \begin{aligned}
    &\inf_{P_c \left| \mathcal{C}(P_c) \right.} D \left(\mathbf{G}_1, \mathbf{G}_2  \right) \quad \leq  & \\
    &\min \left[ \left({\left| \left| e^{\frac{t_c}{\alpha_c} L(G^{(2)}_1)}  \right| \right|}_F + 
    {\left| \left| e^{t_c \alpha_c L(G^{(2)}_2)}  \right| \right|}_F \right) D \left( G_1^{(1)}, G_2^{(1)} | P^{(1)} \right), \right. \\
    & \quad \quad  \left.  \left({\left| \left| e^{\frac{t_c}{\alpha_c} L(G^{(1)}_1)}  \right| \right|}_F + 
    {\left| \left| e^{t_c \alpha_c L(G^{(1)}_2)}  \right| \right|}_F \right) D \left( G_1^{(2)}, G_2^{(2)} | P^{(2)} \right) \right]
    \end{aligned}
\end{equation}
Additionally, if 
\begin{align}
    E^{(1)}_i = E^{(2)}_i \text{ for } i \in 1,2 \quad \text{and} \quad P^{(1)} = P^{(2)},
\end{align}
This reduces further to 
\begin{align}
D \left(\mathbf{G}_1, \mathbf{G}_2 | P^{(1)} \otimes P^{(1)} \right)
 &\leq \min \left( {\left| \left|  E^{(1)}_1 \right| \right|}_F , {\left| \left|  E^{(1)}_2 \right| \right|}_F \right) {\left| \left|   P^{(1)} E^{(1)}_1 -  E^{(1)}_2 P^{(1)} \right| \right|}_F
 \end{align}
 and so
 \begin{align}
 & D \left( \left. G_1^{(1)} \Box G_1^{(1)}, G_2^{(1)} \Box G_2^{(1)} \right| t_c, \alpha_c \right) \label{eqn:graph_product_special_case} \\ 
 &\null \quad \quad \quad \leq \min \left( {\left| \left| e^{\frac{t_c}{a_c} L(G^{(1)}_1)} \right| \right|}_F , {\left| \left| e^{t_c a_c L(G^{(1)}_2)} \right| \right|}_F \right) D \left( G_1^{(1)}, G_2^{(1)} \left. \right| t_c, \alpha_c \right) \nonumber 
\end{align}
An example of such a graph sequence is the sequence of two-dimensional square grids, which are each the box product of two identical one-dimensional grids i.e. path graphs: $\text{Sq}_n = \text{Pa}_n \Box \text{Pa}_n$.
\subsection{Theory Summary}

Triangle inequalities are proven for some members of the proposed family of 
graph distortion or ``distance'' measures, 
including infinitesimal and finite diffusion time,
a power law for sparsity,
and/or a power law for time conversion factor between coarse and fine scales.
However, the case of an optimal (not power law) time conversion factor $\alpha$ needs to be investigated
by numerical experiment, and that requires new algorithms, introduced in Section \ref{sec:alg}. We also show that in the case of distances between graph box products, optimization over $P$ for the product graphs is bounded above by a monotonic function of the optimum over the component graphs.

\section{Numerical Methods for Optimal Time Conversion, $\alpha$}
\label{sec:alg}
Optimizing the $\alpha$ parameter for conversion
between coarse and fine time scales in the proposed
family of graph distance measures, in addition
to optimizing the prolongation matrix $P$ under
transitive constraints $\mathcal{C}(P)$, 
is a nontrivial numerical problem that 
in our experience seems to require new methods.
We develop such methods here and apply them
to investigate the resulting graph ``distance'' measure
in the next section.

\subsection{Algorithm Development}
\label{subsec:alg_details}

\begin{algorithm}
\caption{Algorithm for computing $\inf_{P , \alpha} \tilde{D}^2$}
\label{alg:linear_alg}
\begin{algorithmic}[1]
\Procedure {LAP-Solve-Linear}{$\lambda^{(1)}$, $\lambda^{(2)}$, $\alpha$}
    \State Compute cost matrix $C_{ij} = {\left( 
        \frac{1}{\alpha} \lambda^{(1)}_j - \alpha \lambda^{(2)}_i
    \right)}^2$
    \State Use an external solver for the LAP to find the matching matrix $M$ which minimizes total cost $\sum_{i,j} M_{ij}C_{ij}$.
    \State Return $M$.
\EndProcedure
\Procedure {Merge-Soln}{$M^{(1)}$, $M^{(2)}$,$\lambda^{(1)}$, $\lambda^{(2)}$, $\alpha_1$, $\alpha_2$}
    \State Construct $\tilde{e}^{(1)}$ and $\tilde{e}^{(2)}$ by removing $\lambda^{(1)}_j$ and $\lambda^{(2)}_i$ whenever $M^{(1)}_{ij} = M^{(2)}_{ij} = 1$ (keeping track of the original indices of each of the remaining values).
    \State Compute $\alpha^*$ such that $c\left(M^{(1)}, \lambda^{(1)}, \lambda^{(2)}, \alpha^* \right) = c\left(M^{(2)}, \lambda^{(1)}, \lambda^{(2)}, \alpha^* \right)$
    \If{$ \alpha^* \notin [\alpha_1, \alpha_2]$, or $\tilde{e}^{(1)}$ is empty}:
    \State Mark the interval $[\alpha_1, \alpha_2]$ as searched. 
    \State return $(M^{(1)}, \alpha^*)$ or $(M^{(2)}, \alpha^*)$.
    \Else 
    \State Compute $M_\text{sub}$ as the solution given by  $\textsc{LAP-Solve-Linear}(\tilde{e}^{(2)}, \tilde{e}^{(2)}, \alpha^*)$.
    \State Construct a full-size solution $M$ for the original problem by starting with $M = M^{(1)} \circ M^{(2)}$ (the elementwise product of $M^{(1)}$ and $M^{(2)}$, which is 1 exactly where they agree and 0 otherwise) and inserting a 1 for any pair $i,j$ which are matched by $M_\text{sub}$.
    \State return $(M, \alpha^*)$.
    \EndIf
\EndProcedure
\Procedure{$\tilde{D}^2$}{$G_1, G_2$}
\State Compute graph Laplacians $L_1 = L(G_1)$ and $L_2 = L(G_2)$.
\State Compute $\lambda^{(1)} = \textsc{Eigenvals}(L_1)$ and  $\lambda^{(2)} = \textsc{Eigenvals}(L_2)$.
\State Initialize the list of known optimal matchings with $(n_2 \times n_1)$ matrices $\left[ \begin{array}{c}
I_{n_1} \\ 0
\end{array} \right]$ and $\left[ \begin{array}{c}
0 \\ I_{n_1}  
\end{array} \right]$ and associated $\alpha_\text{low}$ and $\alpha_\text{high}$.

\While{there are uncovered intervals within $[\alpha_\text{low}, \alpha_\text{high}]$:}
\State Order the current known pairs $(M_i, \alpha_i)$ by $\alpha$. 
\For{each $(M_i,\alpha_i), (M_{i+1},\alpha_{i+1})$ in this list:}
    \State Compute $(M, \alpha^*) = \textsc{Merge-Soln}(M_{i},M_{i+1},\lambda^{(1)},\lambda^{(2)},\alpha_1,\alpha_2)$.
    \If{$M = M_i$ or $M = M_{i+1}$}
        \State Mark the interval $[\alpha_{i}, \alpha_{i+1}]$ as searched.
    \Else
        \State Insert $(M, \alpha^*)$ into the appropriate place in the list of $M_i$. 
    \EndIf
\EndFor
\EndWhile
\State Find the unique global minimum of the cost of each found matching as a function of $\alpha$.
\State Let $M^*$ be the matching which is minimal at its optimal $\alpha$ (out of all matchings found).
\State Return the minimum cost $\tilde{D}^2(G_1, G_2 | M^*, \alpha^*)$.
\State (alternately, return the entire frontier $\left\{ \left(M_{1}, \alpha_1\right) \ldots \left(M_{m}, \alpha_m\right) \right\}$).
\EndProcedure
\end{algorithmic}
\end{algorithm}

\begin{algorithm}
\caption{Algorithm for computing $\sup_t \inf_{P , \alpha} D^2$}
\label{alg:exponential_alg}
\begin{algorithmic}
\Procedure {LAP-Solve-Exponential}{$\lambda^{(1)}$, $\lambda^{(2)}$, $\alpha$, t}
    \State Compute cost matrix $C_{ij} = {\left( 
        e^{\frac{t}{\alpha} \lambda^{(1)}_j} - e^{\alpha t \lambda^{(2)}_i}
    \right)}^2$
    \State Use an external solver for the LAP to find the matching matrix $M$ which minimizes total cost $\sum_{i,j} M_{ij}C_{ij}$.
    \State Return $M$.
\EndProcedure
\Procedure {Merge-Soln}{$M^{(1)}$, $M^{(2)}$,$\lambda^{(1)}$, $\lambda^{(2)}$, $\alpha_1$, $\alpha_2$, t}
    \State Construct $\tilde{e}^{(1)}$ and $\tilde{e}^{(2)}$ by removing $\lambda^{(1)}_j$ and $\lambda^{(2)}_i$ whenever $M^{(1)}_{ij} = M^{(2)}_{ij} = 1$ (keeping track of the original indices of each of the remaining values).
    \State Compute $\alpha^*$ such that $c\left(M^{(1)}, \lambda^{(1)}, \lambda^{(2)}, \alpha^*, t \right) = c\left(M^{(2)}, \lambda^{(1)}, \lambda^{(2)}, \alpha^*, t \right)$
    \If{$ \alpha^* \notin [\alpha_1, \alpha_2]$, or $\tilde{e}^{(1)}$ is empty}:
    \State Mark the interval $[\alpha_1, \alpha_2]$ as searched. 
    \State return $(M^{(1)}, \alpha^*)$ or $(M^{(2)}, \alpha^*)$.
    \Else 
    \State Compute $M_\text{sub}$ as the solution given by  $\textsc{LAP-Solve-Exponential}(\tilde{e}^{(2)}, \tilde{e}^{(2)}, \alpha^*, t)$.
    \State Construct a full-size solution $M$ for the original problem by starting with $M = M^{(1)} \circ M^{(2)}$ (the elementwise product of $M^{(1)}$ and $M^{(2)}$, which is 1 exactly where they agree and 0 otherwise) and inserting a 1 for any pair $i,j$ which are matched by $M_\text{sub}$.
    \State return $(M, \alpha^*)$.
    \EndIf
\EndProcedure

\Procedure{T-Step}{$\lambda^{(1)}$, $\lambda^{(2)}$, \textsc{Frontier} $= \left\{ \left( M_1, \alpha_1 \right), \left( M_2, \alpha_2 \right) \ldots \left( M_m, \alpha_m \right) \right\}$, t}
\While{\textsc{Frontier} is still expanding}
\For{each $(M_i,\alpha_i), (M_{i+1},\alpha_{i+1})$ in \textsc{Frontier}}
    \If{$M_i$ and $M_{i+1}$ have already been checked}
        \State pass
    \Else
        \State Compute $(M, \alpha^*) = \textsc{Merge-Soln}(M_{i},M_{i+1},\lambda^{(1)},\lambda^{(2)},\alpha_{i},\alpha_{i+1}, t)$.
        \If{$M = M_i$ or $M = M_{i+1}$}
            \State pass
        \Else
            \State Insert $(M, \alpha^*)$ into the appropriate place in \textsc{Frontier}. 
        \EndIf
    \EndIf
\EndFor
\EndWhile
\State Return \textsc{Frontier}.
\EndProcedure

\Procedure{$D^2$}{$G_1, G_2$}
\State Compute graph Laplacians $L_1 = L(G_1)$ and $L_2 = L(G_2)$.
\State Compute $\lambda^{(1)} = \textsc{Eigenvals}(L_1)$ and  $\lambda^{(2)} = \textsc{Eigenvals}(L_2)$.
\State Initialize \textsc{Frontier} as the frontier of $\left\{ \left(M_i, \alpha_i\right)\right\}$ found by Algorithm \ref{alg:linear_alg}.
\State Set $t_c = t_\text{init}$ (some low value e.g. $10^{-3}$).
\State Compute $D^2_\text{max} = \min_i D^2(G_1, G_2 | t_c, \alpha_i, M_i)$
\While{$D^2_\text{max}$ is increasing}
    \State $t_c += \Delta t$
    \State $\textsc{Frontier} = \textsc{T-Step}(\lambda^{(1)}, \lambda^{(2)}, \textsc{Frontier}, t_c)$
    \State $D^2_\text{max} = \max_i D^2(G_1, G_2 | t_c, \alpha_i, M_i)$
\EndWhile
\State Return $\sup_t \min_i D^2(G_1, G_2 | t_c, \alpha_i, M_i )$
\EndProcedure
\end{algorithmic}

\end{algorithm}

In this section, we describe the algorithm used to calculate upper bounds on graph distances as the joint optima (over $P$, $t$, and $\alpha$) of the distance equations Equation \ref{defn:exp_defn} and Equation \ref{defn:linear_defn}, under orthogonality constraints only, i.e. the case $\mathcal{C}(P) = \{ P | P^T P = I \}$. At the core of both algorithms is a subroutine to solve the Linear Assignment Problem (LAP - see Equation \eqref{eqn:lap_defn}) repeatedly, in order to find the subpermutation matrix which is optimal at a particular value of $\alpha$. Namely, we are interested in calculating $\tilde{D}$ as  
\begin{equation}
    \begin{aligned}
    \tilde{D}(G_1, G_2) &= \min_\alpha f(\alpha) \\
    \text{where} \\
    f(\alpha) &= \inf_{P | P^T P = I} \left| \left| 
    \frac{1}{\alpha} P L(G_1) - \alpha L(G_2) P
    \right| \right| \\
    &\text{which, for orthogonality or any other compact constraint} \\
    &= \min_{P | P^T P = I} \left| \left| 
    \frac{1}{\alpha} P L(G_1) - \alpha L(G_2) P
    \right| \right|. \\
    \end{aligned}
\end{equation}

However, we have found that the unique structure of this optimization problem admits a specialized procedure which is faster and more accurate than nested univariate optimization of $\alpha$ and $t$ (where each innermost function evaluation consists of a full optimization over $P$ at some $t, \alpha$). We first briefly describe the algorithm used to find the optimal $P$ and $\alpha$ for $\tilde{D}^2$. The formal description of the algorithm is given by Algorithm \ref{alg:linear_alg}. In both cases, we reduce the computational complexity of the optimization over $P$ by imposing the additional constraint that $P$ must be a subpermutation matrix when rotated into the spectral basis (we define subpermutations in the proof of Theorem \ref{thm:LAP_bound}). This constraint is compatible with the orthogonality constraint (all subpermutation matrices are orthogonal, but not vice versa). The tradeoff of this reduction of computational complexity is that we can only guarantee that our optima are upper bounds of the optima over all orthogonal $P$. However, in practice, this bound seems to be tight: we have yet to find an example where orthogonally-constrained optimization was able to improve in objective function value over optimization constrained to subpermutation matrices. Therefore, we shall for the remainder of this paper refer to the optima calculated as distance values, when strictly they are distance upper bounds. We also note here that a distance lower bound is also possible to calculate by \emph{relaxing} the constraints in $\mathcal{C}(P)$ (for instance, by replacing the optimization over all $P$ with a less constrained matching problem - see Appendix \ref{appndx:slb}). 
\begin{figure}
    \centering
      \Large (a) \raisebox{-0.5\height}{\includegraphics[width=.65\linewidth]{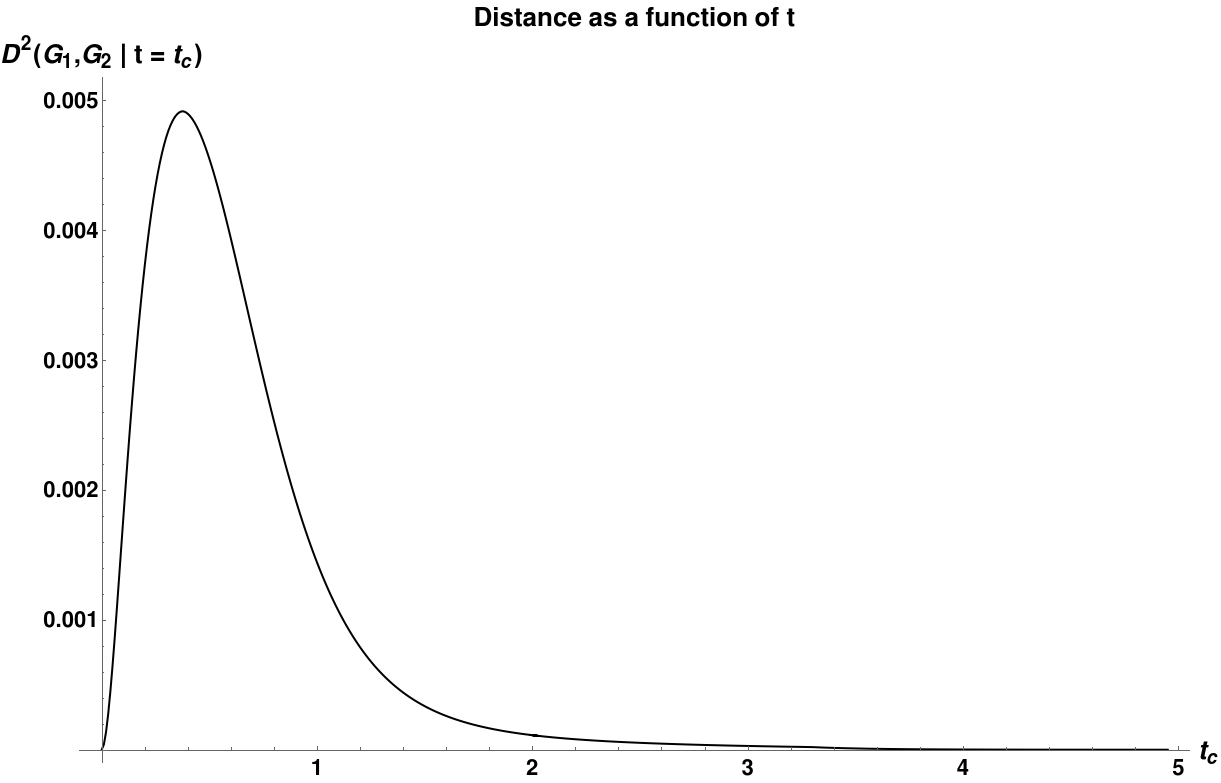}} \\
      \Large (b) \raisebox{-0.5\height}{\includegraphics[width =.65\linewidth]{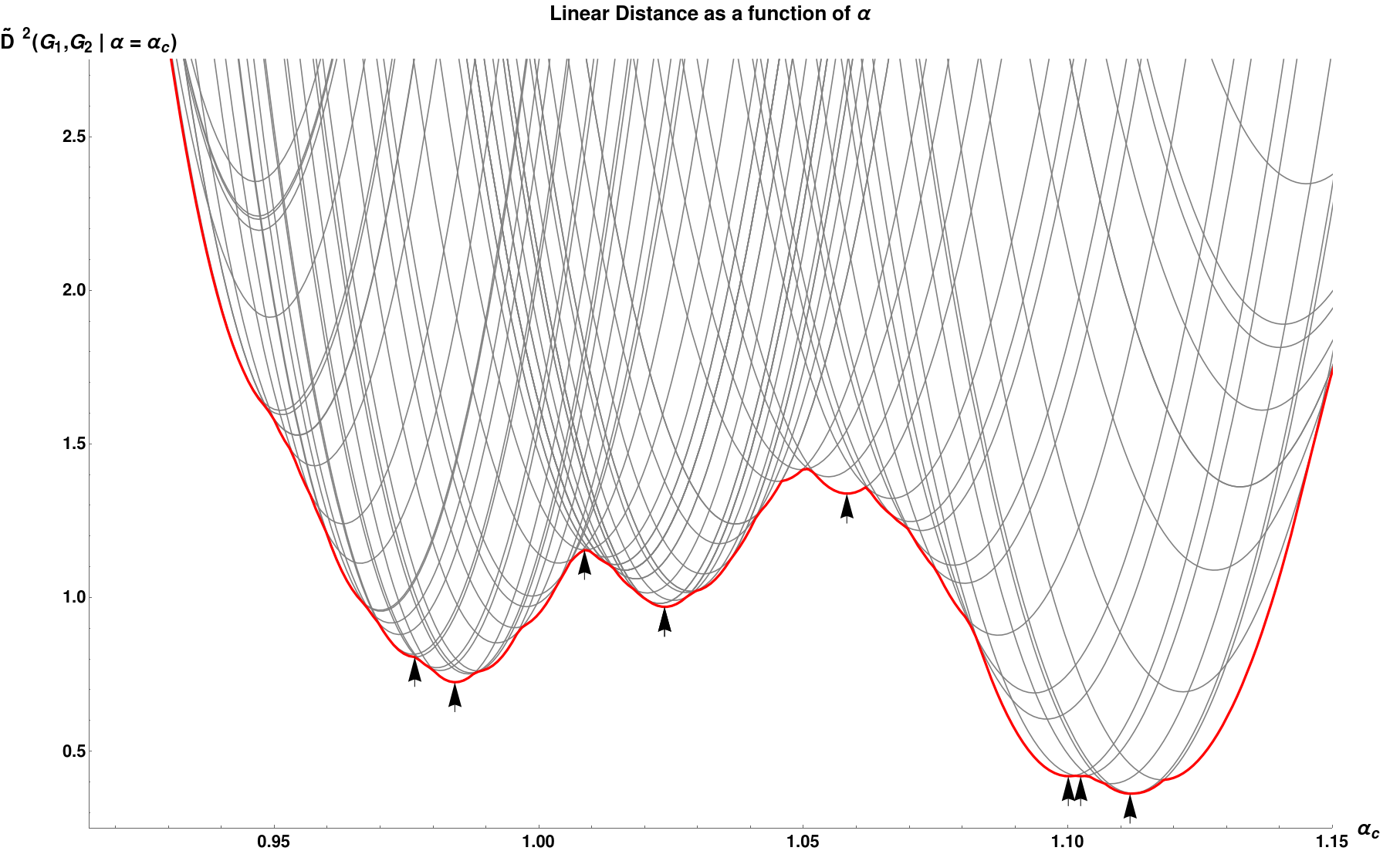}} \\
     \Large (c) \raisebox{-0.5\height}{\includegraphics[width = .65\linewidth]{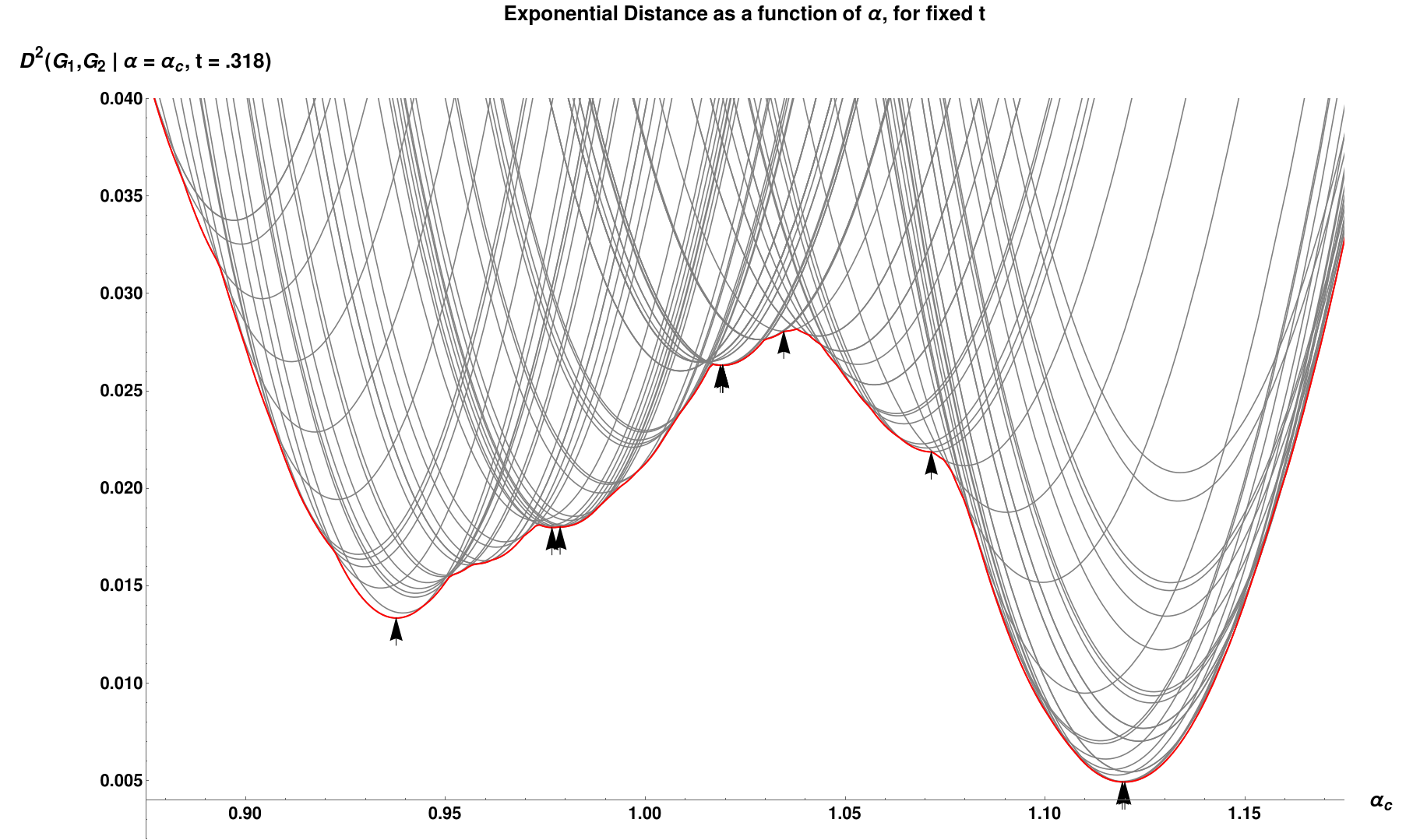}}

    \caption[lh]{Three plots demonstrating characteristics of distance calculation between a $(7 \times 7)$ grid and an $(8 \times 8)$ grid. \\ (a): a plot illustrating unimodality of $D^2 \left(\left.\text{Sq}_7, \text{Sq}_8 \right| t \right) = \inf_{\alpha > 0} \inf_{P | \mathcal{C}(P)} {\left|\left| P e^{\frac{t}{a} L(\text{Sq}_7)} - e^{t \alpha L(\text{Sq}_8)} P\right| \right|}_F^2$ as a function of $t$. The peak, at $t \approx .318$, yields the distance $D^2 \left(\text{Sq}_7, \text{Sq}_8 \right)$. \\
    (b): Plot illustrating the discontinuity and multimodality of the linear version of distance. Each gray curve represents a function $f_{P_c}(\alpha_c) = \tilde{D}^2 \left( \text{Sq}_7, \text{Sq}_8 |  \alpha_c, P_c \right)$. The thicker curve is the lower convex hull of the thinner curves as a function of $\alpha$, that is: $\boldsymbol{f}(\alpha_c) = \inf_{P|\mathcal{C}(P)} \tilde{D}^2 \left( \text{Sq}_7, \text{Sq}_8 | \alpha_c \right)$. We see that $\boldsymbol{f}(\alpha)$ is continuous, but has discontinuous slope, as well as several local optima (marked by arrowheads). These properties make $\tilde{D}$ difficult to optimize, necessitating the development of Algorithm \ref{alg:linear_alg}. \\
    (c): As in (b), but with $D^2 \left( \text{Sq}_7, \text{Sq}_8 | t = .318 \right)$ plotted instead of $\tilde{D}^2$. This $t$ value is the location of the maximum in the leftmost plot.}
    \label{fig:distance_example_fig}
\end{figure}

\subsubsection{Optimization of $\tilde{D}^2$}

Joint optimization of $\tilde{D}^2$ over $\alpha$ and $P$ is a nested optimization problem (see \cite{onoyama2000method} and \cite{sinha2018review} for a description of nested optimization), with potential combinatorial optimization over $P$ dependent on each choice of $\alpha$. Furthermore, the function $f(\alpha) = \inf_{P| \mathcal{C}(P)} \tilde{D}^2(G_1, G_2 | \alpha)$ is both multimodal and continuous but with in general discontinuous derivative (See Figure \ref{fig:distance_example_fig}). Univariate optimization procedures such as Golden Section Search result in many loops of some procedure to optimize over $P$, which in our restricted case must each time compute a full solution to a LAP with $n_2 \times n_1$ weights. In our experience, this means that these univariate methods have a tendency to get stuck in local optima. We reduce the total number of calls to the LAP solver, as well as the size of the LAPs solved, by taking advantage of several unique properties of the optimization as a function of $\alpha$. When the optimal $P^{(1)}$ and $P^{(2)}$ are known for $\alpha_1$ and $\alpha_2$, then for any $\alpha_c$ such that $\min (\alpha_1, \alpha_2) \leq \alpha_c \leq \max (\alpha_1, \alpha_2)$, the optimal $P^{(c)}$ at $\alpha_c$ must satisfy: $P^{(1)}_{ij} = 1 \wedge P^{(2)}_{ij} = 1 \implies P^{(c)}_{ij} = 1 $  (see Theorem \ref{thm:matching_agreement}).  Thus, the optimization over $P$ at $\alpha_c$ is already partially solved given the solutions at $\alpha_1$ and $\alpha_2$, and so we need only re-compute the remaining (smaller) subproblem on the set of assignments where $P^{(1)}$ and $P^{(2)}$ disagree. This has two consequences for our search over $\alpha$: First, the size of LAP problems which must be solved at each step decreases over time (as we find $P$-optima for a denser and denser set of $\alpha$). Secondly, these theoretical guarantees mean that we can mark intervals of $\alpha$-values as being explored (meaning we have provably found the $P$ which are optimal over the interval) and thus do not have to perform the relatively expensive optimization over $P$ for any $\alpha$ in that interval.

\subsubsection{Optimization of $D^2$}
Many of the theoretical guarantees underlying our algorithm for computing $\tilde{D}^2$ no longer hold for the exponential version of the distance. We adapt our linear-version procedure into an algorithm for computing this version, with the caveat that the lack of these guarantees means that our upper bound on the exponential version may be looser than that on the linear version. It is still clearly an upper bound, since the $\alpha$ and $P$ found by this procedure satisfy the given constraints $\alpha > 0$ and $P^T P = I$. In particular, we have observed cases where the exponential-distance analog of Theorem \ref{thm:matching_agreement} would not hold, meaning we cannot rule out $\alpha$-intervals as we can in the linear version. Thus, this upper bound may be looser than the that computed for the linear objective function. 

For the exponential version of the algorithm, we first compute the list of optimal $P$ for the linear version, assuming (since $e^{t L} \approx I + L$ for very small $t$) that this is also the list of optimal $P$ for the exponential version of the objective function at some low $t$. We proceed to increment $t$ with some step size $\Delta t$, in the manner of a continuation method \cite{allgower2012numerical}. At each new $t$ value, we search for new optimal $P$ along the currently known frontier of optima as a function of $\alpha$. When a new $P$ is found as the intersection of two known $P_{i}, P_{i+1}$, it is inserted into the list, which is kept in order of increasing $\alpha$. For each $P$ in this frontier, we find the optimal $\alpha$, keeping $P$ and $t$ constant. Assuming $\inf_P \inf_\alpha D^2(G_1, G_2 | t_c)$ is unimodal as a function of $t_c$, we increase $t_c$ until $\inf_P \inf_\alpha D^2(G_1, G_2 | t_c) \geq \inf_P \inf_\alpha D^2(G_1, G_2 | t_c + \Delta t)$, storing all $P$ matrices found as optima at each $t_c$ value. $P$ which were on the lower convex hull at some prior value of $t$ but not the current value are retained, as they may regain optimality for some $\alpha$-range at a future value of $t$ (we have observed this, in practice). For this list $P_1, P_2 \ldots P_m$, we then compute $\sup_t \inf_\alpha \inf_i D^2(G_1, G_2 | P_i)$. Since the exponential map is continuous, and we are incrementing $t$ by very small steps, we also propose the further computational shortcut of storing the list of optimal $\alpha$ at time $t$ to use as starting points for the optimization at $t + \Delta t$. In practice, this made little difference in the runtime of our optimization procedure. 

\subsection{Algorithm Correctness Proof}
    \label{subsec:alg_correct}
    \begin{theorem}
    \label{thm:LAP_bound}
    For any two graphs $G_1$ and $G_2$ with Laplacians $L(G_1)$ and $L(G_2)$, for fixed $\alpha$, the optimization over $P$ given in the innermost loop of Equation \ref{defn:linear_defn} is upper bounded by a Linear Assignment Problem as defined in Equation \eqref{eqn:lap_defn}. This LAP is given by taking $R$ to be the eigenvalues $\lambda^{(1)}_j$ of $L(G_1)$ and $S$ to be the eigenvalues $\lambda^{(2)}_i$ of $L(G_2)$, with the cost of a pair (equivalently, one entry of the cost matrix $C$) given by 
    \begin{align}
        \label{eqn:cost_matrix_entry}
        C_{ij} = c(s_i, r_j) = c\left(\lambda^{(2)}_i, \lambda^{(1)}_j\right) = {\left( 
        \frac{1}{\alpha} {\lambda^{(1)}_j} - 
        \alpha {\lambda^{(2)}_i} \right)}^2
    \end{align}
    \end{theorem}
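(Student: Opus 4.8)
The plan is to reduce the matrix optimization to a scalar cost-minimization over assignments by rotating both Laplacians into their spectral bases. Since $L(G_1)$ and $L(G_2)$ are real symmetric, the spectral theorem supplies orthogonal diagonalizations $L(G_1) = U_1 \Lambda_1 U_1^T$ and $L(G_2) = U_2 \Lambda_2 U_2^T$, where $\Lambda_1 = \text{diag}(\lambda^{(1)}_j)$ and $\Lambda_2 = \text{diag}(\lambda^{(2)}_i)$ and the $U_i$ are square. First I would introduce the change of variables $Q = U_2^T P U_1$. Because $U_1, U_2$ are orthogonal, $Q^T Q = U_1^T P^T P U_1$, so $Q$ satisfies $Q^T Q = I$ exactly when $P$ does, and $P \mapsto Q$ is a bijection of the orthogonality-constrained feasible set onto itself.

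Next I would substitute $P = U_2 Q U_1^T$ and use orthogonal invariance of the Frobenius norm. Cancelling factors $U_i^T U_i = I$ gives
\[
\frac{1}{\alpha} P L(G_1) - \alpha L(G_2) P = U_2 \left( \frac{1}{\alpha} Q \Lambda_1 - \alpha \Lambda_2 Q \right) U_1^T ,
\]
and since left and right multiplication by orthogonal matrices preserves $\| \cdot \|_F$ (by the cyclic property of the trace), the objective equals $\left\| \frac{1}{\alpha} Q \Lambda_1 - \alpha \Lambda_2 Q \right\|_F^2$. Because $\Lambda_1, \Lambda_2$ are diagonal, the $(i,j)$ entry of the bracketed matrix is $Q_{ij}\bigl( \frac{1}{\alpha}\lambda^{(1)}_j - \alpha \lambda^{(2)}_i \bigr)$, so the whole objective collapses to the weighted sum $\sum_{i,j} Q_{ij}^2 \, C_{ij}$ with exactly the cost matrix of Equation \eqref{eqn:cost_matrix_entry}. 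This is the crux of the argument: the matrix problem becomes a quadratic form in the entries of $Q$ weighted by scalar spectral costs.

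Then I would produce the bound by \emph{restricting} the feasible set. Define a subpermutation matrix as a $0$-$1$ matrix with exactly one $1$ in each of its $n_1$ columns and at most one $1$ in each row (an injection of the $G_1$-spectrum into the $G_2$-spectrum); such a matrix has orthonormal columns and hence satisfies $Q^T Q = I$, so subpermutations form a subset of the orthogonally-constrained $Q$. Minimizing over this smaller set can only raise the optimum, giving $\inf_{Q^T Q = I} \sum_{ij} Q_{ij}^2 C_{ij} \leq \min_{Q \text{ subperm}} \sum_{ij} Q_{ij}^2 C_{ij}$. On $0$-$1$ matrices $Q_{ij}^2 = Q_{ij}$, so the right-hand objective is linear, $\sum_{ij} Q_{ij} C_{ij}$, and minimizing it over subpermutations that match all $n_1$ columns is precisely the (rectangular) LAP of Equation \eqref{eqn:lap_defn} with worker and job sets equal to the two spectra and costs $C_{ij}$. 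Chaining the equalities with this single inequality yields the claimed upper bound.

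The main obstacle I anticipate is keeping the direction of the inequality straight and justifying feasibility. It is tempting to \emph{relax} $Q^T Q = I$ to the doubly-substochastic constraints $\sum_i Q_{ij}^2 = 1$, $\sum_j Q_{ij}^2 \leq 1$, which the entries of any orthogonal $Q$ do satisfy since its columns are orthonormal and $Q Q^T$ is a projection with diagonal in $[0,1]$; but that relaxation produces a \emph{lower} bound, the opposite of what is wanted. The correct move is to shrink rather than enlarge the feasible set, so the delicate point is verifying that subpermutation matrices genuinely satisfy the orthogonality constraint and therefore correspond, after rotating back by $P = U_2 Q U_1^T$, to admissible prolongation matrices. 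This guarantees that the LAP optimum is attained by a legitimate feasible $P$ and hence honestly upper-bounds the infimum over all orthogonal $P$.
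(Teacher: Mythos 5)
Your proposal is correct and follows essentially the same route as the paper's proof: diagonalize both Laplacians, rotate by $\tilde{P} = U_2^T P U_1$ using orthogonal invariance of the Frobenius norm, reduce the objective to $\sum_{i,j} \tilde{p}_{ij}^2 \left( \frac{1}{\alpha}\lambda^{(1)}_j - \alpha \lambda^{(2)}_i \right)^2$, and then restrict the feasible set to subpermutation matrices so that the infimum can only increase and the restricted problem is exactly the stated LAP. Your derivation of the entrywise identity is in fact slightly more direct than the paper's (which expands the norm via trace terms and recombines using $\sum_i \tilde{p}_{ij}^2 = 1$), but this is a cosmetic algebraic difference, and your closing discussion of why restriction rather than relaxation gives the correct inequality direction matches the paper's treatment (the relaxation you mention is exactly what the paper uses for its lower bound in Appendix A).
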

    \begin{proof}
    $L(G_1)$ and $L(G_2)$ are both real symmetric matrices, so they may be diagonalized as $L(G_i) = U_i \Lambda_i U_i^T$, where the $U_i$ are rotation matrices, and the $\Lambda_i$ are diagonal matrices with the eigenvalues $\lambda^{(i)}_1, \lambda^{(i)}_2 \ldots \lambda^{(i)}_{n_i}$ along the diagonal. 
    Because the Frobenius norm is invariant under rotation, we have:
    \begin{align}
        \tilde{D}^2(G_1,G_2) &= \inf_{\alpha > 0} \inf_{P^T P = I} {\left| \left| \frac{1}{\alpha} P L(G_1) - \alpha L(G_2) P \right| \right|}^2_F \nonumber \\
        &= \inf_{\alpha > 0} \inf_{P^T P = I} {\left| \left| \frac{1}{\alpha} U_2^T P L(G_1) U_1 - \alpha U_2^T L(G_2) P U_1 \right| \right|}^2_F \nonumber \\
        &= \inf_{\alpha > 0} \inf_{P^T P = I} {\left| \left| \frac{1}{\alpha} U_2^T P U_1 \Lambda_1 U_1^T U_1 - \alpha U_2^T U_2 \Lambda_2 U_2^T P U_1 \right| \right|}^2_F \label{eqn:diagonal} \\
        &= \inf_{\alpha > 0} \inf_{P^T P = I} {\left| \left| \frac{1}{\alpha} U_2^T P U_1 \Lambda_1 - \alpha \Lambda_2 U_2^T P U_1 \right| \right|}^2_F. \nonumber \\
        \intertext{Because the $U_i$ are orthogonal, the transformation $ \tilde{P} = U_2^T P U_1$ preserves orthogonality, so}
        \tilde{D}^2(G_1,G_2) &= \inf_{\alpha > 0} \inf_{P^T P = I} {\left| \left| \frac{1}{\alpha} P \Lambda_1 - \alpha \Lambda_2 P \right| \right|}^2_F \nonumber \\
        &= \inf_{\alpha > 0} \inf_{P^T P = I} 
        {\left| \left| \frac{1}{\alpha} \Lambda_1 \right| \right|}^2_F + 
        {\left| \left| \frac{}{} \alpha \Lambda_2 P  \right| \right|}^2_F -
        2 \text{Tr} \left[ P^T \Lambda_2 P \Lambda_1 \right] \nonumber \\
        &= \inf_{\alpha > 0} \inf_{P^T P = I} \left(
        \text{Tr} \left[ \frac{1}{\alpha^2} \Lambda_1^2 \right]  + 
        \text{Tr} \left[ \alpha^2 P^T  \Lambda_2^2 P \right] -
        2 \text{Tr} \left[ P^T \Lambda_2 P \Lambda_1 \right] \right) \nonumber \\
        \intertext{writing $P = [p_{ij}]$,}
        \tilde{D}^2(G_1,G_2) &= \inf_{\alpha > 0} \inf_{P^T P = I} \left(
        \frac{1}{\alpha^2} \sum\limits_{j=1}^{n_1} {\lambda^{(1)}_j}^2  + 
        \alpha^2 \sum\limits_{i=1}^{n_2}\sum\limits_{j=1}^{n_1} p_{ij}^2 {\lambda^{(2)}_i}^2 \right. \label{eqn:f_terms} \\*
        &\left. \qquad \qquad \qquad \qquad \qquad - 2 \sum\limits_{i=1}^{n_2}\sum\limits_{j=1}^{n_1} p_{ij}^2 {\lambda^{(2)}_i} {\lambda^{(1)}_j}
        \right) \nonumber \\*
        &= \inf_{\alpha > 0} \inf_{P^T P = I} \left(
        \sum\limits_{i=1}^{n_2}\sum\limits_{j=1}^{n_1}
        p_{ij}^2 \left( 
        \frac{1}{\alpha^2} {\lambda^{(1)}_j}^2 - 
        2 {\lambda^{(2)}_i} {\lambda^{(1)}_j} +
        \alpha^2 {\lambda^{(2)}_i}^2 \right)
        \right) \nonumber \\
        &= \inf_{\alpha > 0} \inf_{P^T P = I} \left(
        \sum\limits_{i=1}^{n_2}\sum\limits_{j=1}^{n_1}
        p_{ij}^2 {\left( 
        \frac{1}{\alpha} {\lambda^{(1)}_j} - 
        \alpha {\lambda^{(2)}_i} \right)}^2
        \right) \label{eqn:simple_d_tilde}
    \end{align}
    For any given $\alpha$, 
    \begin{align}
     \inf_{P^T P = I} \left(
        \sum\limits_{i=1}^{n_2}\sum\limits_{j=1}^{n_1}
        p_{ij}^2 {\left( 
        \frac{1}{\alpha} {\lambda^{(1)}_j} - 
        \alpha {\lambda^{(2)}_i} \right)}^2
        \right) \\
        \qquad \leq
    \inf_{\tilde{P} | \text{subperm}(\tilde{P})} \left(
        \sum\limits_{i=1}^{n_2}\sum\limits_{j=1}^{n_1}
        \tilde{p}_{ij}^2 {\left( 
        \frac{1}{\alpha} {\lambda^{(1)}_j} - 
        \alpha {\lambda^{(2)}_i} \right)}^2
        \right) \quad , \nonumber
    \end{align}
    
    \noindent where $\text{subperm}(\tilde{P})$ could be any other condition more strict than the constraint $P^T P = I$. Here we take this stricter constraint to be 
    the condition that $\tilde{P}$ is a {\it subpermutation matrix}: an orthogonal matrix (i.e.  $\tilde{P}^T \tilde{P}=I)$ 
    for which $\tilde{P} \in \{0,1\}^{n_2 \times n_1}$. 
    Equivalently, a subpermutation matrix is a $\{0,1\}$-valued matrix $[\tilde{p}_{ij}] $ such that for each $i \in \{1, \ldots n_1 \leq n_2 \}$,
    exactly one $j \in \{1, \ldots n_2 \geq n_1\}$ takes the value 1 rather than 0 (so $\sum_{j=1}^{n_2} \tilde{P}_{j i}=1$), and 
    for each $j \in \{1, \ldots n_2 \geq n_1 \}$,
    either zero or one $i \in \{1, \ldots n_1 \leq n_2\}$ takes the value 1 rather than 0
    (so $\sum_{i=1}^{n_1} \tilde{P}_{j i} \leq 1$).

     Furthermore, this optimization is exactly a linear assignment problem of eigenvalues of $L(G_1)$ to $L(G_2)$, with the cost of a pair $\left(\lambda^{(1)}_j, \lambda^{(2)}_i \right)$ given by 
    \[
        c\left(\lambda^{(1)}_j, \lambda^{(2)}_i \right) =
        {\left( 
        \frac{1}{\alpha} {\lambda^{(1)}_j} - 
        \alpha {\lambda^{(2)}_i} \right)}^2
         \]
         
    Note also that the same argument applies to the innermost two optimizations of the calculation of $D^2$ (the exponential version of the diffusion distance) as well as $D^2_r$. In the $D^2$ case the entries of the cost matrix are instead given by 
    \[
        c\left(\lambda^{(1)}_j, \lambda^{(2)}_i \right) =
        {\left( 
        e^{\frac{1}{\alpha} {\lambda^{(1)}_j}} - 
        e^{\alpha {\lambda^{(2)}_i}} \right)}^2
         \]
     If we instead loosen the constraints on $P$, we can calculate a lower bound on the distance. See Appendix \ref{appndx:slb} for lower bound details. 
    
    \end{proof}
    Recall that our definition of a `matching' in Section \ref{subsec:defns} was a $P$ matrix representing a particular solution to the linear assignment problem with costs given as in Equation \eqref{eqn:cost_matrix_entry}. For given $G_1, G_2$, and some matching $M$, let 
    \begin{align}
        \label{eqn:f_alpha_defn}
        f_M(\alpha) = \tilde{D}^2 (G_1, G_2 | \alpha, U_2^T M U_1)
    \end{align}
    where $U_1, U_2$ diagonalize $L_1$ and $L_2$ as in Equation \eqref{eqn:diagonal}. 
    \begin{lemma}
    \label{lem:unique_soln}
    For two unique matchings $M_1$ and $M_2$ (for the same $G_1, G_2$) the equation $f_{M_1}(\alpha) - f_{M_2}(\alpha) = 0$ has at most one real positive solution in $\alpha$. This follows from the fact that when $P$ and $t$ are fixed, the objective function is a rational function in $\alpha$ (see Equation \eqref{eqn:f_terms}), with a quadratic numerator and an asymptote at $\alpha = 0$.
    \end{lemma}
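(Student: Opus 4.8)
The plan is to specialize the expanded objective in Equation~\eqref{eqn:f_terms} to a matching and read off the exact $\alpha$-dependence of $f_M$; the crucial point is that the $1/\alpha^2$ contribution is the same for every matching, so that it cancels when two such functions are subtracted. First I would note that in Equation~\eqref{eqn:f_terms} the leading term $\tfrac{1}{\alpha^2}\sum_{j}{(\lambda^{(1)}_j)}^2 = \tfrac{1}{\alpha^2}\Tr[\Lambda_1^2]$ already carries no dependence on $P$ --- the orthogonality $P^T P = I$ used to reach that line has eliminated $P$ from it entirely. Specializing $P = U_2^T M U_1$ to a subpermutation $M$ (so that $p_{ij}^2 = p_{ij}$, per the definition in Theorem~\ref{thm:LAP_bound}) therefore only affects the remaining two terms, giving
\begin{align}
f_M(\alpha) = \frac{A}{\alpha^2} + \alpha^2 B_M - 2\,C_M, \qquad A = \sum_{j}{\left(\lambda^{(1)}_j\right)}^2, \quad B_M = \sum_{i,j} p_{ij}{\left(\lambda^{(2)}_i\right)}^2, \quad C_M = \sum_{i,j} p_{ij}\,\lambda^{(2)}_i \lambda^{(1)}_j,
\end{align}
where $A$ is independent of the matching.

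I would then form the difference for two matchings $M_1, M_2$. Since the $A/\alpha^2$ terms are identical they cancel, leaving
\begin{align}
f_{M_1}(\alpha) - f_{M_2}(\alpha) = \left(B_{M_1} - B_{M_2}\right)\alpha^2 - 2\left(C_{M_1} - C_{M_2}\right),
\end{align}
which is an affine function of $u = \alpha^2$. Setting it to zero gives $\left(B_{M_1}-B_{M_2}\right)u = 2\left(C_{M_1}-C_{M_2}\right)$; when $B_{M_1} \neq B_{M_2}$ this determines a single value of $u$, which corresponds to a positive $\alpha = \sqrt{u}$ exactly when that value is positive and to no positive $\alpha$ otherwise. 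This is precisely the behavior described in the statement: each $f_M$, regarded as a function of $\alpha^2$, is a rational function with quadratic numerator $B_M\,u^2 - 2C_M\,u + A$ and a simple pole at $\alpha = 0$; subtracting two of them cancels the shared constant term $A$ in the numerator, and the factor of $u$ then cancels the pole, leaving a degree-one polynomial, which has at most one root.

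The one case requiring care --- and the main obstacle to an unconditional statement --- is the degenerate situation $B_{M_1} = B_{M_2}$ and $C_{M_1} = C_{M_2}$, in which the affine difference vanishes identically and $f_{M_1} \equiv f_{M_2}$. This can only happen if the two matchings produce identical cost curves, in which case they are interchangeable for the frontier search and are not genuinely distinct for the algorithm's purposes; reading ``two unique matchings'' as two matchings with distinct objective functions removes this case, and the affine-in-$\alpha^2$ argument then yields ``at most one positive root'' unconditionally. I would be careful to flag that the entire reduction hinges on the $P$-independence of the $1/\alpha^2$ term, which is where the orthogonality (and hence subpermutation) constraint does its work.
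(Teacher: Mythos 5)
Your proof is correct and follows essentially the same route as the paper's: expand $f_M$ via Equation \eqref{eqn:f_terms}, observe that the $\frac{1}{\alpha^2}\sum_j {\left(\lambda^{(1)}_j\right)}^2$ term is matching-independent and cancels in the difference, and conclude that the difference is affine in $\alpha^2$ and so has at most one positive root in $\alpha$. Your explicit treatment of the degenerate case $B_{M_1}=B_{M_2}$, $C_{M_1}=C_{M_2}$ (identical cost curves) is a minor refinement the paper leaves implicit in its formula $\alpha = \pm\sqrt{(C_2-C_1)/(A_1-A_2)}$, but it does not change the substance of the argument.
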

    \begin{proof}
    By Equation \eqref{eqn:f_terms}, we have  
    \begin{align}
        & f_{M_1}(\alpha) - f_{M_2}(\alpha) = \nonumber \\
        & \left(
        \frac{1}{\alpha^2} \sum\limits_{j=1}^{n_1} {\lambda^{(1)}_j}^2  + 
        \alpha^2 \sum\limits_{i=1}^{n_2}\sum\limits_{j=1}^{n_1} {[M_1]}_{ij}^2 {\lambda^{(2)}_i}^2 -
        2 \sum\limits_{i=1}^{n_2}\sum\limits_{j=1}^{n_1} {[M_1]}_{ij}^2 {\lambda^{(2)}_i} {\lambda^{(1)}_j}
        \right) \\
        & \quad - \left(
        \frac{1}{\alpha^2} \sum\limits_{j=1}^{n_1} {\lambda^{(1)}_j}^2  + 
        \alpha^2 \sum\limits_{i=1}^{n_2}\sum\limits_{j=1}^{n_1} {[M_2]}_{ij}^2 {\lambda^{(2)}_i}^2 -
        2 \sum\limits_{i=1}^{n_2}\sum\limits_{j=1}^{n_1} {[M_2]}_{ij}^2 {\lambda^{(2)}_i} {\lambda^{(1)}_j}
        \right) \\
        &= 
        \alpha^2 \left( \sum\limits_{i=1}^{n_2}\sum\limits_{j=1}^{n_1} {[M_1]}_{ij}^2 {\lambda^{(2)}_i}^2
        - \sum\limits_{i=1}^{n_2}\sum\limits_{j=1}^{n_1} {[M_2]}_{ij}^2 {\lambda^{(2)}_i}^2 \right) \\
        & \quad + \left(
        2 \sum\limits_{i=1}^{n_2}\sum\limits_{j=1}^{n_1} {[M_2]}_{ij}^2 {\lambda^{(2)}_i} {\lambda^{(1)}_j}
        -
        2 \sum\limits_{i=1}^{n_2}\sum\limits_{j=1}^{n_1} {[M_1]}_{ij}^2 {\lambda^{(2)}_i} {\lambda^{(1)}_j}
        \right)
        \end{align}
        \begin{align}
        \intertext{Abbreviating the sums, we have}
        \alpha^2 \left(A_1 - A_2\right) + \left(C_2 - C_1\right) &= 0 \\
        \intertext{and so}
        \alpha = \pm \sqrt{\frac{C_2 - C_1}{A_1 - A_2}}
    \end{align}
    Since $A_1, A_2, C_1, C_2$ are all nonnegative reals, at most one of these roots is positive.
    \end{proof}
    We will say that a matching $M$ ``assigns'' $j$ to $i$ if and only if $M_{ij} = 1$.
    \begin{theorem}
    \label{thm:matching_agreement}
    If two matchings $M_1$ and $M_3$ which yield optimal upper bounds for the linear distance $\tilde{D}^2$ (at $\alpha_1 \leq \alpha$ and $\alpha_3 \geq \alpha$ respectively) agree on a set of assignments, then the optimal $M$ at $\alpha$ must also agree with that set of assignments.
    \end{theorem}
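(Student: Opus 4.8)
The plan is to pass from the matrix formulation to the eigenvalue-matching picture of Theorem~\ref{thm:LAP_bound}, and then run an exchange (``uncrossing'') argument whose engine is the single-crossing property established in Lemma~\ref{lem:unique_soln}. First I would fix $G_1,G_2$, diagonalize, and regard each candidate $M$ as a subpermutation of the eigenvalues with total cost
\[
f_M(\beta)=\sum_{M_{ij}=1}\Big(\tfrac{1}{\beta}\lambda^{(1)}_j-\beta\lambda^{(2)}_i\Big)^2 .
\]
Since every column is always matched, the coefficient $\sum_j (\lambda^{(1)}_j)^2$ of $1/\beta^2$ is identical for every $M$; this is exactly the cancellation behind Lemma~\ref{lem:unique_soln}, and it is why any two cost curves meet at most once for $\beta>0$.

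The key observations are the closed forms for the cost change of elementary moves. If two fixed rows $i^*,p$ and two fixed columns $j^*,q$ are merely relinked, the $1/\beta^2$ and $\beta^2$ contributions both cancel and the gap is the $\beta$-\emph{independent} Monge quantity $-2(\lambda^{(1)}_{j^*}-\lambda^{(1)}_q)(\lambda^{(2)}_{i^*}-\lambda^{(2)}_p)$; if instead a column is moved between two rows (so one row enters and one leaves the matching), the gap has the form $B+C\beta^2$, monotone in $\beta^2$ and hence single-crossing. I would then argue by contradiction: assume $(i^*,j^*)\in M_1\cap M_3$ but $(i^*,j^*)\notin M$, where $M$ is optimal at $\alpha\in[\alpha_1,\alpha_3]$. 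Rotating $M$ along the alternating component $P$ of the symmetric difference $M\,\triangle\,M_1$ that contains $(i^*,j^*)$ yields a valid matching $M'$ containing $(i^*,j^*)$; because the columns touched by $P$ are matched once by $M$ and once by $M_1$, the $1/\beta^2$ terms of $\Phi(\beta):=f_{M'}(\beta)-f_M(\beta)$ cancel, leaving a constant (if $P$ is a cycle) or a single-crossing $B+C\beta^2$ (if $P$ is a path). Optimality of $M$ at $\alpha$ gives $\Phi(\alpha)\ge 0$, and optimality of $M_1$ at $\alpha_1$ gives $\Phi(\alpha_1)\le 0$; the mirror construction against $M_3$ supplies the constraint at $\alpha_3$.

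In the cycle case $\Phi$ is constant, so $\Phi(\alpha)\ge 0$ and $\Phi(\alpha_1)\le 0$ force $\Phi\equiv 0$: then $M'$ is an equally-good matching containing $(i^*,j^*)$, which contradicts $(i^*,j^*)\notin M$ once optimal matchings are unique (generically the case, by Lemma~\ref{lem:unique_soln}). The main obstacle is the path case, where $\Phi=B+C\beta^2$ genuinely varies in $\beta$ and where the constraints coming from $\alpha_1$ and from $\alpha_3$ live on \emph{different} components of $M\,\triangle\,M_1$ and $M\,\triangle\,M_3$, so they cannot be subtracted directly. I expect this to be resolved by exploiting the structural fact underlying all of these moves: the cost $(\tfrac{1}{\beta}\lambda^{(1)}_j-\beta\lambda^{(2)}_i)^2$ is Monge with respect to the orders of $\{\lambda^{(1)}_j\}$ and $\{\lambda^{(2)}_i\}$, and those orders are preserved under scaling by any $\beta>0$. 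Hence at every $\beta$ an optimal matching may be taken order-preserving, so the optimum is determined solely by which $n_1$ rows are selected (matched to the sorted columns in order). Recasting ``$(i^*,j^*)$ is assigned'' as ``$i^*$ is the $j^*$-th selected row'' then lets me split the problem at $(i^*,j^*)$ into independent low/low and high/high subproblems and compare selections across $\alpha_1,\alpha,\alpha_3$, which I expect forces the agreement and simultaneously rules out the troublesome two-sign-change configuration; I would also state the uniqueness caveat explicitly so the conclusion holds for the unique optimal $M$ (and otherwise for a canonical order-preserving optimum).
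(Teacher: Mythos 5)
Your setup is sound: the reduction to the eigenvalue LAP, the observation that the $1/\beta^2$ coefficient is common to all matchings, the Monge/uncrossing fact (which is the paper's Lemma~\ref{lem:mono_incr}), and the closed forms for exchange moves (a $\beta$-independent Monge quantity for a pure relink, $B+C\beta^2$ for a move that changes which rows are selected) are all correct and consistent with the paper's computations. But the proof has a genuine gap exactly where you flag it, and the hedge does not close it. In the path case, optimality of $M$ at $\alpha$ and of $M_1$ at $\alpha_1$ give you only $\Phi(\alpha_1)\le 0\le\Phi(\alpha)$, i.e.\ that $\Phi$ changes sign somewhere in $[\alpha_1,\alpha]$ --- which is entirely consistent with $(i^*,j^*)\in M_1\setminus M$ and yields no contradiction. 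The constraint from $\alpha_3$ lives on a component of $M\,\triangle\,M_3$, a \emph{different} exchange with a different $\Phi$, so the two single-crossing inequalities cannot be subtracted or chained; single-crossing of pairwise cost curves (Lemma~\ref{lem:unique_soln}) is simply too weak on its own. The statement you would need to prove --- that as $\alpha$ increases, the partner index $s(i)$ assigned to each $\lambda^{(1)}_i$ by an optimal matching moves weakly forward --- is precisely the paper's key lemma inside Theorem~\ref{thm:matching_agreement}, and your sketch only conjectures it (``I expect this to be resolved\ldots'').

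For comparison, the paper proves that monotone-shift lemma directly: it decomposes the disagreement between matchings optimal at $\alpha^*-\epsilon$ and $\alpha^*+\epsilon$ into ``runs'' of consecutive indices, writes the two optimality inequalities over a run, and sums them so that the $\beta$-independent cross terms $-2\lambda^{(1)}_i\lambda^{(2)}_{s(i)}$ cancel, leaving $4\alpha^*\epsilon\bigl(\sum_{i\in S}(\lambda^{(2)}_{s_1(i)})^2-\sum_{i\in S}(\lambda^{(2)}_{s_2(i)})^2\bigr)>0$; combined with monotonicity of the sorted spectrum this forces $s_1(i)\le s_2(i)$ throughout the run, and the theorem follows immediately by sandwiching $s_1(i)\le s_2(i)\le s_3(i)$ --- no alternating-path machinery needed. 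Two further cautions: your cycle branch leans on uniqueness of the optimal matching, but non-uniqueness occurs exactly at the crossing values of $\alpha$ the algorithm visits (two matchings co-optimal there, by Lemma~\ref{lem:unique_soln}) and under eigenvalue multiplicities, so a ``generic uniqueness'' caveat weakens the conclusion the algorithm actually relies on; and your order-preserving reduction (matching sorted columns to a selected set of rows) is correct but, once stated, reduces the whole problem to the monotonicity of the selected row set in $\alpha$ --- i.e., back to the unproven lemma. To repair the proposal, replace the symmetric-difference case analysis with a proof of the monotone-shift property, for instance by the paper's summed-inequality argument over runs.
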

    \begin{proof}
    We need the following lemmas:
    \begin{lemma}
    \label{lem:mono_incr}
    If an optimal matching assigns $i$ to $m(i)$ (so that eigenvalue $\lambda^{(1)}_i$ of $G_1$ is paired with $\lambda^{(2)}_{f(i)}$ of $G_2$ in the sum of costs Equation \eqref{eqn:cost_matrix_entry}), then the sequence $m(1), m(2), \ldots m(n_1)$ is monotonic increasing.
    \end{lemma}
    \begin{proof}
    
    This follows from the fact that the two sequences of eigenvalues are monotonic nondecreasing, so if there's a `crossing' ($i_1 < i_2$ but $m(i_2) < m(i_1)$) then the new matching obtained by uncrossing those two pairs (performing a 2-opt step as defined in \cite{croes1958method}) has strictly lesser objective function value. Hence an optimal matching can't contain any such crossings.
    \end{proof}

    \begin{lemma}
    For all positive real $\alpha^* \geq \epsilon > 0$, let $M_1$ be an optimal matching at $\alpha^* - \epsilon$ and $M_2$ be optimal at $\alpha^* + \epsilon$. For $1 \leq i \leq n_1$, let $s_1(i)$ and $s_2(i)$ be the indices of $\lambda^{(2)}$ paired with $i$ in $M_1$ and $M_2$, respectively. Then for all $i$,  $s_1(i) \leq s_2(i)$.
    \end{lemma}
    \begin{proof}
    Define a ``run'' for $s_1, s_2$ as a sequence of consecutive indices $l, l+1, \ldots l+k$ in $[1,n_1]$ such that for any $l$, $l+1$: $\min(s_1(l+1), s_2(l+1)) < \max(s_1(l), s_2(l))$. The following must be true about a ``run'': 
    \begin{enumerate}
        \item Within a run, either $s_1(l) < s_2(l)$ or $s_1(l) > s_2(l)$ for all $l$. Otherwise, we have one or more crossings (as in Lemma \ref{lem:mono_incr}): for some $l$ we have $s_1(l) > s_1(l+1)$ or $s_2(l) > s_2(l+1)$. Any crossing may be uncrossed for a strictly lower objective function value - violating optimality of $M_1$ or $M_2$.  
        \item Any pair of matchings as defined above consists of a sequence of runs, where we allow a run to be trivial i.e. be a single index.
    \end{enumerate}
    Next, we show that within a run, we must have $s_1(i) < s_2(i)$ for all $i$. Let $S = \{l, l+1, \ldots l+k \}$ be a run. By optimality of $M_1$, $M_2$ at $\alpha^* - \epsilon$ and $\alpha^* + \epsilon$ respectively, we have:
    
    \begin{align*}
    \label{eqn:dist_comp_sum}
      \sum_{i \in S} {\left(\frac{1}{\alpha^* - \epsilon}\lambda^{(1)}_i - (\alpha^*-\epsilon)\lambda^{(2)}_{s_1(i)} \right)}^2 < \sum_{i\in S} {\left( \frac{1}{\alpha^* - \epsilon}\lambda^{(1)}_i - (\alpha^*-\epsilon)\lambda^{(2)}_{s_2(i)} \right)}^2 \\ \intertext{and}
      \sum_{i \in S} {\left(\frac{1}{\alpha^* + \epsilon}\lambda^{(1)}_i - (\alpha^*+\epsilon)\lambda^{(2)}_{s_2(i)}\right)}^2 < \sum_{i \in S} {\left(\frac{1}{\alpha + \epsilon}\lambda^{(1)}_i - (\alpha+\epsilon)\lambda^{(2)}_{s_1(i)}\right)}^2.
    \end{align*}
    Respectively, these simplify to
    \begin{align*}  
      - \sum_{i \in S}\left( \lambda^{(2)}_{s_1(i)} - \lambda^{(2)}_{s_2(i)}  \right) \left( -2 \lambda^{(i)}_{i} + {(\alpha^* - \epsilon)}^2 \left( \lambda^{(2)}_{s_1(i)} + \lambda^{(2)}_{s_2(i)}  \right) \right) > 0 \\
      \intertext{and}
      \sum_{i \in S} \left( \lambda^{(2)}_{s_1(i)} - \lambda^{(2)}_{s_2(i)}  \right) \left( -2 \lambda^{(i)}_{i} + {(\alpha^* + \epsilon)}^2 \left( \lambda^{(2)}_{s_1(i)} + \lambda^{(2)}_{s_2(i)}  \right) \right) > 0. \\
      \end{align*}
      Summing these inequalities and cancelling $-2 \lambda^{(i)}_{i}$, we have:
      \begin{align*}
      &\sum_{i \in S} \left\{ {(\alpha^* + \epsilon)}^2 \left( {\left(\lambda^{(2)}_{s_1(i)}\right)}^2 + {\left(\lambda^{(2)}_{s_2(i)}\right)}^2  \right) \right. \\
      & \quad \quad \quad - \left. {(\alpha^* - \epsilon)}^2 \left( {\left(\lambda^{(2)}_{s_1(i)}\right)}^2 + {\left(\lambda^{(2)}_{s_2(i)}\right)}^2  \right) \right\}> 0.
    \end{align*}
    Summing and reducing gives us    
    \[4 \alpha^* \epsilon \left( \sum_{i \in S} {\left(\lambda^{(2)}_{s_1(i)}\right)}^2 - \sum_{i \in S} {\left( \lambda^{(2)}_{s_2(i)} \right) }^2 \right) > 0 \]
    and so 
    \[\sum_{i \in S} {\left(\lambda^{(2)}_{s_1(i)}\right)}^2  > 
    \sum_{i \in S} {\left( \lambda^{(2)}_{s_2(i)} \right) }^2. \]
    
    However, since the $\lambda^{(2)}_j$ are monotonic nondecreasing, this means we cannot also have $s_1(i) > s_2(i)$ for all $i \in S$, since that would imply  
    \[\sum_{i=1}^{n_1} {\left(\lambda^{(2)}_{s_1(i)}\right)}^2  <
    \sum_{i=1}^{n_1} {\left( \lambda^{(2)}_{s_2(i)} \right) }^2. \]
    
    Therefore, in a run of arbitrary length, all indices must move `forward' (meaning that $s_1(i) < s_2(i)$ for all $i$ in the run), and so (since any pair of matchings optimal at such $\alpha$ define a set of runs) we must have $s_1(i) \leq s_2(i)$. This completes the proof of the lemma.

    \end{proof}
    Thus, for three matchings $M_1, M_2, M_3$ which optimal at a sequence of $\alpha_1 \leq \alpha_2 \leq \alpha_3$, we must have $s_1(i) \leq s_2(i) \leq s_3(i)$ for all $i$. In particular, if $s_1(i) = s_3(i)$, we must also have $s_1(i) = s_2(i) = s_3(i)$.
    \end{proof}
    
    \begin{theorem}If two matchings $M_1$ and $M_3$ yield optimal upper bounds for the linear distance $\tilde{D}^2$ at $\alpha_1$ and $\alpha_3$ respectively, and $f_{M_1}(\alpha_2) = f_{M_2}(\alpha_2)$ for some $\alpha_2$ s.t. $\alpha_1 \leq \alpha_2 \leq \alpha_3$, then either (1) $M_1$ and $M_3$ are optimal over the entire interval $[\alpha_1, \alpha_3]$ or (1) some other matching $M_2$ improves over $M_1$ and $M_3$ at $\alpha_2$.
    \end{theorem}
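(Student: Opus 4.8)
The plan is to exploit the explicit rational form of $f_M(\alpha)$ recorded in Equation \eqref{eqn:f_terms}. For any subpermutation matching $M$ one has
\[
f_M(\alpha) = \frac{K}{\alpha^2} + A_M\,\alpha^2 - 2\,C_M,
\]
where the leading coefficient $K=\sum_{j=1}^{n_1}{\lambda^{(1)}_j}^2$ is \emph{independent} of the matching (it comes from $\bigl\|\tfrac1\alpha P\Lambda_1\bigr\|_F^2=\tfrac1{\alpha^2}\Tr[\Lambda_1^2]$, which uses only $P^TP=I$), while $A_M=\sum_{i,j}[M]_{ij}^2{\lambda^{(2)}_i}^2\ge 0$ and $C_M=\sum_{i,j}[M]_{ij}^2\lambda^{(2)}_i\lambda^{(1)}_j\ge 0$ depend on $M$. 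The first thing I would record is a small strengthening of Lemma \ref{lem:unique_soln}: for distinct matchings the difference $f_M(\alpha)-f_{M'}(\alpha)=(A_M-A_{M'})\alpha^2-2(C_M-C_{M'})$ is \emph{strictly monotonic} on $(0,\infty)$ whenever $A_M\neq A_{M'}$, since its derivative $2(A_M-A_{M'})\alpha$ has constant sign there. It is this monotonicity, not merely the single positive crossing, that powers the argument.

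Next I would pin down the candidate optimal envelope. Writing $g=f_{M_1}-f_{M_3}$, optimality of $M_1$ at $\alpha_1$ gives $g(\alpha_1)\le 0$, optimality of $M_3$ at $\alpha_3$ gives $g(\alpha_3)\ge 0$, and by hypothesis $g$ vanishes at the crossing $\alpha_2$ (reading the stated condition as $f_{M_1}(\alpha_2)=f_{M_3}(\alpha_2)$, the meeting of the two endpoint-optimal curves). Since $g$ is strictly monotonic with $g(\alpha_1)\le 0\le g(\alpha_3)$ and $\alpha_1\le\alpha_3$, it must be strictly increasing; hence $f_{M_1}\le f_{M_3}$ on $[\alpha_1,\alpha_2]$ and $f_{M_3}\le f_{M_1}$ on $[\alpha_2,\alpha_3]$. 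Consequently the only way $M_1$ and $M_3$ can fail to be jointly optimal on the whole interval is for some third matching to dip below $\min(f_{M_1},f_{M_3})$ somewhere in $[\alpha_1,\alpha_3]$.

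To close the dichotomy I would argue by contradiction: assume the second alternative fails, i.e.\ no matching beats the common value $f_{M_1}(\alpha_2)=f_{M_3}(\alpha_2)$, yet some matching $M$ satisfies $f_M(\beta)<\min(f_{M_1}(\beta),f_{M_3}(\beta))$ for some $\beta\in[\alpha_1,\alpha_3]$. By the ordering above I may assume $\beta\in[\alpha_1,\alpha_2]$, so $f_M(\beta)<f_{M_1}(\beta)$. Now $f_M-f_{M_1}$ is monotonic, is $\ge 0$ at $\alpha_1$ (optimality of $M_1$), but $<0$ at $\beta>\alpha_1$; therefore it is strictly decreasing and stays negative for all $\alpha\ge\beta$, in particular at $\alpha_2$. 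Thus $f_M(\alpha_2)<f_{M_1}(\alpha_2)=f_{M_3}(\alpha_2)$, so $M$ improves over both $M_1$ and $M_3$ at $\alpha_2$ --- exactly the negation of our assumption. The case $\beta\in[\alpha_2,\alpha_3]$ is symmetric, using optimality of $M_3$ at $\alpha_3$ and the increasing branch of $f_M-f_{M_3}$. Hence if the second alternative fails, no matching ever goes below $\min(f_{M_1},f_{M_3})$ on $[\alpha_1,\alpha_3]$, which is precisely the statement that $M_1$ and $M_3$ are optimal over the entire interval.

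I expect the main obstacle to be bookkeeping the degenerate and boundary cases rather than the core idea. Two points require care: first, matchings $M$ with $A_M=A_{M_1}$ (or $A_M=A_{M_3}$), for which the difference is a nonzero constant rather than strictly monotonic --- such an $M$ never crosses $f_{M_1}$, so if it is ever strictly below it is below everywhere, again producing a violation at $\alpha_2$; second, exact ties, where I would replace strict inequalities by $\le$ and note that a tie with the envelope does not contradict joint optimality, together with the endpoint degeneracies $\alpha_1=\alpha_2$ or $\alpha_2=\alpha_3$. Pinning down the \emph{direction} of each monotonic difference from the endpoint optimality conditions is the one genuinely delicate step; everything else is the explicit quadratic-in-$\alpha^2$ computation inherited from Equation \eqref{eqn:f_terms}.
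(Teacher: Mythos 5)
Your proposal is correct and follows essentially the same route as the paper: both arguments rest on the single-crossing property of Lemma \ref{lem:unique_soln} (which you usefully sharpen, via the matching-independent $\frac{1}{\alpha^2}$ term of Equation \eqref{eqn:f_terms}, into strict monotonicity of the difference $f_M - f_{M'}$) and conclude that any matching improving over $M_1$ or $M_3$ inside $[\alpha_1,\alpha_3]$ must improve over both at the crossing point $\alpha_2$. The only difference is rigor: your endpoint-sign analysis and handling of the constant-difference and tie cases makes explicit what the paper compresses into the terse claim that a third curve ``may intersect each of these cost curves at most once on this interval.''
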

    \begin{proof}
    This follows directly from the facts that $f_{M_1}(\alpha)$  and $f_{M_2}(\alpha)$ (as defined in Equation \eqref{eqn:f_alpha_defn}), can only meet at one real positive value of $\alpha$ (Lemma \ref{lem:unique_soln}). Say that the cost curves for $M_1$ (known to be optimal at $\alpha = \alpha_1$) and $M_3$ (optimal at $\alpha = \alpha_3$) meet at $\alpha = \alpha_2$, and furthermore assume that $\alpha_1 \leq \alpha_2 \leq \alpha_3$. If some other matching $M_2$ improves over (meaning, has lesser obj. function value as a function of $\alpha$) $M_1$ or $M_3$ anywhere in the interval $[\alpha_1, \alpha_3]$, it must improve over both at $\alpha=\alpha_2$, since it may intersect each of these cost curves at most once on this interval. If $M_1$ and $M_3$ are both optimal at their intersection point (meaning no such distinct $M_2$ exists) then we know that no other matching improves on either of them over the the interval $[\alpha_1, \alpha_3]$ and may therefore mark it as explored during the outermost loop (otimization over $\alpha$) of Algorithm \ref{alg:linear_alg}.
    \end{proof}
Together, the preceeding properties verify that our algorithm will indeed find the joint optimum over all $\alpha$ and $P$ (for fixed $t=c$, for $\tilde{D}$, subject to subpermutation constraints on $P$): it allows us to find the entire set of $P$ subpermutation matrices which appear on the lower convex hull of distance as a function of alpha.

\subsection{Implementation Details}
 We implement Algorithms \ref{alg:linear_alg} and \ref{alg:exponential_alg} in the programming language ``Python'' (version 3.6.1) \cite{rossum1995python}. Numerical arrays were stored using the \emph{numpy} package \cite{van2011numpy}. Our inner LAP solver was the package \emph{lapsolver} \cite{heindl2018lapsolver}. Univariate optimization over $t$ and $\alpha$ was performed with the `bounded' method of the \emph{scipy.optimize} package \cite{scipy}, with bounds set at $[0, 10.0]$ for each variable and a input tolerance of $10^{-12}$. Laplacians were computed with the \textit{laplacian} method from the package \emph{networkX} \cite{hagberg2008exploring}, and their eigenvalues were computed with \textit{scipy.linalg.eigh}.
 
 Because of numerical precision issues arising during eigenvalue computation, it can be difficult to determine when two matchings agree, using eigenvalue comparison. In practice we ignore this issue and assume that two matchings are only identical if they associate the same indices of the two lists of eigenvalues. This means we may be accumulating multiple equivalent representations of the same matching (up to multiplicity of eigenvalues) during our sweeps through $t$ and $\alpha$. We leave mitigating this inefficiency for future work. 
\section{Numerical Experiments}
\subsection{Graph Lineages}
\label{sec:graph_families}
In this subsection we introduce several graph lineages for which we will compute various intra- and inter-lineage distances. Three of these are well-known lineages of graphs, and the fourth is defined in terms of a product of complete graphs:
\begin{itemize}
    \item Path Graphs ($\text{Pa}_n$): 1D grid graphs of length $n$, with aperiodic boundary conditions. 
    \item Cycle Graphs ($\text{Cy}_n$): 1D grid graphs of length $n$, with periodic boundary conditions.
    \item Square Grid Graphs ($\text{Sq}_n$): 2D grid graphs of dimensions $n$, with aperiodic boundary conditions. $\text{Sq}_n = \text{Pa}_n \Box \text{Pa}_n$
    \item ``Multi-Barbell'' Graphs ($\text{Ba}_n$): Constructed as $\text{Cy}_n \Box K_n$, where $K_n$ is the complete graph on $n$ vertices.  
\end{itemize}
Examples of each of these graph lineages can be seen in Figure \ref{fig:graph_families}.

\begin{figure}
 {\centering
    2D Grids \\
    \includegraphics[width=.75\linewidth]{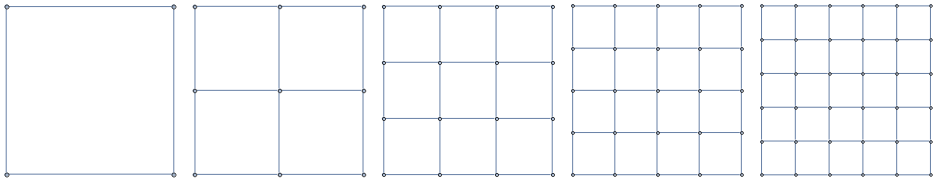} \\
     Paths \\
    \includegraphics[width=.75\linewidth]{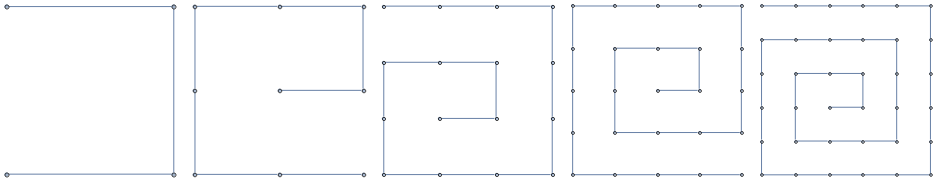} \\
    Cycles \\
    \includegraphics[width=.75\linewidth]{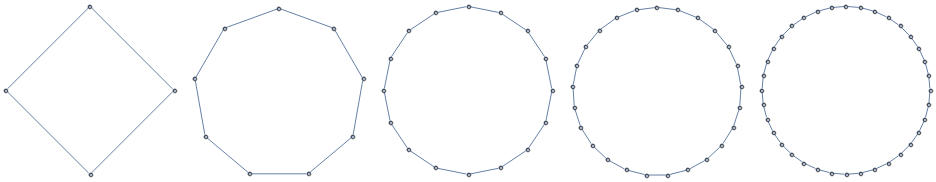} \\
    k-Barbell graphs \\
    \includegraphics[width=.75\linewidth]{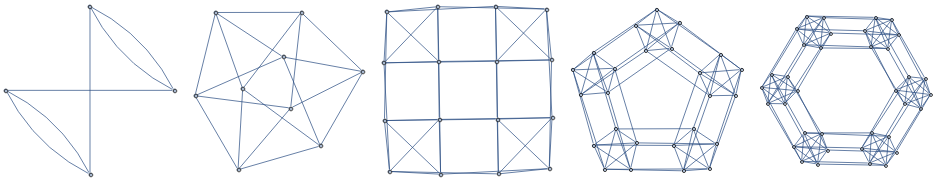} \\
}
    \caption{Graph lineages used in multiple numerical experiments in this section. }
    \label{fig:graph_families}
\end{figure}
\subsection{Numerical Optimization Methods}
We briefly discuss here the other numerical methods we have used to calculate $\tilde{D}^2$ and $D^2$. 
\begin{itemize}
    \item Nelder-Mead in Mathematica: For very small graph pairs ($n_1 \times n_2 \leq 100$) we are able to find optimal $P, \alpha, t$ using constrained optimization in Mathematica 11.3 \cite{wolfram2018mathematica} using NMinimize, which uses Nelder-Mead as its backend by default. The size limitation made this approach unusable for any real experiments.   
    \item We also tried a variety of codes specialized for numeric optimization subject to orthogonality constraints. These included (1) the python package PyManopt \cite{townsend2016pymanopt}, a code designed for manifold-constrained optimization; (2) gradient descent in Tensorflow using the penalty function $g(P) = c {\left| \left| P^T P - I \right| \right|}_F$ (with $c \ll 1$ a small positive constant weight) to maintain orthogonality, as well as (3) an implementation of the Cayley reparametrization method from \cite{wen2013feasible} (written by the authors of that same paper). In our experience, these codes were slower, with poorer scaling with problem size, than combinatorial optimization over subpermutation matrices, and did not produce improved results on our optimization problem.
    \item We compare in more detail two methods of joint optimization over $\alpha$ and $P$ when $P$ is constrained to be a subpermutation matrix in the diagonal basis for $L(G_1)$ and $L(G_2)$. Specifically, we compare our approach given in Algorithm \ref{alg:linear_alg} to univariate optimization over $\alpha$, where each function evaluation consists of full optimization over $P$. Figure \ref{fig:alg_timing} shows the results of this experiment. We randomly sample pairs of graphs as follows: 
    \begin{enumerate}
        \item $n_1$ is drawn uniformly from $[ 5, 120]$.
        \item $n_2$ is drawn uniformly from $[n_1, n_1 + 60 ] $.
        \item $G_1$ and $G_2$ are generated by adding edges according to a Bernoulli distribution with probability $p$. We ran 60 trials for each $p$ in \{ .125, .25, .375, .5, .625, .75, .875 \}.
    \end{enumerate}
    We compute the linear version of distance for each pair. Because our algorithm finds all of the local minima as a function of alpha, we compute the cost of the golden section approach as the summed cost of multiple golden section searches in alpha: one GS search starting from the initial bracket $[0.618 \alpha^*, 1.618 \alpha^*]$ for each local minimum $\alpha^*$ found by our algorithm. We see that our algorithm is always faster by at least a factor of 10, and occasionally faster by as much as a factor of $10^3$. This can be attributed to the fact that the golden section search is unaware of the structure of the linear assignment problem: it must solve a full $n_2 \times n_2$ linear assignment problem for each value of $\alpha$ it explores. In contrast, our algorithm is able to use information from prior calls to the LAP solver, and therefore solves a series of LAP problems whose sizes are monotonically nonincreasing.

\end{itemize}
\begin{figure}
    \centering
    \includegraphics[width=.6\linewidth]{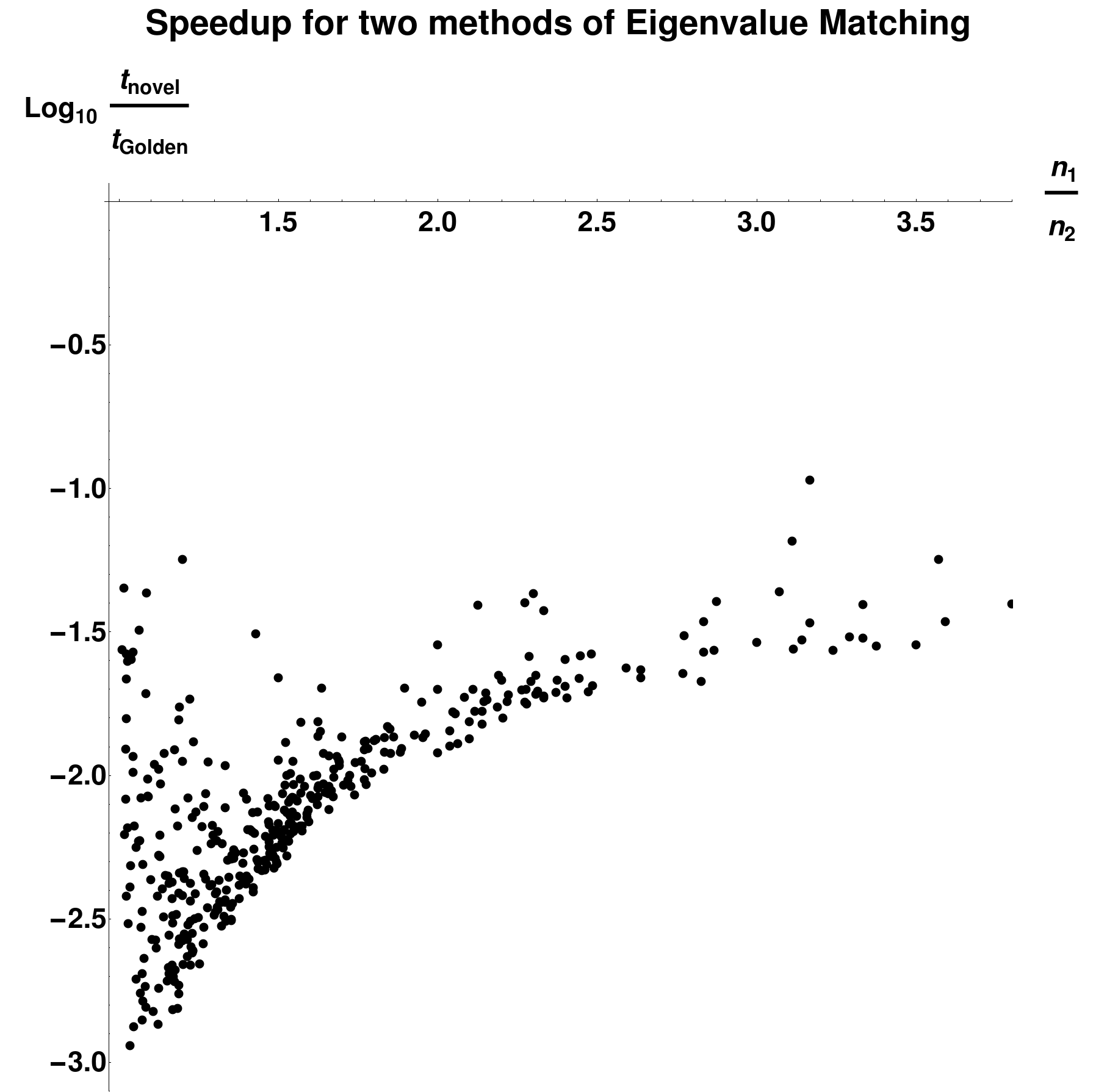}
    \caption{Comparison of runtimes for our algorithm and bounded golden section search over the same interval $[ 10^{-6}, 10]$. Runtimes were measured by a weighted count of evaluations of the Linear Assignment Problem solver, with an $n \times n$ linear assignment problem counted as $n^3$ units of cost. Because our algorithm recovers the entire lower convex hull of the objective function as a function of $\alpha$, we compute the cost of the golden section search as the summed cost of multiple searches, starting from an interval bracketing each local optimum found by our algorithm. We see that our algorithm is much less computationally expensive, sometimes by a factor of $10^{3}$. The most dramatic speedup occurs in the regime where $n_1 << n_2$. Graphs were generated by drawing $n_1$ uniformly from $[5,120]$, drawing $n_2$ uniformly from $[n_1, n_1 + 60]$, and then adding edges according to a Bernoulli distribution with $p$ in \{ .125, .25, .375, .5, .625, .75, .875 \} (60 trials each).}
    \label{fig:alg_timing}
\end{figure}
\subsection{Experiments}
\subsubsection{Triangle Inequality violation of $D$ (Exponential Distance) and $\tilde{D}$ (Linear Distance).}
\label{subsub:tri_ineq_viol}

As stated in Section \ref{subsec:dist_variants}, our full graph dissimilarity measure does not necessarily obey the triangle inequality. In this section we systematically explore conditions under which the triangle inequality is satisfied or not satisfied. We generate triplets $G_1, G_2, G_3$ of random graphs of sizes $n_i$ for $n_1 \in [5, 30]$, $n_2 \in [n_1, n_1 + 30]$, and $n_3 \in [n_2, n_2 + 30]$ by drawing edges from the Bernoulli distribution with probability $p$ (we perform 4500 trials for each $p$ value in [.125, .25, .375, .5, .625, .75, .875]). We compute the distance $\tilde{D}(G_i, G_k)$  (for $(i,k) \in \{ (1,3), (1,2), (2,3) \}$). The results may be seen in Figure \ref{fig:triangle_ineq_histogram}. In this figure we plot a histogram of the ``discrepancy score'' 
\begin{align}
    \text{Disc}(G_1, G_2, G_3) = \tilde{D}(G_1, G_3)/(\tilde{D}(G_1, G_2) + \tilde{D}(G_2, G_3)),
\end{align}
which measures the degree to which a triplet of graphs violates the triangle inequality (i.e. falls outside of the unit interval [0,1]), for approximately $3 \times 10^4$ such triplets. It is clear that, especially for the linear definition of the distance, the triangle inequality is not always satisfied. However, we also observe that (for graphs of these sizes) the discrepancy score is bounded: no triple violates the triangle inequality by more than a factor of approximately 1.8. This is shown by the histogram of discrepancies in Figure \ref{fig:triangle_ineq_histogram}. Additionally, the triangle inequality is satisfied in 28184 (95.2\%) of cases.

We see similar but even stronger results when we run the same experiment with $D^2$ instead of $\tilde{D}^2$; these may also be seen in Figure \ref{fig:triangle_ineq_histogram}. We calculated the discrepancy score analogously, but with $D$ substituted for $\tilde{D}$. We see similarly that the degree of violation is bounded. In this case, no triple violated the triangle inequality by a factor of more than 5, and in this case the triangle inequality was satisfied in 99.8\% of the triples. 
In both of these cases, the triangle inequality violations may be a result of our optimization procedure finding local minima/maxima for one or more of the three distances computed. We also repeat the above procedure for the same triplets of graphs, but with distances computed not in order of increasing vertex size:  calculating $\text{Disc}(G_2, G_1, G_3)$ and $\text{Disc}(G_3, G_2, G_1)$. All of these results are plotted in Figure \ref{fig:triangle_ineq_histogram}. 
\begin{figure}
    {\centering
    \null \hfil \includegraphics[width=.9\linewidth]{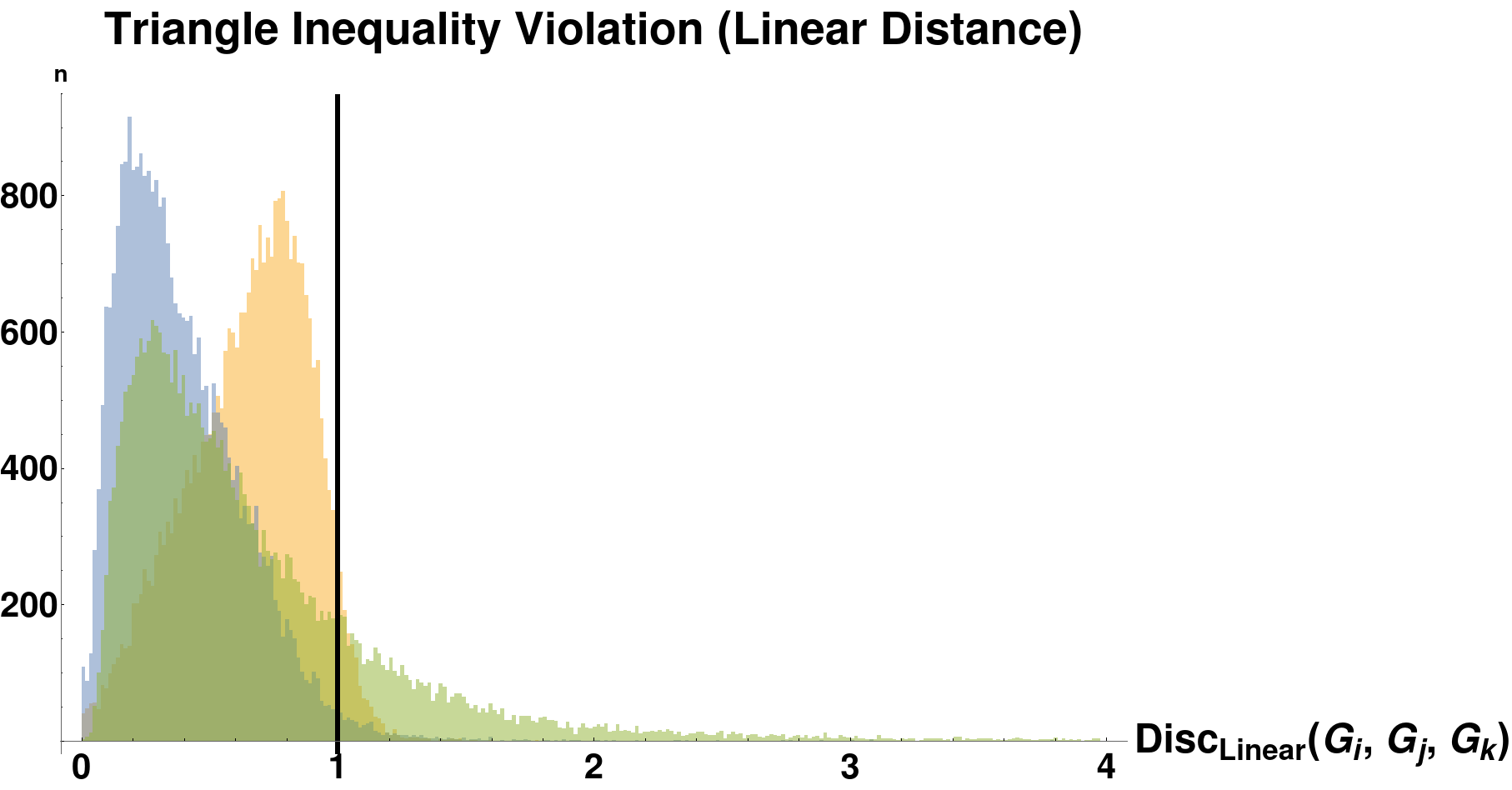} \hfil \null \\
    \null \hfil \includegraphics[width=.9\linewidth]{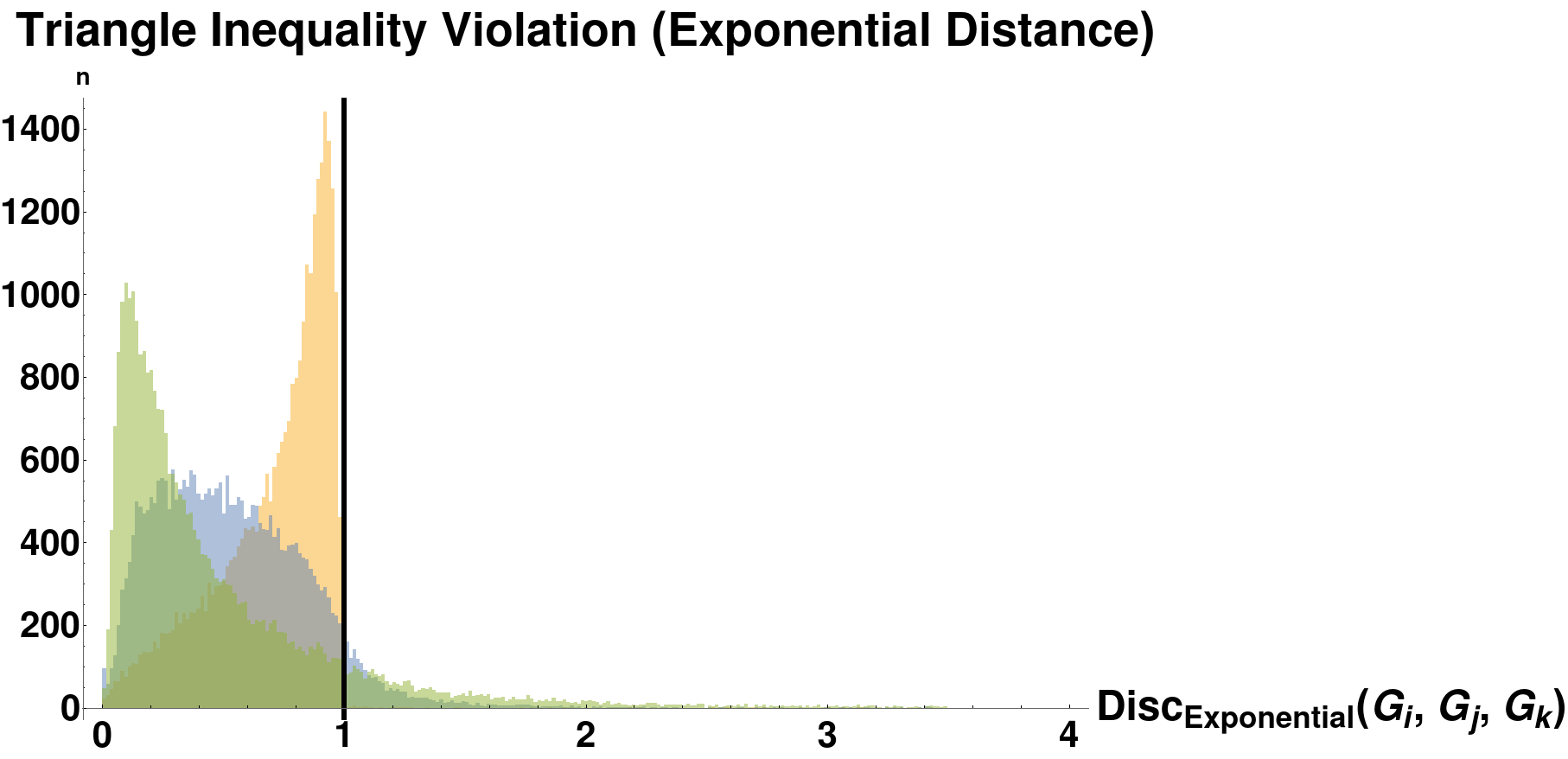} \hfil \null \\
    \hfill \includegraphics[width=.65\linewidth]{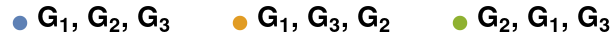} \hfill
    }
    \caption{Histograms of triangle inequality violation. These plots show the distribution of  $\text{Disc}(G_1, G_2, G_3)$, as defined in the text, for the cases (a) top: the linear or small-time version of distance and (b) bottom: the exponential or arbitrary-time version of distance. We see that for the sizes of graph we consider, the largest violation of the triangle inequality is bounded, suggesting that our distance measure may be an infra-$\rho$-pseudometric for some value of $\rho \approx 1.8$ (linear version) or $\rho \approx 5.0$ (exponential version). See Table \ref{tab:dist_versions} for a summary of the distance metric variants introduced in this paper. We also plot the same histogram for out-of-order (by vertex size) graph sequences: $\text{Disc}(G_2, G_1, G_3)$ and $\text{Disc}(G_3, G_2, G_1)$. Each plot has a line at $x=1$, the maximum discrepancy score for which the underlying distances satisfy the triangle inequality.}
    \label{fig:triangle_ineq_histogram}
\end{figure}
\vspace{-5pt}
\subsubsection{Intra- and Inter-Lineage Distances}
\label{subsub:lineage_dist}
We compute pairwise distances for sequences of graphs in the graph lineages displayed in Figure \ref{fig:graph_families}. For each pair of graph families (Square Grids, Paths, Cycles, and Multi-Barbells), we compute the distance from the $i$th member of one lineage to the $(i+1)$-st member of each other lineage, and take the average of the resulting distances from $i=1$ to $i=12$. These distances are listed in Table \ref{tab:dist_table}. As expected, average distances within a lineage are smaller than the distances from one lineage to another. 
\begin{table}[H]
    \centering
    \begin{tabular}{|c|l|l|l|l|}
        \hline
         & Square Grids & Paths & Cycles & Multi-Barbells\\ 
        \hline
        Square Grids & 0.0096700 & 0.048162 & 0.046841 & 0.63429\\ 
        \hline
        Paths & 0.30256 & 0.0018735 & 0.010300 & 2.1483\\ 
        \hline
        Cycles & 0.27150 & 0.0083606 & 0.0060738 & 2.0357\\ 
        \hline
        Multi-Barbells & 0.21666 & 0.75212 & 0.72697 & 0.029317 \\
        \hline
    \end{tabular}
    \caption{Mean distances between graphs in several lineages. For two lineages $G_1, G_2 \ldots$ (listed at left) and $H_!, H_2, \ldots$ (listed at the top), each entry shows the mean distance $D(G_i, H_{i+1})$ (where the average is taken over $i=1$ to $12$). As expected, we see that the distance from elements of a graph lineage to other members of the same lineage (the diagonal entries of the table) is smaller than distances taken between lineages. Furthermore as expected, 1D paths are more similar (but not equal) to 1D cycles than to other graph lineages. }
    \label{tab:dist_table}
\end{table}
\begin{figure}
    {\centering
    \null \hfil \includegraphics[width=.75\linewidth]{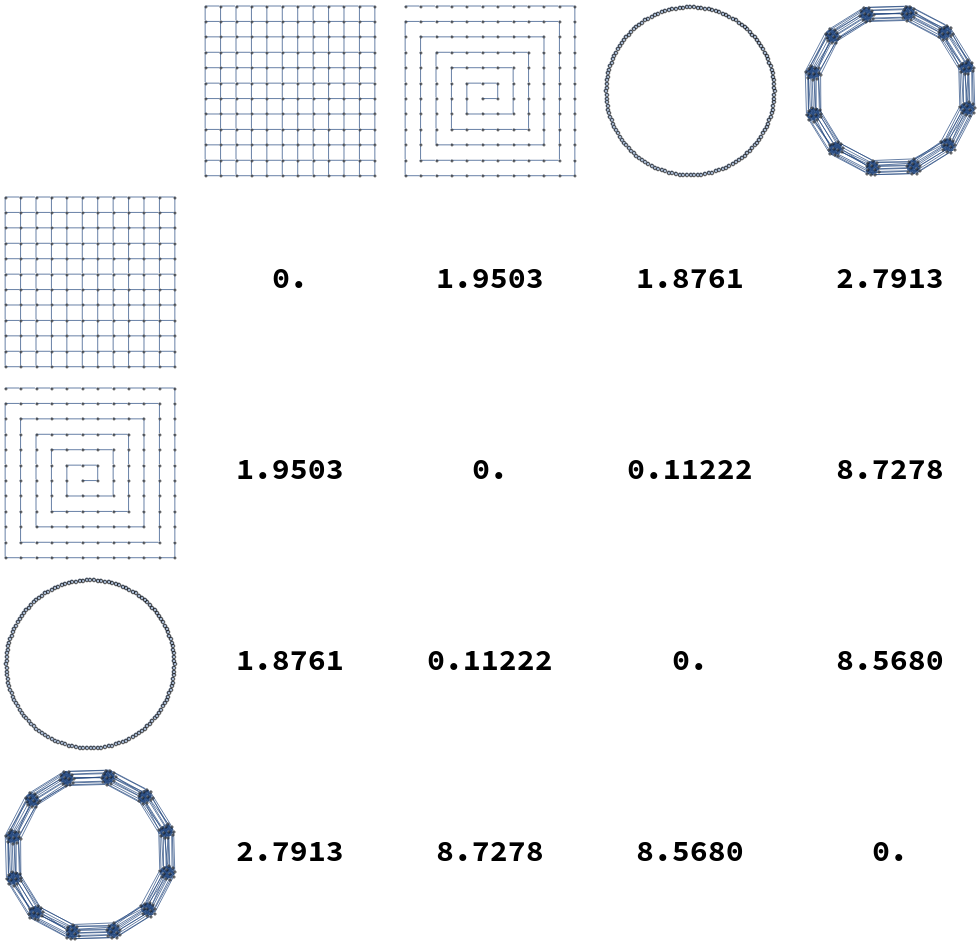}\hfil \null\\
    \null \hfil \includegraphics[width=.75\linewidth]{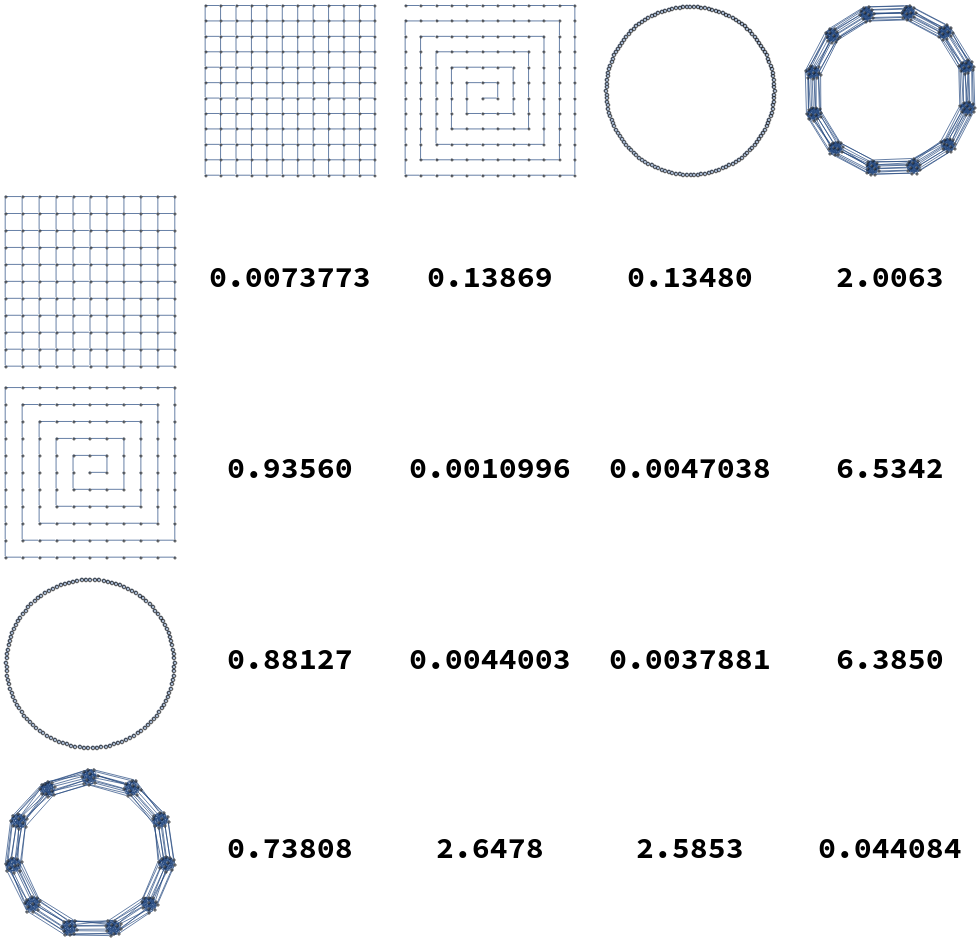} \hfil \null}

    \caption{Distances $D^2(G, H)$ calculated for several pairs of graphs. The top plot shows distances where $G$ and $H$ are both chosen from $\{ \text{Grid}_{13 \times 13}, P_{169}, C_{169}, \text{Ba}_{13} \}$. At bottom, distances are calculated from $G$ chosen in $\{ \text{Grid}_{12 \times 12}, P_{144}, C_{144}, \text{Ba}_{12} \}$ to H chosen in $\{ \text{Grid}_{13 \times 13}, P_{169}, C_{169}, \text{Ba}_{13} \}$. As expected, diagonal entries are smallest. }
    \label{gdist_plots}
\end{figure}
\vspace{-35pt}
\subsubsection{Graph Limits}
\label{subsub:graph_limits}
Here, we provide preliminary evidence that graph distance measures of this type may be used in the definition of a \emph{graph limit} - a graphlike object which is the limit of an infinite sequence of graphs. This idea has been previously explored, most famously by Lov\'asz \cite{lovasz2012large}, whose definition of a graph limit (called a \emph{graphon}) is as follows: Recall the definition of graph cut-distance $D_{\text{cut}}(G, H)$ from Equation \ref{defn:cut_dist}, namely: the cut distance is the maximum discrepancy in sizes of edge-cuts, taken over all possible subsets of vertices, between two graphs on the same vertex-set.
A graphon is then an equivalence class of Cauchy sequences of graphs\footnote{Here we are calling a sequence of graphs ``Cauchy'' if for any $\epsilon > 0$ there is some $N$ such that for all $n, m \geq N$, $D_{\text{cut}}(G_n, G_m) < \epsilon$. }, under the equivalence relation that two sequences $G_1, G_2, \ldots$ and $H_1, H_2, \ldots$ are equivalent if $D_{\text{cut}}(G_i, H_i)$ approaches 0 as $n \rightarrow \infty$.

We propose a similar definition of graph limits, but with our diffusion distance substituted as the distance measure used in the definition of a Cauchy sequence of graphs. Hammond et. al. argue in \cite{hammond2013graph} why their variant of diffusion distance may be a more descriptive distance measure than cut-distance. More specifically, they show that on some classes of graphs, some edge deletions `matter' much more than others: removal of a single edge changes the diffusive properties of the graph significantly. However, the graph-cut distance between the new and old graphs is the same, regardless of which edge has been removed, while the diffusion distance captures this nuance. For graph limits, however, our generalization to \emph{unequal-sized graphs} via $P$ is of course essential. 

We examine several sequences of graphs of increasing size for the required Cauchy behavior (in terms of our distance measure) to justify defining ``graph limits'' in this way. For each of the graph sequences defined in Section \ref{sec:graph_families}, we examine the distance between successive members of the sequence, plotting $D^2(G_n, H_{n+1})$ for each choice of $G$ and $H$.  These sequences of distances are plotted in Figure \ref{fig:g_limits}.

In this figure, we see that generally distance diverges between different graph lineages, and converges for successive members of the same lineage, as $n \rightarrow \infty$. We note the exceptions to this trend:
\begin{enumerate}
    \item The distances between $n$-paths and $n+1$-cycles appear to be converging; this is intuitive, as we would expect that difference between the two spectra due to distortion from the ends of the path graph would decrease in effect as $n \rightarrow \infty$. 
    \item We also show analytically, under similar assumtions, that the distance between successive path graphs also shrinks to 0 (Theorem \ref{thm:pa_lim_lem}). 
\end{enumerate}
We do not show that all similarly-constructed graph sequences display this Cauchy-like behavior. We hope to address this deeper question with one or more modified versions of the objective function (see Section \ref{sec:future}).

In the following theorem (Theorem \ref{thm:pa_lim_lem}), we assume that the optimal value of $t$ approaches some value $\tilde{t}$ as $n \rightarrow \infty$. We have not proven this to be the case, but have observed this behavior for both square grids and path graphs. Lemmas  \ref{lem:path_eigen_limit} and \ref{thm:pa_lim_lem} show a related result for path graphs; we note that the spectrum of the Laplacian (as we define it in this paper) of a path graph of size $n$ is given by 
\begin{equation}
    \begin{aligned}
        \lambda_k = -2 + 2 \cos{\frac{k \pi}{n - 1}} \quad \quad k \in 
        \{0 ... n-1\}.
    \end{aligned}
\end{equation}.

\begin{lemma}
\label{lem:path_eigen_limit}
For any finite $k, t$, we have 
\begin{align}
    \lim_{n \rightarrow \infty} n {\left(
    e^{t(-2 + 2 \cos(\frac{\pi k}{n}))}
    - e^{t(-2 + 2 \cos(\frac{\pi k}{n+1}))}
    \right)}^2 = 0
\end{align}
\end{lemma}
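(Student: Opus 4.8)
The plan is to show that the bracketed difference decays like $n^{-3}$, so that after squaring and multiplying by $n$ we are left with a quantity of order $n^{-5}$, which vanishes. The essential observation is that the exponent of each matrix entry, $-2 + 2\cos(\pi k/m)$, tends to $0$ as $m \to \infty$ (indeed it behaves like $-\pi^2 k^2/m^2$), so both exponentials approach $1$; the size of their difference is then controlled by how fast the two arguments separate as we pass from denominator $n$ to $n+1$.

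Concretely, I would set $h(x) = \exp\!\left[t\left(-2 + 2\cos(\pi k x)\right)\right]$, a smooth function of $x \geq 0$, so that the two exponentials in the statement are exactly $h(1/n)$ and $h(1/(n+1))$. Applying the Mean Value Theorem on the interval $[1/(n+1),\, 1/n]$ gives
\begin{align}
h(1/n) - h(1/(n+1)) = h'(\xi_n)\left(\tfrac{1}{n} - \tfrac{1}{n+1}\right)
\end{align}
for some $\xi_n \in (1/(n+1),\, 1/n)$. The second factor equals $1/(n(n+1))$, which is $O(n^{-2})$, so it remains only to see that $h'(\xi_n) = O(n^{-1})$.

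Differentiating, $h'(x) = -2\pi k t \sin(\pi k x)\,\exp\!\left[t\left(-2 + 2\cos(\pi k x)\right)\right]$. The exponential prefactor is uniformly bounded: its argument lies in an interval of length $4|t|$, so the factor never exceeds $e^{4|t|}$ for any $x$. For the sine factor, $|\sin(\pi k \xi_n)| \leq \pi|k|\,\xi_n \leq \pi|k|/n$ using $|\sin y| \leq |y|$ together with $\xi_n \leq 1/n$. Hence $|h'(\xi_n)| \leq C/n$ for a constant $C$ depending only on $k$ and $t$. Combining with the spacing bound, the difference is $O(n^{-3})$, its square is $O(n^{-6})$, and multiplying by $n$ yields $O(n^{-5}) \to 0$, as required.

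I do not anticipate a genuine obstacle: the statement is essentially that the per-entry difference vanishes faster than $n^{-1/2}$, and the Mean Value Theorem packages this cleanly. The only points requiring a little care are confirming that the exponential prefactor in $h'$ stays bounded uniformly in $n$ (immediate, since its exponent is confined to a fixed compact interval) and ensuring the two small quantities — the derivative of order $n^{-1}$ and the spacing of order $n^{-2}$ — are \emph{multiplied} rather than added. An equivalent route would Taylor-expand $\cos$ and $e^x$ to obtain $\exp[t(-2+2\cos(\pi k/m))] = 1 - t\pi^2 k^2/m^2 + O(m^{-4})$ directly, from which the same $O(n^{-3})$ bound on the difference follows after subtracting the $m=n$ and $m=n+1$ expansions; I prefer the Mean Value Theorem version because it avoids tracking explicit remainder terms.
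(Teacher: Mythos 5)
Your proof is correct, and it takes a genuinely different route from the paper's. The paper writes the (unsquared) difference $d_n = h(1/n) - h(1/(n+1))$ (in your notation) as $d_n/(1/n)$ and applies L'H\^{o}pital's rule, differentiating numerator and denominator with respect to $n$ and then verifying that the two resulting sub-limits exist and vanish; this gives $n\,d_n \to 0$, and the lemma follows because $n\,d_n^2 = (n\,d_n)\,d_n$ with $d_n \to 0$ (a final product step the paper leaves implicit). You instead apply the Mean Value Theorem to $h$ on $[1/(n+1),\,1/n]$ and control the two factors separately: $|h'(\xi_n)| \leq 2\pi^2 k^2 |t|\, e^{4|t|}/n$ via $|\sin y| \leq |y|$ together with the uniform bound on the exponential prefactor, and the spacing $1/n - 1/(n+1) = 1/(n(n+1)) = O(n^{-2})$, yielding the explicit rate $d_n = O(n^{-3})$ and hence $n\,d_n^2 = O(n^{-5})$. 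Your version buys three things: it is more elementary (no L'H\^{o}pital hypotheses to check and no differentiation of the unwieldy numerator), it handles the square directly rather than through the product trick, and it produces explicit constants depending only on $k$ and $t$. That last point is a genuine asset downstream: Theorem \ref{thm:pa_lim_lem} invokes this lemma with $k$ ranging up to $n-1$, where a fixed-$k$ limit statement alone does not literally supply the required uniformity, whereas your quantitative bound (or its variant using $|\sin(\pi k \xi_n)| \leq 1$, which gives $d_n = O(k/n^2)$ and hence $n\,d_n^2 = O(k^2/n^3) = O(1/n)$ uniformly over $k \leq n-1$) does. What the paper's route buys in exchange is mainly brevity of setup, since it introduces no auxiliary function, at the cost of a heavier derivative computation.
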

\begin{proof}
Clearly for finite $k,t$
\begin{align}
    \lim_{n \rightarrow \infty} {\left(
    e^{t(-2 + 2 \cos(\frac{\pi k}{n}))}
    - e^{t(-2 + 2 \cos(\frac{\pi k}{n+1}))}
    \right)} = 0
\end{align}
Then, 
\begin{align}
    &\lim_{n \rightarrow \infty} n {\left(
    e^{-2 + 2 \cos(\frac{\pi k}{n})}
    - e^{-2 + 2 \cos(\frac{\pi k}{n+1})}
    \right)}\\
    &\quad \quad \quad = \lim_{n \rightarrow \infty} \frac{ {\left(
    e^{-2 + 2 \cos(\frac{\pi k}{n})}
    - e^{-2 + 2 \cos(\frac{\pi k}{n+1})}
    \right)}}{\frac{1}{n}}
\end{align}
Evaluating this expression requires applying L'H\^{o}pital's rule. Hence, we have:
\begin{align}
    &\lim_{n \rightarrow \infty} \frac{ {\left(
    e^{-2 + 2 \cos(\frac{\pi k}{n})}
    - e^{-2 + 2 \cos(\frac{\pi k}{n+1})}
    \right)}}{\frac{1}{n}} \\
    &\quad \quad = \lim_{n \rightarrow \infty} \frac{
    2 \pi  k t \left(\frac{\sin \left(\frac{\pi  k}{n}\right) e^{2 t \left(\cos \left(\frac{\pi  k}{n}\right)-1\right)}}{n^2}-\frac{\sin \left(\frac{\pi  k}{n+1}\right) e^{2 t \left(\cos \left(\frac{\pi
    k}{n+1}\right)-1\right)}}{(n+1)^2}\right)
    }{\frac{-1}{n^2}} \\
    &\quad \quad = 2 \pi  k t \lim_{n \rightarrow \infty}  \left(\frac{n^2 \sin \left(\frac{\pi  k}{n+1}\right) e^{2 t \left(\cos \left(\frac{\pi  k}{n+1}\right)-1\right)}}{(n+1)^2}-\sin \left(\frac{\pi  k}{n}\right) e^{2 t \left(\cos
   \left(\frac{\pi  k}{n}\right)-1\right)}\right).
   \end{align}
   Since both of the limits
   \begin{align}
   \lim_{n \rightarrow \infty}  \left(\frac{n^2 \sin \left(\frac{\pi  k}{n+1}\right) e^{2 t \left(\cos \left(\frac{\pi  k}{n+1}\right)-1\right)}}{(n+1)^2} \right) \\
   \intertext{and} \\
   \lim_{n \rightarrow \infty} \left(-\sin \left(\frac{\pi  k}{n}\right) e^{2 t \left(\cos
   \left(\frac{\pi  k}{n}\right)-1\right)}\right)
   \end{align}
   exist (and are 0),
   \begin{align}
   & 2 \pi  k t \lim_{n \rightarrow \infty}  \left(\frac{n^2 \sin \left(\frac{\pi  k}{n+1}\right) e^{2 t \left(\cos \left(\frac{\pi  k}{n+1}\right)-1\right)}}{(n+1)^2}-\sin \left(\frac{\pi  k}{n}\right) e^{2 t \left(\cos
   \left(\frac{\pi  k}{n}\right)-1\right)}\right) =0
   \intertext{and therefore}
    & \lim_{n \rightarrow \infty} n {\left(
    e^{t(-2 + 2 \cos(\frac{\pi k}{n}))}
    - e^{t(-2 + 2 \cos(\frac{\pi k}{n+1}))}
    \right)}^2 = 0
\end{align}

\end{proof}

\begin{theorem}
\label{thm:pa_lim_lem} 
If $\lim_{n \rightarrow \infty} \arg \sup_t D^2\left( \left. \text{\emph{Pa}}_n, \text{\emph{Pa}}_{n+1} \right| t \right) $ exists, then:
\begin{align}
    \lim_{n \rightarrow \infty} D^2 \left( \text{Pa}_n, \text{Pa}_{n+1} \right) = 0.
\end{align}
\end{theorem}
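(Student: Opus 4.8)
The plan is to exhibit one explicit feasible pair $(\alpha, P)$ that already makes the inner objective small, and then to use the hypothesis on the optimal diffusion time to control the outer supremum over $t$. First I would fix $\alpha = 1$ and, working in the spectral basis furnished by Theorem~\ref{thm:LAP_bound}, take $P$ to be the subpermutation matrix that pairs the $k$-th Laplacian eigenvalue of $\text{Pa}_n$ with the $k$-th eigenvalue of $\text{Pa}_{n+1}$ for $k = 0, \ldots, n-1$, simply discarding the one remaining eigenvalue of $\text{Pa}_{n+1}$. Since $D^2$ is an infimum over $\alpha$ and $P$, any feasible choice is an upper bound, so by Theorem~\ref{thm:LAP_bound} the inner-optimized objective at a fixed $t$ obeys
\[
D^2(\text{Pa}_n, \text{Pa}_{n+1} \mid t) \;\le\; S_n(t) \;:=\; \sum_{k=0}^{n-1} \left( e^{t(-2 + 2\cos\frac{k\pi}{n})} - e^{t(-2 + 2\cos\frac{k\pi}{n+1})} \right)^2 ,
\]
in the notation of Lemma~\ref{lem:path_eigen_limit}.

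Next I would bound $S_n(t)$ by a quantity uniform in the summation index. Writing $g(x) := e^{t(-2 + 2\cos \pi x)}$ on $[0,1]$, each summand is $\left( g(k/n) - g(k/(n+1)) \right)^2$, a difference of a smooth function at two nearby arguments. The mean value theorem together with the uniform estimate $|g'(x)| = 2\pi t\,|\sin \pi x|\,g(x) \le 2\pi t$ (valid since $|\sin| \le 1$ and $g(x) \le 1$, because $-2 + 2\cos \pi x \le 0$) yields
\[
\left| g\!\left(\tfrac{k}{n}\right) - g\!\left(\tfrac{k}{n+1}\right) \right| \;\le\; 2\pi t \left| \frac{k}{n} - \frac{k}{n+1} \right| \;=\; \frac{2\pi t \, k}{n(n+1)} .
\]
Squaring, summing over $k = 0, \ldots, n-1$, and using $\sum_k k^2 = O(n^3)$ then gives $S_n(t) \le C\,t^2/n$ for an absolute constant $C$. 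This is precisely the summed, uniform-in-$k$ strengthening of Lemma~\ref{lem:path_eigen_limit}: that lemma shows each individual term is $o(1/n)$, and the derivative bound makes the decay uniform enough that the $n$-term sum still vanishes.

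Finally I would invoke the hypothesis. Setting $t_n^\ast = \arg\sup_t D^2(\text{Pa}_n, \text{Pa}_{n+1} \mid t)$, we have
\[
D^2(\text{Pa}_n, \text{Pa}_{n+1}) \;=\; D^2(\text{Pa}_n, \text{Pa}_{n+1} \mid t_n^\ast) \;\le\; S_n(t_n^\ast) \;\le\; \frac{C\,(t_n^\ast)^2}{n} .
\]
By assumption $t_n^\ast$ converges to a finite limit and is therefore bounded, so the right-hand side tends to $0$, establishing the claim.

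The step I expect to be the main obstacle, and the reason the hypothesis is needed, is twofold. Analytically, the difficulty is uniformity in $k$: Lemma~\ref{lem:path_eigen_limit} only controls a single summand for fixed $k$, whereas the sum runs up to $k = n-1$, so that pointwise statement must be upgraded to a bound holding uniformly across all indices --- which the uniform derivative estimate $|g'| \le 2\pi t$ accomplishes. Structurally, the difficulty is the outer supremum: the bound $C t^2/n$ degrades as $t \to \infty$, so one cannot simply pass to $\sup_t S_n(t)$. The assumed convergence of the optimizing time $t_n^\ast$ confines attention to a bounded range of $t$ and keeps the prefactor $(t_n^\ast)^2$ bounded, which is exactly what closes the argument.
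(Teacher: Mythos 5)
Your proof is correct, and its overall architecture matches the paper's: fix $\alpha = 1$, pick the subpermutation that pairs the $k$-th eigenvalue of $\text{Pa}_n$ with the $k$-th of $\text{Pa}_{n+1}$ (the paper's choice $S$ with an identity block atop a zero row), reduce $D^2(\text{Pa}_n, \text{Pa}_{n+1} \mid t)$ to the eigenvalue sum $S_n(t)$, and use the hypothesis on $t_n^\ast = \arg\sup_t D^2(\text{Pa}_n, \text{Pa}_{n+1} \mid t)$ to control the outer supremum. Where you genuinely diverge is the analytic core, and your version is the stronger one. The paper bounds the $n$-term sum by $n$ times its largest term and then invokes Lemma~\ref{lem:path_eigen_limit}, which is a pointwise statement for \emph{fixed} $k$ and $t$ (proved via L'H\^{o}pital); the paper's subsequent claim that for any $\epsilon > 0$ there exists $N$ such that for all $n \geq N$ and \emph{any} $c, k$ the term $n \left( e^{c(-2+2\cos(\pi k/n))} - e^{c(-2+2\cos(\pi k/(n+1)))} \right)^2$ is below $\epsilon$ is exactly the uniformity-in-$k$ (and in $t$) assertion that the lemma does not supply, since the summation index runs up to $k = n-1$ and so grows with $n$. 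Your mean-value-theorem estimate $|g'(x)| = 2\pi t \, |\sin \pi x| \, g(x) \leq 2\pi t$, yielding $\bigl| g(k/n) - g(k/(n+1)) \bigr| \leq 2\pi t k / (n(n+1))$ and hence $S_n(t) \leq C t^2 / n$, supplies precisely this missing uniformity --- you correctly identified it as the crux, and your estimate in fact repairs a genuine gap in the published argument rather than merely paralleling it. Your route also buys two further things: an explicit $O(t^2/n)$ convergence rate, and a nominally weaker use of the hypothesis, since you only need $(t_n^\ast)$ bounded (which convergence implies), whereas the paper's write-up routes through the limit value $\tilde{t}$ itself. One cosmetic caution: feasibility of your eigenvalue pairing is all the upper bound requires, but note the paper's stated spectrum uses $\cos(k\pi/(n-1))$ for $\text{Pa}_n$ while its proof (and yours) uses $\cos(k\pi/n)$; the discrepancy is the paper's, and your derivative bound is insensitive to it.
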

\begin{proof}
Assume that $\lim_{n \rightarrow \infty} \arg \sup_t D^2\left( \left. \text{Pa}_n, \text{Pa}_{n+1} \right| t \right)  = \tilde{t}$. Then, we must have
\begin{align}
    \lim_{n \rightarrow \infty} D^2 \left( \text{Pa}_n, \text{Pa}_{n+1} \right) &\leq \lim_{n \rightarrow \infty} D^2 \left( \left. \text{Pa}_n, \text{Pa}_{n+1} \right| \tilde{t} \right)
\end{align}
Hence, it remains only to prove that
\begin{align}
    \lim_{n \rightarrow \infty} D^2 \left( \left. \text{Pa}_n, \text{Pa}_{n+1} \right| t \right) = 0
\end{align}
for any finite $t$ (which will then include $\tilde{t}$). 
First, for any particular $( n + 1) \times n$ subpermutation matrix $S$, note that

\begin{align}
    D^2 \left( \left. \text{Pa}_n, \text{Pa}_{n+1} \right| t \right) &= \inf_{\alpha > 0} \inf_{P | \mathcal{C}(P)} D^2 \left( \left. \text{Pa}_n, \text{Pa}_{n+1} \right| t, P, \alpha \right) \\ 
    &\leq D^2 \left( \left. \text{Pa}_n, \text{Pa}_{n+1} \right| t, \alpha=1 , S\right)
\end{align}
If at each $n$ we select $S$ to be the subpermutation $S = \left[ \begin{array}{c}
     I \\
     0 
\end{array}\right]$, then the objective function simplifies to: 
\begin{align}
    & D^2\left(\text{Pa}_n, \text{Pa}_{n+1} \right| t, P = S, \alpha=1) \\
    &\quad = {\left| \left|  S e^{c \Lambda_{\text{Pa}_n}} - e^{c \Lambda_{\text{Pa}_{n+1}}} S \right| \right|}_F^2 \\
    &\quad = \sum_{k=0}^{n-1} {\left(
    e^{c(-2 + 2 \cos(\frac{\pi k}{n}))}
    - e^{c(-2 + 2 \cos(\frac{\pi k}{n+1}))}
    \right)}^2 \\
    &\quad \leq  \max_{0 \leq k \leq n-1} n {\left(
    e^{c(-2 + 2 \cos(\frac{\pi k}{n}))}
    - e^{c(-2 + 2 \cos(\frac{\pi k}{n+1}))}
    \right)}^2
\end{align}
By Lemma \ref{lem:path_eigen_limit}, for any finite $k, t$, we have 
\begin{align}
\lim_{n \rightarrow \infty} n {\left(
    e^{t(-2 + 2 \cos(\frac{\pi k}{n}))}
    - e^{t(-2 + 2 \cos(\frac{\pi k}{n+1}))}
    \right)}^2 = 0
\end{align}
So for any $\epsilon > 0$, $\exists N$ such that when $n \geq N$, for any $c, k$,
\begin{align}
    n {\left(
    e^{c(-2 + 2 \cos(\frac{\pi k}{n}))}
    - e^{c(-2 + 2 \cos(\frac{\pi k}{n+1}))}
    \right)}^2 < \epsilon
\end{align}
But then 
\begin{align}
\sum_{k=0}^{n-1} {\left(
    e^{c(-2 + 2 \cos(\frac{\pi k}{n}))}
    - e^{c(-2 + 2 \cos(\frac{\pi k}{n+1}))}
    \right)}^2 < \epsilon
\end{align}
as required. Thus, the Cauchy condition is satisfied for the lineage of path graphs $\text{Pa}_n$
\end{proof}

\begin{figure}
    \centering
    \includegraphics[width=\linewidth]{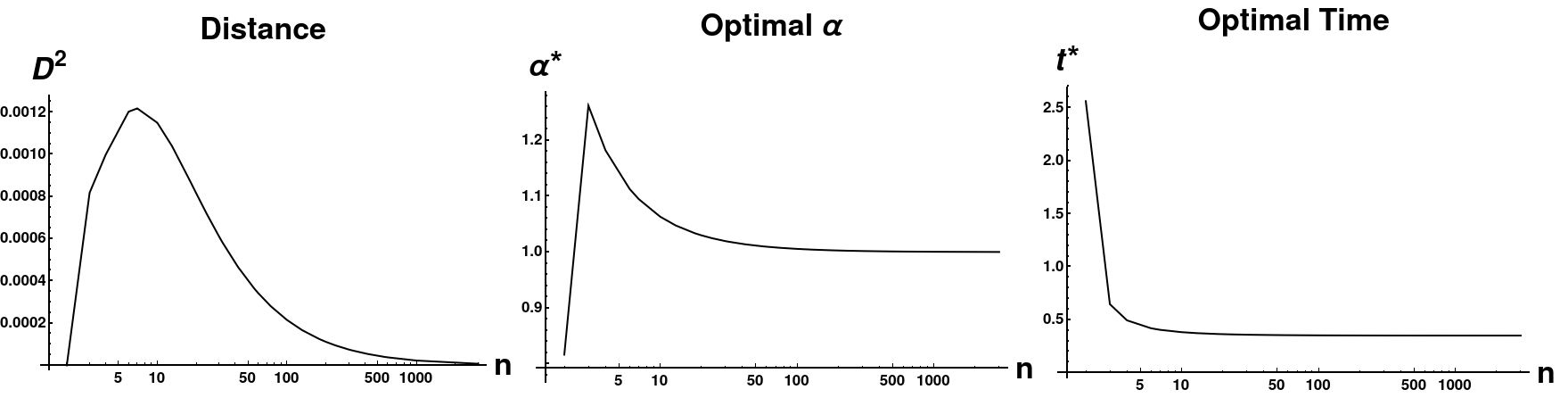}
    \caption{Limiting behavior of $D$ and two parameters as path graph size approaches infinity. All distances were calculated between $\text{Path}_{n}$ and $\text{Path}_{n+1}$. We plot the value of the objective function, as well as the optimal values of $\alpha$ and $t$, as $n \rightarrow \infty$. Optimal $\alpha$ rapidly approach 1 and the optimal distance tends to 0. Additionally, the optimal $t$ value approaches a constant $(t \approx .316345)$, providing experimental validation of the assumption we make in proving Theorem \ref{thm:pa_lim_lem}.}
    \label{fig:path_lim}
\end{figure}

Given a graph lineage which consists of levelwise box products between two lineages, it seems natural to use our upper bound on successive distances between graph box products to prove convergence of the sequence of products. As an example, the lineage consisting of square grids is the levelwise box product of the lineage of path graphs with itself. However, in this we see that this bound may not be very tight. Applying Equation \eqref{eqn:graph_product_special_case} from Theorem \ref{thm:sq_grid_lim_lem}, we have (for any $t_c, \alpha_c$):
\begin{align}
D \left( \left. \text{Sq}_n, \text{Sq}_{n+1} \right. \right) &\leq D \left( \left. \text{Sq}_n, \text{Sq}_{n+1} \right|  t_c, \alpha_c \right)\\ 
&\leq D \left( \text{Pa}_{n+1}, \text{Pa}_{n+1} \left. \right| t_c, \alpha_c \right) \left( {\left| \left| e^{\frac{t_c}{a_c} L(\text{Pa}_n)} \right| \right|}_F \right. \nonumber \\
& \quad \quad \quad \quad + \left. {\left| \left| e^{t_c a_c L(\text{Pa}_{n+1})} \right| \right|}_F \right)  \nonumber 
\end{align}
As we can see in Figure \ref{fig:sq_dist_vs_bound}, the right side of this inequality seems to be tending to a nonzero value as $n \rightarrow \infty$, whereas the actual distance (calculated by our optimization procedure) appears to be tending to 0. 

\begin{figure}
    \centering
    \includegraphics[width=.75\linewidth]{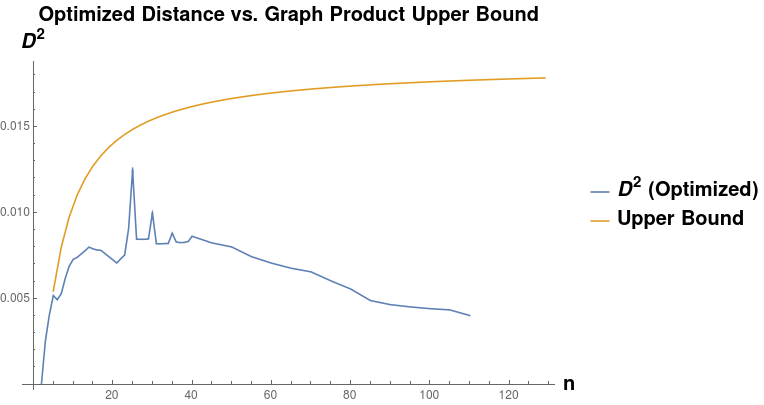}
    \caption{Comparison of the distance $D(\text{Sq}_n, \text{Sq}_{n+1})$ as a function of $n$, to the upper bound calculated as the optimum of distance between $\text{Pa}_n$ and $\text{Pa}_{n+1}$. We see that the upper found converges to some constant $D \approx 0.01782$, whereas the actual distance appears to be converging to 0 as $n \rightarrow \infty$.}
    \label{fig:sq_dist_vs_bound}
\end{figure}

\begin{figure}
    \centering
    \includegraphics[width=\linewidth]{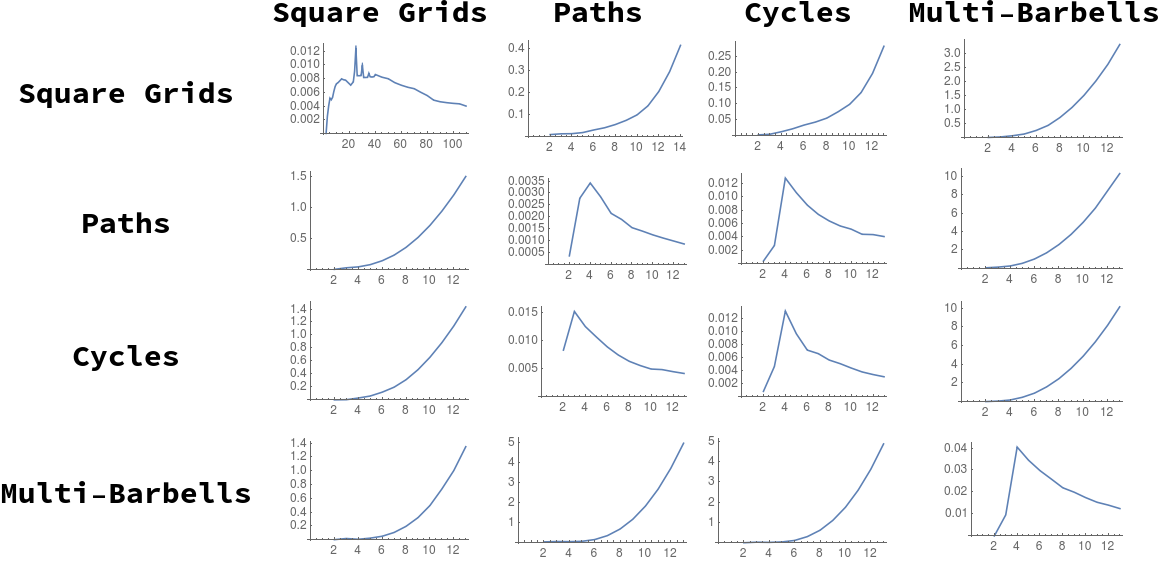}
    \caption{Cauchy-like behavior of graph distance as a function of sequence index, $n$. The distance between successive square grids and all other graph sequences appears to diverge (the same behavior is seen for k-barbells). Notably, the distance between $\text{Grid}_{n \times n}$ and  $\text{Grid}_{(n+1) \times (n+1)}$ does not appear to converge, until much higher values of $n$ $(n > 100)$ than the other convergent series. This may be because the distances calculated are an upper bound, and may be converging more slowly than the `true' optima.} 
    \label{fig:g_limits}
\end{figure}

\section{Regularized Distance}
\label{sec:future}
We can add a regularization term to the graph diffusion distance, as follows: define 
\begin{align}
    D_\text{reg}(G_1, G_2) &= \sup_t \inf_{P|\mathcal{C}(P)} \inf_{\alpha > 0} \left\{ {\left| \left|  P e^{\frac{t}{\alpha} L_1} - e^{t \alpha  L_2 } P \right| \right|}_F + {\left| \left|  e^{\frac{t}{\alpha} L_1} - e^{t L_1 } \right| \right|}_F  \right. \nonumber \\
    &\quad \quad \quad + \left. {\left| \left|  e^{t L_2} P - e^{t \alpha  L_2 } P \right| \right|}_F \right\}
\end{align}
We can show analytically that this distance satisfies the triangle inequality:
\begin{theorem}
$D_\text{reg}$ satisfies the triangle inequality.
\end{theorem}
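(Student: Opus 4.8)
The plan is to show that the regularization terms are engineered precisely so that the inner optimization over $\alpha$ and $P$ collapses to the $\alpha=1$ diffusion distance $D(G_1,G_2 \mid \alpha=1) = \sup_t \inf_{P\mid\mathcal{C}(P)} \| P e^{t L_1} - e^{t L_2} P\|_F$, whose triangle inequality is already established in Corollary \ref{cor:constant_alpha_exp}. Writing $G(t,\alpha,P)$ for the bracketed objective of $D_\text{reg}$, I would prove, for each fixed $t$, the two-sided bound $\inf_{\alpha>0}\inf_{P\mid\mathcal{C}(P)} G(t,\alpha,P) = \inf_{P\mid\mathcal{C}(P)} \| P e^{t L_1} - e^{t L_2} P\|_F$, and then take $\sup_t$ of both sides.

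The key (and only nontrivial) step is the lower bound $G(t,\alpha,P) \ge \| P e^{t L_1} - e^{t L_2} P\|_F$ for every feasible $\alpha,P$. I would obtain it from the telescoping identity $P e^{t L_1} - e^{t L_2} P = P\bigl(e^{t L_1} - e^{\frac{t}{\alpha} L_1}\bigr) + \bigl(P e^{\frac{t}{\alpha}L_1} - e^{t\alpha L_2} P\bigr) + \bigl(e^{t\alpha L_2} - e^{t L_2}\bigr)P$, whose three summands match the three terms of $G$ up to the Frobenius triangle inequality. Applying that inequality and then Lemma \ref{lem:p_lem_1} (which needs $P^T P = I$) to the first summand gives $\|P(e^{tL_1}-e^{\frac{t}{\alpha}L_1})\|_F \le \|e^{\frac{t}{\alpha}L_1}-e^{tL_1}\|_F$, the $L_1$ regularization term, while the third summand equals the $L_2$ regularization term verbatim and the middle summand is the main diffusion term. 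Thus every feasible $(\alpha,P)$ already satisfies $G(t,\alpha,P)\ge \|Pe^{tL_1}-e^{tL_2}P\|_F \ge \inf_{P\mid\mathcal{C}(P)}\|Pe^{tL_1}-e^{tL_2}P\|_F$, which yields the lower bound after the infimum over $\alpha$ and $P$.

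For the matching upper bound I would simply evaluate at $\alpha=1$: both regularization terms vanish and $G(t,1,P)=\|Pe^{tL_1}-e^{tL_2}P\|_F$, so $\inf_{\alpha,P}G(t,\cdot,\cdot)\le \inf_{P\mid\mathcal{C}(P)}\|Pe^{tL_1}-e^{tL_2}P\|_F$. Combining the two bounds gives the equality $D_\text{reg}(G_1,G_2)=D(G_1,G_2\mid\alpha=1)$ for every pair, and the triangle inequality for $D_\text{reg}$ then follows immediately from Corollary \ref{cor:constant_alpha_exp}. If one prefers not to assert the full equality, the same two inequalities chain directly: $D_\text{reg}(G_1,G_3)\le D(G_1,G_3\mid\alpha=1)\le D(G_1,G_2\mid\alpha=1)+D(G_2,G_3\mid\alpha=1)\le D_\text{reg}(G_1,G_2)+D_\text{reg}(G_2,G_3)$, where the outer steps use the upper and lower bounds respectively.

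I expect the telescoping split in the lower bound to be the crux, since it is exactly where the particular algebraic form of the two regularization terms is exploited and where Lemma \ref{lem:p_lem_1} enters. The only remaining subtlety is the passage through $\sup_t$: as in Corollary \ref{cor:constant_alpha_exp}, one uses that a pointwise-in-$t$ bound $c(t)\le a(t)+b(t)$ implies $\sup_t c(t)\le \sup_t a(t)+\sup_t b(t)$, which is routine. No interchange of $\sup_t$ with the inner infima is required, since the reduction to the $\alpha=1$ distance is carried out at each fixed $t$ before the supremum is taken.
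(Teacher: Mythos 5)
Your proof is correct, and it takes a genuinely different route from the paper's. The paper fixes $t$, bounds $D_\text{reg}(G_1,G_3\,|\,t)$ above by its $\alpha=1$ restriction $\inf_{P|\mathcal{C}(P)}\left|\left|Pe^{tL_1}-e^{tL_3}P\right|\right|_F$ (your upper bound), and then attacks the triangle inequality directly: it takes optimizers $(\alpha_{21},P_{21})$ and $(\alpha_{32},P_{32})$ of the two intermediate regularized distances, substitutes the product $P_{32}P_{21}$, and expands a six-term telescoping sum, stripping outer orthogonal factors with Lemma~\ref{lem:p_lem_1}. Your argument performs the same total algebra but factors it: each of your two three-term telescopings is one half of the paper's six-term one, and the remaining cross-term manipulation is exactly what is already packaged in Corollary~\ref{cor:constant_alpha_exp}, which you reuse. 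The refactoring buys two things. First, robustness: the paper's proof takes $\arg\inf$ over $\alpha>0$, a noncompact domain where attainment is not justified, whereas you only ever use arbitrary feasible points and infima. Second, you actually prove something strictly stronger than the stated theorem: your two-sided bound shows $G(t,\alpha,P)\geq G(t,1,P)$ for every feasible $(\alpha,P)$, so the inner infimum over $\alpha$ is always attained at $\alpha=1$ and $D_\text{reg}(G_1,G_2)=D(G_1,G_2\,|\,\alpha=1)$ identically. The paper does not observe this collapse --- indeed, the end of Section~\ref{sec:future} suggests a modified optimization procedure is needed for $D_\text{reg}$, while your equality shows optimizing it reduces to the already-solved $\alpha=1$ problem. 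Your treatment of $\sup_t$ is also sound, since the reduction is carried out pointwise in $t$ before the supremum, exactly as in Corollary~\ref{cor:constant_alpha_exp}.
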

\begin{proof}
For graphs $G_1, G_2, G_3$ and Laplacians $L_1, L_2, L_3$, for any fixed $t \geq 0$, we have:
\begin{align}
    D_\text{reg}(G_1, G_3 | t) &= \inf_{P|\mathcal{C}(P)} \inf_{\alpha > 0} \left\{  {\left| \left|  P e^{\frac{t}{\alpha} L_1} - e^{t \alpha  L_3 } P \right| \right|}_F + {\left| \left|  e^{\frac{t}{\alpha} L_1} - e^{t L_1 } \right| \right|}_F  \right. \nonumber \\*
    &\quad \quad \quad + \left. {\left| \left|  e^{t L_3} P - e^{t \alpha  L_3 } P \right| \right|}_F \right\} \\
    &\leq D_\text{reg}(G_1, G_3 | t, \alpha=1) \\
    &= \inf_{P|\mathcal{C}(P)} \left\{  {\left| \left|  P e^{t L_1} - e^{t L_3 } P \right| \right|}_F + {\left| \left|  e^{t L_1} - e^{t L_1 } \right| \right|}_F  \right. \nonumber \\
    &\quad \quad \quad + \left. {\left| \left|   e^{t L_3} P - e^{t   L_3 } P \right| \right|}_F \right\} \\
    &= \inf_{P|\mathcal{C}(P)} {\left| \left|  P e^{t L_1} - e^{t L_3 } P \right| \right|}_F
\end{align}
Suppose that 
\begin{align}
    \alpha_{32}, P_{32} &= \arg \inf_{a > 0} \inf_{P | \mathcal{C}(P)}  \left\{ {\left| \left|  P e^{\frac{t}{\alpha} L_2} - e^{t \alpha  L_3 } P \right| \right|}_F + {\left| \left|  e^{\frac{t}{\alpha} L_2} - e^{t L_2 } \right| \right|}_F  \right. \nonumber \\
    &\quad \quad \quad + \left. {\left| \left|  e^{t L_3} P - e^{t \alpha  L_3 } P \right| \right|}_F \right\}\\
    \alpha_{21}, P_{21} &= \arg \inf_{a > 0} \inf_{P | \mathcal{C}(P)}  \left\{ {\left| \left|  P e^{\frac{t}{\alpha} L_1} - e^{t \alpha  L_2 } P \right| \right|}_F + {\left| \left|  e^{\frac{t}{\alpha} L_1} - e^{t L_1 } \right| \right|}_F  \right. \nonumber \\
    &\quad \quad \quad + \left. {\left| \left|  e^{t L_2} P - e^{t \alpha  L_2 } P \right| \right|}_F \right\}\\  
\end{align}
Then, 
\begin{align*}
    \inf_{P|\mathcal{C}(P)} {\left| \left|  P e^{t L_1} - e^{t L_3 } P \right| \right|}_F  &\leq
    {\left| \left|  P_{32} P_{21} e^{t L_1} - e^{t L_3 } P_{32} P_{21} \right| \right|}_F  \\
    \inf_{P|\mathcal{C}(P)} {\left| \left|  P e^{t L_1} - e^{t L_3 } P \right| \right|}_F &\leq 
    {\left| \left|  P_{32} P_{21} e^{t L_1} - P_{32} P_{21} e^{\frac{t}{\alpha_{21}} L_1} \right. \right. }  + P_{32} P_{21} e^{\frac{t}{\alpha_{21}} L_1} \\*
    & \quad - P_{32} e^{t \alpha_{21} L_2 } P_{21} + P_{32} e^{t \alpha_{21} L_2 } P_{21} - P_{32} e^{t L_2 } P_{21}  \\*
    & \quad + P_{32} e^{t L_2 } P_{21} - P_{32} e^{\frac{t}{\alpha_{32}} L_2 } P_{21}  + P_{32} e^{\frac{t}{\alpha_{32}} L_2 } P_{21}  \\*
    & \quad - e^{t \alpha_{32} L_3 } P_{32} P_{21} {\left. \left. + e^{t \alpha_{32} L_3 } P_{32} P_{21} - e^{t L_3 } P_{32} P_{21} \right| \right|}_F \\
    &\leq 
    {\left| \left|  P_{32} P_{21} e^{t L_1} - P_{32} P_{21} e^{\frac{t}{\alpha_{21}} L_1}\right| \right|}_F  \\*
    & \quad  + {\left| \left| P_{32} P_{21} e^{\frac{t}{\alpha_{21}} L_1} - P_{32} e^{t \alpha_{21} L_2 } P_{21} \right| \right|}_F \\*
    & \quad  + {\left| \left| P_{32} e^{t \alpha_{21} L_2 } P_{21} -P_{32} e^{t L_2 } P_{21} \right| \right|}_F  \\*
    & \quad + {\left| \left| P_{32} e^{t L_2 } P_{21} - P_{32} e^{\frac{t}{\alpha_{32}} L_2 } P_{21} \right| \right|}_F \\*
    & \quad  + {\left| \left| P_{32} e^{\frac{t}{\alpha_{32}} L_2 } P_{21} - e^{t \alpha_{32} L_3 } P_{32} P_{21} \right| \right|}_F  \\*
    & \quad + {\left| \left| e^{t \alpha_{32} L_3 } P_{32} P_{21} - e^{t L_3 } P_{32} P_{21} \right| \right|}_F \\
    \intertext{by Lemma \ref{lem:p_lem_1},}
    \inf_{P|\mathcal{C}(P)} {\left| \left|  P e^{t L_1} - e^{t L_3 } P \right| \right|}_F &\leq 
    {\left| \left|   e^{t L_1} -  e^{\frac{t}{\alpha_{21}} L_1}\right| \right|}_F  + {\left| \left| P_{21} e^{\frac{t}{\alpha_{21}} L_1} - e^{t \alpha_{21} L_2 } P_{21} \right| \right|}_F \\
    & \quad  + {\left| \left|  e^{t \alpha_{21} L_2 } P_{21} - e^{t L_2 } P_{21} \right| \right|}_F \\
    &+ {\left| \left| e^{t L_2 } -  e^{\frac{t}{\alpha_{32}} L_2 } \right| \right|}_F  + {\left| \left| P_{32} e^{\frac{t}{\alpha_{32}} L_2 }  - e^{t \alpha_{32} L_3 } P_{32} \right| \right|}_F  \\
    & \quad + {\left| \left| e^{t \alpha_{32} L_3 } P_{32} - e^{t L_3 } P_{32} \right| \right|}_F \\
    &= D_\text{reg}(G_1, G_2 | t = c) + D_\text{reg}(G_2, G_3 | t = c)
\end{align*}
Since this is true for any fixed $t$, let 
\begin{align*}
    t^* = \arg \sup_t D_\text{reg}(G_1, G_3 | t)
\end{align*}.
Then 
\begin{align*}
    D_\text{reg}(G_1, G_3) &= \sup_c D_\text{reg}(G_1, G_3 | t) \\
    &= D_\text{reg}(G_1, G_3 | t^*) \\
    & \leq D_\text{reg}(G_1, G_2 | t^*) + D_\text{reg}(G_2, G_3 | t = t^*) \\
    & \leq \sup_{t_{21}} D_\text{reg}(G_1, G_2 | t_{21}) + \sup_{t_{32}} D_\text{reg}(G_2, G_3 | t_{32}) \\
    &= D_\text{reg}(G_1, G_2) + D_\text{reg}(G_2, G_3)
\end{align*}

\end{proof}

We can construct a similar regularized version of the linear objective function:
\begin{align*}
    \tilde{D}_\text{reg}(G_1, G_2) &= {\left| \left| \frac{1}{\alpha} P L_1 - \alpha L_2 P \right| \right|} + {\left| \left| \frac{1}{\alpha} L_1 - L_1 \right| \right|} + {\left| \left| \frac{\null}{\null} P L_2 - \alpha L_2 P \right| \right|}
\end{align*}

The term ``regularized'' here refers to the fact that the additional terms included in $D_\text{reg}$ and $\tilde{D}_\text{reg}$ penalize $\alpha$ distorting the respective Laplacians far from their original values. In practice, many of the theoretical guarantees provided earlier in this manuscript may not apply to optimization of the augmented objective function. Hence, a major area of future work will be modification of our optimization procedure to compute this form of distance.

\section{Applications and Future Work}
\label{sec:futurework}
We briefly discuss possible applications of both our distance metric and our procedure for calculating the relevant minima. 

\subsection{Algebraic Multigrid}
\label{subsec:appl_amg}
The need for prolongation/restriction operators arises naturally in the Algebraic MultiGrid (AMG) context, where a hierarchy of progressively coarser meshes are constructed, with the goal of speeding convergence of a model with local update (``smoothing'') rules. A model with modes of behavior at wavelengths which are much larger than the neighborhood of one update will take many update steps to converge. Thus, the goal in AMG is to iteratively construct a series of coarsened meshes, so that update steps at the coarser scales can address coarser modes of behavior. A fine-scale model state is translated into a coarse-scale state via a ``restriction'' operator. After a coarse-scale smoothing step, the new coarse state is translated back to the fine-scale by ``prolonging'' it. Our procedure for calculating $P$ could be incorporated as a preprocessing step, in the case where the series of meshes are known in advance; otherwise, the $P$ from the previous round of coarsening could be used as the initial conditions to a modified version of our solver. In either case, the matrix $P$ is a natrual choice of prolongation/restriction operator for this type of coarsening scheme, since it optimally transforms the Laplacian of one graph into another. 

\subsection{Graph Limits}
\label{subsec:appl_graph_limits}
In this work we briefly introduce a new definition of graph limits based on the diffusion distance, which raises several natural questions: What does the ``limit'' of a sequence of graphs under diffusion distance look like? Are there pairs of sequences that converge to the same such object, as in the example of path graphs and cycle graphs? Can we separate graph sequences into equivalence classes based on which of these they converge to? We hope to address these questions in future work. 

\subsection{Graph Convolutional Networks}
\label{subsec:appl_gcns}
Graph convolutional networks (GCNs) are a variant of the \emph{convolutional neural networks} (CNNs) widely used in machine vision. In the same way that CNNs learn a set of trained image filters and apply them across multiple spatial locations in an image, GCNs learn a set of filters which are applied to local neighborhoods of a graph. One implementation of GCNs due to Kipf and Welling \cite{kipf2016semi} uses a Chebyshev polynomial of the Laplacian matrix as an approximation of the graph fourier transform, demonstrating comparable results to the full transform but far fewer multiplication operation needed. However, construction of pooling operators for GCNs is still an area of open research. Since our $P$ is a restriction operator that preserves information about the Laplacian, it is natural to use it as a pooling operator in this type of model. 

\subsection{Graph Clustering}
\label{subsec:appl_graph_clustering}
We can also use the diffusion distance and its variants to compare graphs or neighborhoods of graphs for structural similarity, independent of graph size. This is similar to the approach of \cite{gold1995new} for comparing point clouds in 2D and 3D, in the sense that both approaches optimize an objective function based to a matching between elements of the two graphs. This type of similarity measure may then be used to convert a dataset of graphs to a distance-to-cluster-centers representation, or for any other of the typical methods used in machine learning for converting sets of pairwise distances into fixed-length feature vectors ($k$-medoids, kernel methods, multidimensional scaling, etc.). In this setting, our distance measure has an additional benefit: since computing it yields an explicit projection operator between the nodes of the graphs, we may use the set of $P$ we compute to project signals (e.g. labels on the vertices of each graph in the dataset) to a common space. 

\section{Conclusion}
In this work, we present a novel generalization of graph diffusion distance which allows for comparison of graphs of inequal size. We consider several variants of this distance measure to account for sparse maps between the two graphs, and for maps between the two graphs which are optimal given a fixed time-dilation factor $\alpha$. We prove several important theory properties of distances in this family of measures, including triangle inequalities in some cases and Cauchy-like behavior of some graph sequences. We present a new procedure for optimizing the objective function defined by our distance measure, prove the correctness of this procedure, and demonstrate its efficiency in comparison to univariate search over the dilation parameter, $\alpha$. Numerical experiments suggest that this dissimilarity score satisfies the triangle inequality up to some constant $\rho \approx 2.1$. We demonstrate that this measure of graph distance may be used to compare graph lineages (families of exponentially-growing graphs with shared structure), and additionally that certain lineages display Cauchy-sequence like behavior as the graph size approaches infinity. We suggest several possible applications of our distance measure to scientific problems in the contexts of  pattern matching and machine learning.

\section{Acknowledgements}
\ifanon
Funding sources and other acknowledgements redacted.
\else
This work was supported by U.S. National Science Foundation NRT Award number 1633631, 
U.S. National Institute of Aging grant AG059602,
Human Frontiers Science Program grant HFSP - RGP0023/ 2018,
U.S. National Institutes for Health grant R01HD073179,
USAF/DARPA FA8750-14-C-0011,
and by the Leverhulme Trust and
and the hospitality of the Sainsbury Laboratory at Cambridge University, as well as the hospitality of the Center for Non-Linear Studies (CNLS) at Los Alamos National Laboratory.
\fi

\bibliographystyle{spmpsci}
\bibliography{references}

\begin{thebibliography}{10}
\providecommand{\url}[1]{{#1}}
\providecommand{\urlprefix}{URL }
\expandafter\ifx\csname urlstyle\endcsname\relax
  \providecommand{\doi}[1]{DOI~\discretionary{}{}{}#1}\else
  \providecommand{\doi}{DOI~\discretionary{}{}{}\begingroup
  \urlstyle{rm}\Url}\fi

\bibitem{allgower2012numerical}
Allgower, E.L., Georg, K.: Numerical continuation methods: an introduction,
  vol.~13.
\newblock Springer Science \& Business Media (2012)

\bibitem{bentley1975multidimensional}
Bentley, J.L.: Multidimensional binary search trees used for associative
  searching.
\newblock Communications of the ACM \textbf{18}(9), 509--517 (1975)

\bibitem{bijsterbosch2010solving}
{B}ijsterbosch, J., {V}olgenant, A.: {S}olving the {R}ectangular {A}ssignment
  {P}roblem and {A}pplications.
\newblock {A}nnals of {O}perations {R}esearch \textbf{181}(1), 443--462 (2010)

\bibitem{brouwer2011spectra}
Brouwer, A.E., Haemers, W.H.: Spectra of {G}raphs.
\newblock Springer Science \& Business Media (2011)

\bibitem{brouwer2009cospectral}
Brouwer, A.E., Spence, E.: {C}ospectral {G}raphs on 12 {V}ertices.
\newblock {T}he {E}lectronic {J}ournal of {C}ombinatorics \textbf{16}(1), N20
  (2009)

\bibitem{burkard1999linear}
Burkard, R.E., Cela, E.: {L}inear {A}ssignment {P}roblems and {E}xtensions.
\newblock In: {H}andbook of {C}ombinatorial {O}ptimization, pp. 75--149.
  Springer (1999)

\bibitem{burkard2009assignment}
Burkard, R.E., Dell'Amico, M., Martello, S.: Assignment {P}roblems.
\newblock Springer (2009)

\bibitem{croes1958method}
Croes, G.A.: A method for solving traveling-salesman problems.
\newblock Operations research \textbf{6}(6), 791--812 (1958)

\bibitem{dodonova2016kernel}
Dodonova, Y., Belyaev, M., Tkachev, A., Petrov, D., Zhukov, L.: Kernel
  {C}lassification of {C}onnectomes {B}ased on {E}arth {M}over's {D}istance
  between {G}raph {S}pectra.
\newblock arXiv preprint arXiv:1611.08812  (2016)

\bibitem{donnat2018tracking}
Donnat, C., Holmes, S.: Tracking {N}etwork {D}ynamics: A {S}urvey of
  {D}istances and {S}imilarity {M}etrics.
\newblock arXiv preprint arXiv:1801.07351  (2018)

\bibitem{eshera1984graph}
Eshera, M., Fu, K.S.: A {G}raph {D}istance {M}easure for {I}mage {A}nalysis.
\newblock {IEEE} {T}ransactions on {S}ystems, {M}an, and {C}ybernetics
  \textbf{{SMC}-14}(3), 398--408 (1984)

\bibitem{eshera1986image}
Eshera, M., Fu, K.S.: An {I}mage {U}nderstanding {S}ystem using {A}ttributed
  {S}ymbolic {R}epresentation and {I}nexact {G}raph-{M}atching.
\newblock IEEE Transactions on Pattern Analysis and Machine Intelligence
  \textbf{PAMI-8}(5), 604--618 (1986)

\bibitem{gao2010survey}
Gao, X., Xiao, B., Tao, D., Li, X.: A {S}urvey of {G}raph {E}dit {D}istance.
\newblock Pattern Analysis and Applications \textbf{13}(1), 113--129 (2010)

\bibitem{godsil1982constructing}
Godsil, C.D., McKay, B.: Constructing {C}ospectral {G}raphs.
\newblock Aequationes Mathematicae \textbf{25}(1), 257--268 (1982)

\bibitem{gold1995new}
Gold, S., Lu, C.P., Rangarajan, A., Pappu, S., Mjolsness, E.: New {A}lgorithms
  for 2d and 3d {P}oint {M}atching: {P}ose {E}stimation and {C}orrespondence.
\newblock In: Advances in neural information processing systems, pp. 957--964
  (1995)

\bibitem{gold1995learning}
Gold, S., Rangarajan, A., Mjolsness, E.: Learning with preknowledge:
  {C}lustering with {P}oint and {G}raph {M}atching {D}istance {M}easures.
\newblock In: Advances in Neural Information Processing Systems, pp. 713--720
  (1995)

\bibitem{gu2015spectral}
Gu, J., Hua, B., Liu, S.: Spectral {D}istances on {G}raphs.
\newblock Discrete Applied Mathematics \textbf{190}, 56--74 (2015)

\bibitem{hagberg2008exploring}
Hagberg, A., Swart, P., S~Chult, D.: Exploring {N}etwork {S}tructure,
  {D}ynamics, and {F}unction using {N}etwork{X}.
\newblock Tech. rep., Los Alamos National Lab.(LANL), Los Alamos, NM (United
  States) (2008)

\bibitem{hammond2013graph}
Hammond, D.K., Gur, Y., Johnson, C.R.: Graph {D}iffusion {D}istance: A
  {D}ifference {M}easure for {W}eighted {G}raphs based on the {G}raph
  {L}aplacian {E}xponential {K}ernel.
\newblock In: 2013 IEEE Global Conference on Signal and Information Processing,
  pp. 419--422. IEEE (2013)

\bibitem{heindl2018lapsolver}
Heindl, C.: {lapsolver}: {F}ast {L}inear {A}ssignment {P}roblem {(LAP)}
  {S}olvers for {P}ython {B}ased on {c}-{e}xtensions.
\newblock \url{https://github.com/cheind/py-lapsolver} (2018)

\bibitem{hogben2006handbook}
{H}ogben, L.: {H}andbook of {L}inear {A}lgebra.
\newblock {CRC} {P}ress (2006)

\bibitem{wolfram2018mathematica}
Inc., W.R.: Mathematica, {V}ersion 11.3.
\newblock Champaign, IL, 2018

\bibitem{scipy}
{E}ric {J}ones, {T}ravis {O}liphant, {P}earu {P}eterson, et~al.: {SciPy}:
  {O}pen {S}ource {S}cientific {T}ools for {Python} (2001--).
\newblock \urlprefix\url{http://www.scipy.org/}.
\newblock [Online; accessed June 20 2019]

\bibitem{kipf2016semi}
Kipf, T.N., Welling, M.: Semi-supervised classification with graph
  convolutional networks.
\newblock arXiv preprint arXiv:1609.02907  (2016)

\bibitem{lovasz2012large}
Lov{\'a}sz, L.: Large {N}etworks and {G}raph {L}imits, vol.~60.
\newblock American Mathematical Soc. (2012)

\bibitem{mcgregor2013sketching}
McGregor, A., Stubbs, D.: Sketching {E}arth-{M}over {D}istance on {G}raph
  {M}etrics.
\newblock In: Approximation, Randomization, and Combinatorial Optimization.
  Algorithms and Techniques, pp. 274--286. Springer (2013)

\bibitem{onoyama2000method}
Onoyama, T., Kubota, S., Oyanagi, K., Tsuruta, S.: A {M}ethod for {S}olving
  {N}ested {C}ombinatorial {O}ptimization {P}roblems-a {C}ase of {O}ptimizing a
  {L}arge-{S}cale {D}istribution {N}etwork.
\newblock In: SMC 2000 {C}onference {P}roceedings. 2000 {IEEE} {I}nternational
  {C}onference on {S}ystems, {M}an and {C}ybernetics. '{C}ybernetics {E}volving
  to {S}ystems, {H}umans, {O}rganizations, and their {C}omplex {I}nteractions'
  (cat. no. 0, vol.~1, pp. 340--345. IEEE (2000)

\bibitem{rangarajan1996novel}
Rangarajan, A., Gold, S., Mjolsness, E.: A {N}ovel {O}ptimizing {N}etwork
  {A}rchitecture with {A}pplications.
\newblock Neural Computation \textbf{8}(5), 1041--1060 (1996)

\bibitem{rossum1995python}
{R}ossum, G.: Python {R}eference {M}anual.
\newblock Technical {R}eport  (1995)

\bibitem{sanfeliu1983distance}
Sanfeliu, A., Fu, K.S.: A {D}istance {M}easure between {A}ttributed
  {R}elational {G}raphs for {P}attern {R}ecognition.
\newblock {IEEE} {T}ransactions on {S}ystems, {M}an, and {C}ybernetics
  \textbf{SMC-13}(3), 353--362 (1983)

\bibitem{scott2018multilevel}
Scott, C., Mjolsness, E.: Multilevel {A}rtificial {N}eural {N}etwork {T}raining
  for {S}patially {C}orrelated {L}earning (in press).
\newblock SIAM SISC Copper Mountain Special Issue  (2019)

\bibitem{sinha2018review}
Sinha, A., Malo, P., Deb, K.: A {R}eview on {B}ilevel {O}ptimization: from
  {C}lassical to {E}volutionary {A}pproaches and {A}pplications.
\newblock IEEE Transactions on Evolutionary Computation \textbf{22}(2),
  276--295 (2018)

\bibitem{townsend2016pymanopt}
Townsend, J., Koep, N., Weichwald, S.: {P}ymanopt: A {P}ython {T}oolbox for
  {O}ptimization on {M}anifolds using {A}utomatic {D}ifferentiation.
\newblock The {J}ournal of {M}achine {L}earning {R}esearch \textbf{17}(1),
  4755--4759 (2016)

\bibitem{van2003graphs}
Van~Dam, E.R., Haemers, W.H.: Which {G}raphs are {D}etermined by their
  {S}pectrum?
\newblock Linear Algebra and its applications \textbf{373}, 241--272 (2003)

\bibitem{van2009developments}
Van~Dam, E.R., Haemers, W.H.: Developments on {S}pectral {C}haracterizations of
  {G}raphs.
\newblock Discrete Mathematics \textbf{309}(3), 576--586 (2009)

\bibitem{van2011numpy}
Van Der~Walt, S., Colbert, S.C., Varoquaux, G.: The {NumPy} {A}rray: a
  {S}tructure for {E}fficient {N}umerical {C}omputation.
\newblock Computing in Science \& Engineering \textbf{13}(2), 22 (2011)

\bibitem{wen2013feasible}
Wen, Z., Yin, W.: A {F}easible {M}ethod for {O}ptimization with {O}rthogonality
  {C}onstraints.
\newblock Mathematical Programming \textbf{142}(1-2), 397--434 (2013)

\end{thebibliography}

\section{Appendices}
\subsection{Spectral Lower Bound}
\label{appndx:slb}
In Theorem \ref{thm:LAP_bound} we derived an upper bound on the graph distance $\tilde{D}(G_1, G_2)$, by constraining the variable $P$ to be not only orthogonal, but also $ P = U_2 M U_1^T$ where M is the solution (i.e. ``matching'', in the terminology of that section) to a Linear Assignment problem with costs given by a function of the eigenvalues of $L(G_1)$ and $L(G_2)$. In this appendix we derive a similar lower bound on the distance. 

As in Theorem \ref{thm:LAP_bound}, let $G_1$ and 
$G_2$ be undirected graphs with Laplacians $L_1 = L(G_1)$ and $L_2 = L(G_2)$, and let $\alpha > 0$ be constant. By Equation \eqref{eqn:simple_d_tilde}, we have 

\begin{equation}
    \begin{aligned}
        \tilde{D}^2(G_1,G_2) &= \inf_{\alpha > 0} \inf_{P^T P = I} \left(
        \sum\limits_{i=1}^{n_2}\sum\limits_{j=1}^{n_1}
        p_{ij}^2 {\left( 
        \frac{1}{\alpha} {\lambda^{(1)}_j} - 
        \alpha {\lambda^{(2)}_i} \right)}^2
        \right).
    \end{aligned}
\end{equation}
Previously, we derived an upper bound on $\tilde{D}$ by constraining $P$ to be not only orthogonal, but related to a constrained matching problem between the two lists of eigenvalues: 

    \begin{equation}
        \label{eqn:upper_bound_2_defn}
        \begin{array}{ll@{}ll}
        \tilde{D}^2(G_1,G_2) \leq \inf_{\alpha>0} \inf_{M} & {\left| \left| \frac{1}{\alpha} M \Lambda_1 - \alpha \Lambda_2 M \right| \right|}^2_F &\\
        \text{subject to}& \displaystyle\sum\limits_{i=1}^{n_2}  m_{ij} \leq 1,  &j=1 \ldots n_1\\
               & \displaystyle\sum\limits_{j=1}^{n_1}  m_{ij} \leq 1,  &i=1 \ldots n_2\\
         & m_{ij} \geq 0 & i = 1 \ldots n_2, j = 1 \ldots n_1, \\
        \end{array}
    \end{equation}
where $\Lambda_1$ and $\Lambda_2$ are diagonal matrices of the eigenvalues of $L_1$ and $L_2$ respectively. In that proof, we used the explicit map $\tilde{P} = U_2^T P U_1$ as a change of basis; we then converted the constraints on $P$ into equivalent constraints on $\tilde{P}$, and imposed additional constraints so that the resulting optimization (a linear assignment problem) was an upper bound. We show in this section that a more loosely constrained assignment problem is a lower bound on $\tilde{D}^2$. We do this by computing the same mapping  $\tilde{P} = U_2^T P U_1$  and then dropping some of the constraints on $\tilde{P}$ (which is equivalent to dropping constraints on $P$, yielding a lower bound). The constraint $P^T P = I$ is the conjunction of $n_1 ^ 2$ constraints on the column vectors of $P$: if $\mathbf{p}_i$ is the $i$th column of $P$, then $P^T P = I$ is equivalent to:
\begin{align}
    \mathbf{p}_i \cdot \mathbf{p}_i = 1 & \qquad & \forall i=1 \ldots n_1 \label{eqn:p_const_l1}\\
    \mathbf{p}_i \cdot \mathbf{p}_i = 0 & \qquad & \forall i = 1 \ldots n_1, j = 1 \ldots i-1, i+1 \ldots n_1, \label{eqn:p_const_l2} 
\end{align}
If we discard the constraints in Equation \eqref{eqn:p_const_l2}, we are left with the constraint that every column of $p$ must have unit norm. 

Construct the ``spectral lower bound matching'' matrix $P^{(\text{SLB})}$ as follows: 

\begin{equation}
  P^{(\text{SLB})}_{i,j} =
  \begin{cases}
    1 & \text{if i = } \arg \min_k {\left( \frac{1}{\alpha}  \lambda^{(1)}_j - \alpha \lambda^{(k)}_k \right)}^2 \\
    0 & \text{otherwise.}
  \end{cases}
\end{equation}
For any $\alpha$, this matrix is the solution to a matching problem (less constrained than the original optimization over all $P$) where each $\lambda^{(1)}_j$ is assigned to the closest $\lambda^{(2)}_i$, allowing collisions. It clearly satisfies the constraints in Equation \eqref{eqn:p_const_l1}, but may violate those in Equation \eqref{eqn:p_const_l2}. Thus, we have 

\begin{equation}
    \begin{aligned}
        \tilde{D}^2(G_1,G_2) &= \inf_{\alpha > 0} \inf_{P^T P = I} \left(
        \sum\limits_{i=1}^{n_2}\sum\limits_{j=1}^{n_1}
        p_{ij}^2 {\left( 
        \frac{1}{\alpha} {\lambda^{(1)}_j} - 
        \alpha {\lambda^{(2)}_i} \right)}^2
        \right). \\
        &\geq \tilde{D}^2\left(G_1,G_2 \left| P^{(\text{SLB})} \right.\right)
    \end{aligned}
\end{equation}

Various algorithms exist to rapidly find the member of a set of points which is closest to some reference point (for example, KD-Trees \cite{bentley1975multidimensional}). For any $\alpha$, the spectral lower bound can be calculated by an outer loop over alpha and an inner loop which applies one of these methods. We do not consider joint optimization of the lower bound over $P$ and $\alpha$ in this work. 

\end{document}